\documentclass[twoside]{article}

%
\usepackage[accepted]{aistats2020}
%



\usepackage{enumerate}
\usepackage[utf8]{inputenc} 
\usepackage[T1]{fontenc}    
\usepackage{hyperref}       
\usepackage{url}            
\usepackage{booktabs}       
\usepackage{amsfonts}       
\usepackage{nicefrac}       
\usepackage{microtype}      
\usepackage{bbm}
\usepackage{mathtools}
\usepackage{amsthm}
\usepackage[usenames, dvipsnames]{color}
\usepackage{bm}
\usepackage{url}
\usepackage{caption}
\usepackage{subcaption}
\usepackage{xcolor}
\usepackage[T1]{fontenc}
\usepackage{lmodern}
\usepackage{float}
\usepackage{natbib}
\usepackage{dblfloatfix}

\makeatletter
\newcommand{\pushright}[1]{\ifmeasuring@#1\else\omit\hfill$\displaystyle\left(#1\right)$\fi\ignorespaces}
\newcommand{\pushleft}[1]{\ifmeasuring@#1\else\omit$\displaystyle#1$\hfill\fi\ignorespaces}

\newtheorem{corollary}{Corollary}[section]

\allowdisplaybreaks


\newtheorem{theorem}{Theorem}
\newtheorem{lemma}{Lemma}

\def \c {c} 
\def \cc {\mathbf{c}} 	
\def \L {\lambda}		
\def \b {b}			
\def \B {\boldsymbol{b}}			


\def \ttheta {\boldsymbol{\theta}}	



\newcommand{\given}{\,|\,}

\def \U {U}

\newcommand{\W}{\boldsymbol{w}}
\newcommand{\x}{\mathbf{x}}

\newcommand{\y}{\mathbf{y}}


\def \E {\mathbb{E}}


\usepackage[nolist]{acronym}
\begin{acronym}
	\acro{PoRB-Net}{POisson process Radial Basis neural NETwork}
\end{acronym}
\bibliographystyle{apalike}

\begin{document}

%
\runningtitle{Poisson Process Radial Basis Function Networks}

%

\twocolumn[

\aistatstitle{Towards Expressive Priors for Bayesian Neural Networks:\\ Poisson Process Radial Basis Function Networks}

\aistatsauthor{ Beau Coker \And Melanie F. Pradier \And  Finale Doshi-Velez }

\aistatsaddress{ Biostatistics, Harvard University \And  SEAS, Harvard University \And SEAS, Harvard University }
]

\begin{abstract}
	While Bayesian neural networks have many appealing characteristics, current priors do not easily allow users to specify basic properties such as expected lengthscale or amplitude variance.  In this work, we introduce Poisson Process Radial Basis Function Networks, a novel prior that is able to encode amplitude stationarity and input-dependent lengthscale. We prove that our novel formulation allows for a decoupled specification of these properties, and that the estimated regression function is consistent as the number of observations tends to infinity. We  demonstrate its behavior on synthetic and real examples.
\end{abstract}

\vspace{-0.5cm}
\section{Introduction}

Neural networks (NNs) are flexible universal function approximators that have been applied with success in many domains~\citep{lecun_deep_2015}.  When data are limited, Bayesian neural networks (BNNs) capture function space uncertainty in a principled way~\citep{hinton_bayesian_1995} by placing priors over network parameters. 
Unfortunately, priors in parameter space often lead to unexpected behavior in function space; standard BNN priors do not even allow us to encode basic properties such as stationarity, lengthscale, or
signal variance~\citep{lee_bayesian_2004}.

This is in contrast to Gaussian processes (GPs), which can easily encode these properties via the covariance function. Still there are many situations in which we may prefer to use BNNs rather than GPs: BNNs may be computationally more scalable, especially at test time, and having an explicit parametric expression for posterior samples is convenient when additional computation is needed on the function, such as Thompson sampling~\citep{thompson_likelihood_1933} or predictive entropy search~\citep{hernandez-lobato_predictive_2014}.

Therefore, a natural question arises: can we design BNN priors that encode stationarity properties like a GP while retaining the benefits of BNNs?  Recent works have started toward the path of creating BNNs with certain functional properties. Some approaches use sample-based methods to evaluate the mis-match between the distribution over functions and a reference distribution with desired properties~\citep{flam-shepherd_mapping_2017, sun_functional_2019}.  Another approach explores different BNN architectures to recover equivalent GP kernel combinations in the infinite width limit~\citep{pearce_expressive_2019}. In contrast, we directly incorporate functional properties via an alternative parametrization and well-designed prior over the network weights, without sample-based optimizations nor infinite width-limit assumptions. 

In this work, we introduce Poisson-process Radial Basis function Networks (PoRB-NETs), a Bayesian formulation which places a Poisson process prior~\citep{kingman_poisson_1992} over the center parameters in a single-layer Radial Basis Function Network
(RBFN)~\citep{lippmann_pattern_1989}.
The proposed formulation enables \textit{direct} specification of  stationary amplitude variance and (non)-stationary lengthscale.  When the input-dependence of the lengthscale is unknown, we derive how it can be inferred.
An important technical contribution is that PoRB-NETs ensure that amplitude variance and lengthscale are \emph{decoupled}, that is, each can be specified independently of the other (which does not occur in a naive application of a Poisson process to determine neural network centers).  As with GPs, and unlike networks that force a specific property~\citep{anil_sorting_2018}, these prior properties will also adjust given data.

Specifically, we make the following contributions: (i) we introduce a novel, intuitive prior formulation for RBFNs that encodes a priori distributional knowledge in function space, decoupling notions of lengthscale and signal variance in the same way as a GP; (ii) we prove important theoretical properties such as consistency or stationarity; (iii) we provide an inference algorithm to learn input-dependent lengthscale for a decoupled prior; and (iv) we empirically demonstrate the potential of our approach on synthetic and real examples.


\section{Related Work}\label{sec:related}

\vspace{-0.1cm}

\paragraph{Early weight-space priors for BNNs.} Most classic Bayesian formulations of NNs use priors for regularization and model selection while minimizing the amount of undesired functional prior~\citep{lee_bayesian_2004,  muller_issues_1998} (because of the lack of interpretability in the parameters).
\cite{mackay_bayesian_1992} proposes a hierarchical prior\footnote{Hierarchical priors are convenient when there is a lack of interpretability in the parameters. As the addition of upper levels to the prior reduces the influence	of the particular choice made at the top level, the resulting prior at the bottom level (the  original parameters) will be more diffuse~\citep{lee_bayesian_2004}} combined with empirical Bayes.
\cite{lee_noninformative_2003} proposes an improper prior for neural networks, which avoids the injection of artifact prior biases at the cost of higher sensitivity to overfitting.
\cite{robinson_priors_2001} proposes priors to alleviate overparametrization of NN models.
Instead, we focus on obtaining certain functional properties through prior specification. 


\vspace{-0.1cm}
\paragraph{Function-space priors for BNNs.}
%
Works such as \citep{flam-shepherd_mapping_2017, sun_functional_2019} match BNN priors to specific GP priors or functional priors via sampled-based approximations which rely on sampling function values at a collection of input points $x$.  These approaches do not provide guarantees outside of the sampled region, and even in the sampled region, their enforcement of properties will be approximate.  Neural processes~\citep{garnelo_neural_2018} use meta-learning to identify functional properties that may be present in new functions; they rely on having many prior examples and do not allow the user to specify basic properties directly.   
%
%
In contrast, our approach encodes functional properties directly through prior design, without relying on function samples.

\vspace{-0.1cm}
\paragraph{Bayesian formulations of RBFN models.}
Closest to our work are Bayesian formulations of RBFNs.  \cite{barber_radial_1998} and \cite{yang_bayesian_2005} consider Bayesian RBFNs for a fixed number of hidden units; the former propose Gaussian approximations to the posterior distribution which, for fixed basis function widths, is analytic in the parameters. In contrast, our work infers the number of hidden units from data.
\cite{holmes_bayesian_1998} and \cite{andrieu_robust_2001} propose full Bayesian formulations based on Poisson processes (and a reversible jump Markov chain Monte Carlo inference scheme).  Their focus is on learning the number of hidden nodes, but they neither prove functional or inferential properties of these models nor allow for the easy incorporation of functional properties.  

\section{Background}\label{sec:rbfn}

\vspace{-0.1cm}
\paragraph{Bayesian neural networks (BNNs).}
We consider regressors of the form $y =f_{\W,\B}(x) + \epsilon$, where
$\epsilon$ is a noise variable and $\W$ and $\B$ refers to the weights and biases of a
neural network, respectively. In the
Bayesian setting, we assume some prior over the weights and biases $\W,\B \sim
p(\W,\B)$. One common choice is to posit i.i.d. normal priors over each
network parameter $\W \sim \mathcal{N} \left(0,\sigma^2_w \mathbf{I} \right)$ and $\B \sim \mathcal{N} \left(0,\sigma^2_b \mathbf{I} \right)$: we will refer to such model as standard BNN~\citep{neal_priors_1996}.

\vspace{-0.1cm}
\paragraph{Radial basis function networks (RBFNs).}
Radial Basis Function Networks (RBFNs) are classical shallow neural networks that approximate arbitrary nonlinear functions through a linear combination of radial kernels~\citep{powell_algorithms_1987, gyorfi_distribution-free_2002}.  They are universal function approximators~\citep{park_universal_1991} and are widely used across a wide variety of disciplines including numerical analysis, biology, finance, and classification in spatio-temporal models~\citep{dash_radial_2016}.
For input $x\in \mathbb{R}^D$, the output of a single-hidden-layer RBFN of width $K$ is given by:
\vspace{-0.2cm}
\begin{equation}
f(x \given \ttheta)  =  \b + \sum^K_{k=1} w_k \exp( - s_k^2 (x-c_k)^T (x-c_k)), \label{eq:RBFN}
\end{equation}
where $s_k^2 \in \mathbb{R}$ and $c_k \in \mathbb{R}^D$ are the scale and center parameters, respectively, $w_k \in \mathbb{R}$ are the hidden-to-output weights, and $\b \in \mathbb{R}$ is the bias parameter. Each $k$-th hidden unit can be interpreted as a local receptor centered at $c_k$, with radius of influence $s_k$, and relative importance $w_k$~\cite{gyorfi_distribution-free_2002}.

\paragraph{Poisson process.}
A Poisson process~\citep{kingman_poisson_1992} on $\mathbb{R}^D$ is defined by a positive real-valued intensity function $\lambda(c)$. For any set $\mathcal{C}\subset \mathbb{R}^D$, the number of points in $\mathcal{C}$ follows a Poisson distribution with parameter $\int_\mathcal{C} \lambda(c) dc$. The process is called \textit{inhomogenous} if $\lambda(c)$ is not a constant function. We use a Poisson process as a prior on the center parameters of a radial basis function network. 

\paragraph{Gaussian Cox process.} A Bayesian model consisting of a Poisson process likelihood and a Gaussian process prior $g(c)$ on the intensity function $\lambda(c)$ is called a Gaussian Cox Process~\citep{moller_log_1998}. \cite{adams_tractable_2009} present an extension to this model, called the \textit{Sigmoidal Gaussian Cox Process}, that passes the Gaussian process through a scaled sigmoid function.  We will use this process to learn input-dependent wiggliness (lengthscale) of the function by adapting the number of hidden units in the RBFN network.  

\section{Poisson-process radial basis function networks (PoRB-NET)}\label{sec:PoRB-NET}

In this section, we introduce Poisson-process Radial Basis function Networks (PoRB-NETs). This model achieves two essential desiderata for a NN prior.  First, it enables the user to encode the fundamental basic properties of smoothness (i.e., lengthscale), amplitude (i.e., signal variance), and (non)-stationarity.  Second, PoRB-NETs adapt the complexity of the network based on the inputs.  For example, if the data suggests that the function needs to be less smooth in a certain input region, then that data will override the prior. Importantly, PoRB-NET fulfills these desiderata while retaining appealing computational properties of NN-based models such as fast computation at test time and explicit, parametric samples.

\paragraph{Generative model.}
As in a standard BNN, we assume a Gaussian likelihood centered on the network output and independent Gaussian priors on the weight and bias parameters.
Unique to the novel PoRB-NET formulation is a Poisson process prior over the set of center parameters and a deterministic dependence of the scale parameters on the Poisson process intensity:
\begin{eqnarray}
\cc \given \L & \sim & \exp \left( - \int_\mathcal{C} \L(\cc)d\cc \right) \prod^K_{k=1} \L(\c_k) \\
s_k^2 \given \L, \cc & = & s_0^2 \L^2(c_k) \\
w_k & \sim & \mathcal{N} \left(0,\sqrt{s_0^2/\pi}\sigma^2_w I \right) \\
\b & \sim & \mathcal{N}(0,\tilde{\sigma}^2_b ) \\
y_n \given x_n,\ttheta & \sim & \mathcal{N}(f(x_n;\ttheta),\sigma^2_x),\label{eq:generative}
\end{eqnarray}
where $f(x_n;\ttheta)$ is given by Eq.~\eqref{eq:RBFN}, $\ttheta$ denotes the set of RBFN parameters including the centers, weights and bias; $\L: \mathcal{C}\to\mathbb{R}^+$ is the (possibly) inhomogeneous Poisson process intensity; and $s_0^2$ is a hyperparameter that defines the scale of the RBF basis function when the intensity is one.
%

Different priors could be considered for the intensity function $\L$. The simplest case is to assume a constant intensity defined over a bounded region $\mathcal{C}$, i.e., with an abuse of notation $\L(c) = \L$ and $\L^2 \sim \mathrm{Gamma}(\alpha_\lambda,\beta_\lambda)$.
Under this specific formulation, we prove in Section ~\ref{sec:properties} that the posterior regression function is consistent as the number of observations tends to infinity and we prove in Section~\ref{sec:covariance} that the amplitude variance is stationary as the size of the region $\mathcal{C}$ tends to infinity; such amplitude variance only depends on the variance of the hidden-to-output weights and output bias $\mathbb{V}[f(x)] \approx \sigma^2_b + \sigma^2_w$. We further show that the intensity $\L$ controls the lengthscale.

\paragraph{Hierarchical prior for unknown lengthscale.}
In the case when the input-dependence of the lengthscale is unknown, we further model the intensity function $\L(c)$ of the Poisson process by a sigmoidal Gaussian Cox process~\cite{adams_tractable_2009}:
\begin{eqnarray}
h & \sim & \mathrm{GP}(0, C(\cdot,\cdot)) \\
\lambda^* & \sim & \text{Gamma}(\alpha_\lambda, \beta_\lambda) \\
\L(c) & = & \lambda^* \text{sigmoid}(h(c)),
\end{eqnarray}
where $\lambda^*$ is an upper bound on the intensity.  In Section~\ref{sec:discussion}, we discuss alternative link functions and why the PoRB-NET formulation is a natural way to satisfy our desiderata.

\paragraph{Contrast to BNNs with i.i.d. Gaussian weight-space priors.}
In Sections~\ref{sec:properties} and~\ref{sec:covariance}, we prove that the proposed formulation has the desired properties described above.
However, before doing so, we briefly emphasize that the i.i.d. Gaussian weight-space prior that is commonly used with BNNs does not enjoy these properties.
To see why, let us consider a standard feed-forward NN layer with a $D=1$ dimensional input and an RBF activation function.  We can rewrite the hidden units as $\sigma(w_k x + b_k) = \sigma(w_k (x - ( -b_k/ w_k)))$. This means that the corresponding center of the $k$-th hidden unit is $c_k = -b_k/ w_k$. If $b_k$ and $w_k$ are assigned independent Gaussian priors with zero mean, as is standard in a BNN, then the center parameter has a zero-mean Cauchy distribution.%
\footnote{If $b_k$ and $w_k$ have non-zero means, the ratio distribution for the corresponding center parameter $c_k$ has undefined mean and non-closed form median~\citep{cedilnik_distribution_2004}.}
This is a critical observation that motivates our work: \textit{A standard BNN concentrates the center of hidden units near the origin, resulting in nonstationary priors in function space.}

\begin{figure}[h]
	\includegraphics[width=0.5\textwidth]{./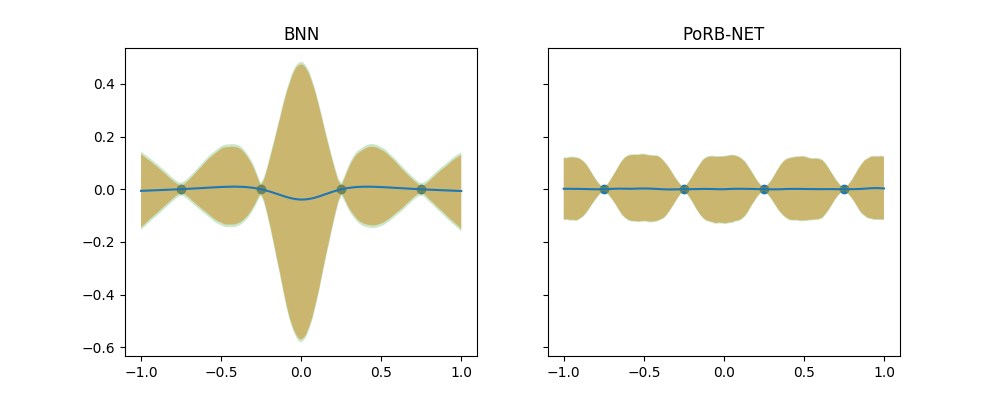}
	\caption{\textbf{PoRB-NET can capture amplitude stationarity.} Standard BNN priors suffer from amplitude-nonstationarity while PoRB-NETs fix this issue. We show the posterior predictive for both models on a simple 1-dimensional problem.}\label{fig:stationarity}
\end{figure}

\vspace{-0.2cm}
\section{Consistency of PoRB-NET predictions}
\vspace{-0.2cm}
\label{sec:properties}
In this section, we study \emph{consistency of predictions}.  That is, as the number of observations goes to infinity, the posterior predictive should concentrate around the correct function.  When dealing with priors that can produce an unbounded number of parameters, consistency is a basic but important property.  To our knowledge, we are the first to provide consistency proofs for Poisson process networks (no consistency guarantees were derived by~\citet{andrieu_robust_2001}).  

Formally, let $g_0(x)$ be the true regression function, and $\hat{g}_n(x)$ the estimated regression function $\hat{g}_n(x) = \mathbb{E}_{\hat{f}_n}[Y\mid X]$, where $\hat{f}_n$ is the estimated density in parameter-space based on $n$ observations.
The estimator $\hat{g}_n(x)$ is said to be consistent with respect to the true regression function $g_0(x)$ if, as $n$ tends to infinity:
\begin{equation}
\int (\hat{g}_n(x) - g_0(x))^2 ~dx \xrightarrow{p} 0.\label{eq:freqconsistency}
\end{equation}

Doob's theorem shows that Bayesian models are consistent as long as the prior places positive mass on the true parameter~\cite{doob_application_1949, miller_detailed_2018}. For finite dimensional parameter spaces, one can ensure consistency by simply restricting the set of zero prior probability to have small or zero measure.
%
Unfortunately, in infinite dimensional parameter spaces, this set might be very large~\cite{freedman_asymptotic_1963, wainwright_high-dimensional_2019}.
In our case where functions correspond to uncountably infinite sets of parameters, it is impossible to restrict this set of inconsistency to have measure zero.


Instead, in the following we aim to show a strong form of consistency called Hellinger consistency. We follow the approach of~\cite{lee_consistency_2000}, who shows consistency for regular BNNs with normal priors on the parameters. 
Formally, let $(x_1,y_1),\dotsc,(x_n,y_n) \sim f_0$ be the observed data drawn from the ground truth density $f_0$, and let us define the Hellinger distance between joint densities $f$ and $f_0$ over $(X,Y)$ as:
$$
D_H(f,f_0) = \sqrt{\int \int \left( \sqrt{f(x,y)} - \sqrt{f_0(x,y)}\right)^2  ~dx~dy}.
$$
The posterior is said to be consistent over Hellinger neighborhoods if for all $\epsilon>0$,
$$
p(\{f: D_H(f,f_0) \le \epsilon \})\xrightarrow{p} 1. 
$$
\cite{lee_consistency_2000} shows that Hellinger consistency of joint density functions implies frequentist consistency as described in Eq.~\eqref{eq:freqconsistency}.

\begin{theorem}\label{th:consistency} (Consistency of PoRB-NETs) 
	A radial basis function network with a homogeneous Poisson process prior on the location of hidden units is Hellinger consistent as the number of observations goes to infinity.
\end{theorem}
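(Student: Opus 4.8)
The plan is to verify the two classical sufficient conditions for posterior consistency over Hellinger neighbourhoods, in the form used by \citet{lee_consistency_2000} for ordinary BNNs (an adaptation of Schwartz's theorem together with Barron's sieve argument): \textbf{(i)} the prior charges every Kullback--Leibler neighbourhood of the true joint density $f_0$; and \textbf{(ii)} there is a sequence of sieves $\mathcal{F}_n$ in density space with $\Pi(\mathcal{F}_n^c)$ exponentially small in $n$ and with Hellinger metric entropy $\log N(\epsilon,\mathcal{F}_n,D_H)=o(n)$ for every fixed $\epsilon>0$. As is standard for results of this type, I would work on a fixed compact covariate domain with a known sampling density $P_X$, so that $L^2$ approximation of the regression function follows from uniform approximation and the KL divergence between the two Gaussian joint densities equals $\tfrac{1}{2\sigma_x^2}\int(f(x)-g_0(x))^2\,dP_X(x)$.

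For the KL-support condition, fix $\delta>0$. Since single-width RBFNs are universal approximators \citep{park_universal_1991}, I can pick a finite network $f^\star$ with $K^\star$ units, a single width $s^\star$, centres $c_1^\star,\dots,c_{K^\star}^\star$ interior to $\mathcal{C}$, weights $w_k^\star$ and bias $b^\star$ that approximates $g_0$ uniformly well enough that $D_{KL}(f_0,f)<\delta$ on a parameter-space neighbourhood of $f^\star$. It then remains to check that the PoRB-NET prior charges that neighbourhood: a homogeneous Poisson process on $\mathcal{C}$ puts positive probability on having exactly $K^\star$ points, one in each of arbitrarily small disjoint balls around the $c_k^\star$; the Gamma prior on $\lambda^2$ has full support, so the shared scale $s_0^2\lambda^2$ can be pushed arbitrarily close to $(s^\star)^2$ simultaneously for all $k$ (the equal-width structure forced by a homogeneous intensity is exactly what the universal approximation theorem supplies); and the Gaussian priors on the $w_k$ and on $b$ have full support. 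Combining these positive probabilities gives $\Pi(D_{KL}(f_0,f)<\delta)>0$.

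For the sieve, I would take $\mathcal{F}_n=\{f:K\le K_n,\ \max_k|w_k|\le M_n,\ |b|\le M_n,\ \lambda_n\le\lambda\le\Lambda_n\}$ with $K_n,M_n,\Lambda_n\to\infty$ and $\lambda_n\downarrow 0$ chosen below. Conditionally on $\lambda$, $K$ is Poisson with mean $\lambda|\mathcal{C}|$, so a Chernoff bound gives $P(K>K_n\mid\lambda\le\Lambda_n)\le e^{-cK_n}$ once $K_n$ exceeds a constant multiple of $\Lambda_n|\mathcal{C}|$; the Gamma prior contributes exponentially small mass to $\{\lambda>\Lambda_n\}$ and mass of order $\lambda_n^{2\alpha_\lambda}$ to $\{\lambda<\lambda_n\}$; and, given $K\le K_n$, a union bound over Gaussian tails gives $P(\max_k|w_k|>M_n\text{ or }|b|>M_n)\le 2(K_n+1)e^{-cM_n^2}$. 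Choosing $M_n$ of order $\sqrt n$, $\Lambda_n$ of order $\log n$, $\lambda_n=n^{-c}$, and $K_n$ a suitable multiple of $n/\log n$ makes $\Pi(\mathcal{F}_n^c)\le e^{-cn}$ for large $n$. For the entropy, elements of $\mathcal{F}_n$ are parametrised by at most $DK_n+K_n+2$ reals in bounded ranges; on the compact input domain the map from these parameters to $f(\cdot)$ is Lipschitz with constant polynomial in $M_n,\Lambda_n,1/\lambda_n$, and $f\mapsto\mathcal{N}(f(\cdot),\sigma_x^2)$ is Lipschitz from sup-norm to Hellinger distance, so $\log N(\epsilon,\mathcal{F}_n,D_H)=O\!\big(DK_n\log(M_n\Lambda_n/(\epsilon\lambda_n))\big)$; shrinking the constant in $K_n$ (equivalently $K_n=\delta_n n/\log n$ with $\delta_n\downarrow 0$ slowly) drives this below $n\epsilon^2$ eventually, for every fixed $\epsilon$. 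Conditions (i)--(ii) then give Hellinger consistency of the posterior via \citet{lee_consistency_2000}, and hence the frequentist consistency~\eqref{eq:freqconsistency}.

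The hard part is the last two steps taken together: the Poisson law of the number of hidden units forces $K_n\gtrsim n/\log n$ in order for the complement mass to be exponentially small, and this is precisely the regime in which the metric entropy threatens to be $\Omega(n)$, so the growth rates of $K_n,M_n,\Lambda_n,\lambda_n$ must be balanced carefully — this is also why a fixed-architecture consistency result cannot simply be invoked. A secondary subtlety, absent in ordinary BNNs, is the coupling between the intensity $\lambda$ and the shared width $s_0^2\lambda^2$: the sieve must bound $\lambda$ both away from $0$ and away from $\infty$, since $\lambda$ simultaneously governs the Poisson tail of $K$ and the Lipschitz constant of the RBF map, and this is exactly what the two-sided Gamma tail estimates deliver.
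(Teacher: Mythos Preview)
Your high-level strategy --- KL support plus a sieve/entropy argument in the style of \citet{lee_consistency_2000} --- is the same as the paper's, and your KL-support paragraph is essentially correct and matches the paper's Condition~(ii) verification (positive mass on a parameter neighbourhood of a finite approximating RBFN, using \citet{park_universal_1991} together with full support of the Poisson, Gamma, and Gaussian priors).

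There is, however, a concrete bug in your sieve balancing. With $\Lambda_n$ of order $\log n$, the Gamma tail on $\lambda^2$ gives $P(\lambda>\Lambda_n)=P(\lambda^2>\Lambda_n^2)\asymp e^{-\beta_\lambda\Lambda_n^2}=e^{-\beta_\lambda(\log n)^2}$, which is sub-exponential in $n$; your claim that ``the Gamma prior contributes exponentially small mass to $\{\lambda>\Lambda_n\}$'' therefore fails for this choice. To get an $e^{-cn}$ tail you need $\Lambda_n\gtrsim\sqrt{n}$, which pushes the Poisson mean up and forces $K_n$ to be of order $n/\log n$ just for the Poisson tail --- at which point the covering-number bound $O(K_n\log(M_n\Lambda_n/(\epsilon\lambda_n)))$ is $\Theta(n)$, not $o(n)$, and the ``shrink $\delta_n$ slowly'' device no longer rescues it (shrinking $K_n$ below the Poisson threshold loses the exponential complement bound). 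The tension you flag in your last paragraph is real, and your one-sieve construction does not resolve it.

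The paper sidesteps this by a different decomposition. Rather than a single sieve on $(K,\lambda,\{w_k\},\{c_k\},b)$, it uses Lee's theorem for priors on the network width: write $\pi=\sum_i\lambda_i\pi_i$ with $\lambda_i=P(K=i)$ and verify (a) for each \emph{fixed} width $i$, the conditional prior $\pi_i$ satisfies the parameter-tail and KL-support conditions, with sieves on parameter magnitudes $|\theta|\le C_n\le\exp(n^{b-a})$ and entropy controlled only up to polynomial widths $K_n\le n^a$, $0<a<b<1$; and (b) a separate tail condition on the width distribution $\sum_{i\ge B_n}\lambda_i$. This decoupling is precisely what avoids your balancing problem: the bracketing entropy is bounded at $K_n\le n^a\ll n$, while the super-exponential Poisson tail handles large widths on its own and never enters the covering-number calculation. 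The paper also carries out the RBFN-specific work you gloss over --- explicit gradient bounds for $\sqrt{f}$ with respect to each RBFN parameter and a quantitative ``proximity in parameter space implies proximity in function space'' lemma --- which are needed to transfer Lee's logistic-activation bracketing-entropy bound to radial basis units.
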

\begin{proof} 
	Leveraging the results from \cite{lee_consistency_2000}, we use bracketing entropy from empirical process theory to bound the posterior probability outside Hellinger heighborhoods. We need to check that our model satisfies two key conditions. Informally, the first condition is that the prior probability placed on parameters larger in absolute value than a bound $B_n$, where $B_n$ is allowed to grow with the data, is asymptotically bounded \textit{above} by an exponential term $\exp(-nr)$, for some $r>0$. The second condition is that the prior probability placed on KL neighborhoods of the ground truth density function $f_0$ is asymptotically bounded \textit{below} by an exponential term $\exp(-n\nu)$, for some $\nu>0$. The full proof can be found in the Appendix.
\end{proof}
Note that consistency of predictions does not imply concentration of the posterior in weight space~\citep{izmailov_subspace_2019}; this distinction happens because radial basis function networks, like other deep neural models, are not identifiable~\citep{watanabe_almost_2007}.

\section{Amplitude, Lengthscale, and Stationarity}
\label{sec:covariance}
We now return to the core desiderata: we wish to be able to specify priors about the function's lengthscale and amplitude variance in a decoupled fashion. We want the same ease in specification as with an RBF kernel for a GP, where one can specify a lengthcale and a scaling constant on the covariance independently. We also want to specify whether these properties are stationary. 

To do so, we first derive the covariance of the proposed PoRB-NET model.  We consider the covariance function between two inputs $x_1$ and $x_2$, which we use to illustrate the specific form of non-stationarity exhibited by our model. The full derivations supporting this section are available in the Appendix.

\cite{neal_priors_1996} showed that the covariance function for a single-layer BNN with a \textit{fixed} number of hidden units $\rho(x; \theta_1),\dotsc, \rho(x; \theta_K)$ and independent $\mathcal{N}(0,\sigma^2_w)$ and $\mathcal{N}(0,\sigma^2_b)$ priors on the hidden-to-output weights and output bias takes the following general form:
\begin{equation}
\text{Cov}(f(x_1), f(x_2)) = \sigma^2_b + \sigma^2_w K \E_\theta\left[\rho(x_1; \theta) \rho(x_2; \theta) \right].\nonumber
\end{equation}
In the same spirit, we derive that the covariance function for a BNN (including RBFN) with a \textit{distribution} over the number of hidden units takes an analogous form, replacing the number of hidden units $K$ with the expected number of hidden units $\mathbb{E}[K]$:
\begin{equation*}
\resizebox{0.93\hsize}{!}{$%
	\text{Cov}(f(x_1), f(x_2)) = \sigma^2_b + \sigma^2_w \E\left[K \right] \underbrace{\E_\theta \left[\rho(x_1; \theta) \rho(x_2; \theta) \right]}_{:=\U(x_1,x_2 )}.
	$%
}%
\end{equation*}
%
In the PoRB-NET model, $\theta=\{s,c\}$,\footnote{We drop subscript $k$ for notation simplicity.} $\rho(x; \theta)=\phi(s(x-c))$ where $\phi(x) = \exp(-x^2)$, and $\E\left[K \right]=\int_\mathcal{C} \lambda(c) ~dc$. The key term is the function $U(x_1,x_2)$, which depends on the functional form of the hidden units and the prior over its parameters $\theta$.  In the Appendix, we derive the general expression of $\U(x_1,x_2)$ for a non-homogeneous Poisson process, which has a closed-form expression in the case of a homogeneous Poisson process.  Below, we describe how, in the case of a homogeneous Poisson process prior, this form corresponds to an asymptotically stationary covariance when the bounded region $\mathcal{C}$ increases to infinity.
Finally, we prove that the form of the generative model proposed in Section~\ref{sec:PoRB-NET} separates notions of amplitude and lengthscale for any arbitrary choice of intensity function.

\paragraph{A homogeneous PP yields stationarity.}
In the case of constant intensity $\lambda(c)=\lambda$ defined over $\mathcal{C} = [C_0,C_1]$, the expression of $U(x_1,x_2)$ can be derived in closed form:
\begin{align}
& U (x_1,x_2) = \frac{\lambda}{\Lambda} \sqrt{\frac{\pi}{s^2}}
\exp\left\{-s^2 \left(\frac{x_1-x_2}{2} \right)^2\right\} \nonumber \\ &\left[\Phi((C_1-x_m) \sqrt{2s^2}) - \Phi((C_0-x_m) \sqrt{2s^2}\lambda)\right], \label{eq:cov_actual}
\end{align}
where $\Phi$ is the cumulative distribution function of a standard Gaussian and $x_m = (x_1+x_2)/2$ is the midpoint of the inputs.
As the bounded region $\mathcal{C}$  increases, the second term approaches one, and so the covariance of a PoRB-NET approaches a squared exponential kernel with inverse length-scale $s^2$ and amplitude variance $\sigma^2_w \lambda \sqrt{\pi/s^2}$:
\begin{align}
\text{Cov} &\left(f(x_1), f(x_2) \right) \approx 
\nonumber \\
& \sigma^2_b +  \sigma^2_w \lambda \sqrt{\frac{\pi}{s^2}}
\exp\left\{-s^2 \left(\frac{x_1-x_2}{2} \right)^2\right\}, \label{eq:cov}
\end{align}
which is stationary since it only depends on the squared difference between $x_1$ and $x_2$.
Notice that this result does not rely on an infinite width limit of the network, but only on the Poisson process region $[C_0,C_1]$ being relatively large compared to the midpoint $x_m$. In practice, $[C_0,C_1]$ can be set larger than the range of observed $x$ values to achieve covariance stationarity over the input domain.  Figure~\ref{fig:covariogram} shows that over the region $[-5,5]$ the analytical covariance from Equation~\eqref{eq:cov_actual} is fairly constant with only slight drops near the boundaries. 
In contrast, the covariance function of a radial basis function network \emph{without a Poisson process prior} is not even approximately stationary. For a Gaussian prior on the centers $c_k\sim \mathcal{N}(0, \sigma^2_c)$ and a fixed scale $s^2 = 1/(2\sigma^2_s)$, \cite{williams_computing_1997} shows that:
\begin{equation}
\resizebox{0.98\hsize}{!}{$%
	\begin{split}
	\U(x_1,x_2) \propto
	\underbrace{\exp\left(-\frac{(x_1-x_2)^2}{2(2\sigma^2_s + \sigma^4_s/\sigma^2_c)} \right)}_{Stationary}
	\underbrace{\exp\left(-\frac{x_1^2+x_2^2}{2(2\sigma^2_c + \sigma^2_s)} \right)}_{Nonstationary}, \nonumber
	\end{split}
	$%
}%
\end{equation}
and Figure~\ref{fig:covariogram} shows that, unlike our approach, a standard prior is highly non-stationary.  

\begin{figure}[H]
	\centering
	\includegraphics[width=0.53\textwidth]{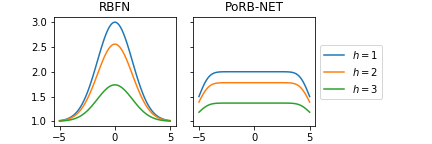}
	\caption{\textbf{PoRB-NET is able to express signal variance stationarity whereas RBFN cannot.} The plotted lines correspond to $\text{Cov}(x-h/2,x+h/2)$\label{fig:covariogram} for different gap values $h$. We set the weight variance to be 1, $s^2=1$, and the homogeneous PP to be defined over $\mathcal{C} = [-5,5]$ with 10 expected hidden units.}
\end{figure}


\paragraph{Decoupling of amplitude and lengthscale.}

For the Poisson process intensity $\lambda$ to solely play the role of lengthscale, the amplitude variance should not depend on $\lambda$. But for a fixed RBF scale parameter $s^2$, a higher intensity implies a higher number of basis functions, which implies a higher amplitude variance as the basis functions add up. Mathematically, the impact of the intensity can be seen in Equation (\ref{eq:cov_actual}). If we instead allow the scale parameters $s^2$ to increase as a function of the intensity, thus making the RBF basis functions more narrow, we can possibly counteract the increase in the number of basis functions. From the approximate covariance in Equation~\eqref{eq:cov}, we can see that setting $s^2=\lambda^2$ will achieve this goal. In our generative model, we set $s^2 = s^2_0 \lambda^2$ by introducing the hyperparameter $s^2_0$, which is useful in practice for adjusting the scale parameters independently of the network width.

By setting the RBF scale parameters as a function of the intensity, the approximate covariance in Equation~\ref{eq:cov} becomes:
\begin{align}
\text{Cov}\left(f(x_1), f(x_2) \right) &\approx 
\sigma^2_b + \tilde{\sigma}^2_w 
\exp\left\{-s^2_0 \lambda^2 \left(\frac{x_1-x_2}{2} \right)^2\right\}, \label{eq:cov2}
\end{align}
where we set $\tilde{\sigma}^2_w = \sqrt{\pi/s^2_0} \; \sigma^2_w$ for notational convenience. Therefore, not only does the intensity no longer impacts the amplitude variance $\tilde{\sigma}^2_w$ but it clearly plays the role of an inverse lengthscale in the squared exponential kernel.

\paragraph{A non-homogeneous PP yields non-stationarity.}

When the intensity is a non-constant function $\lambda(c)$, then the derivation yielding Equation~\eqref{eq:cov_actual} does not hold (see corresponding expression in the Appendix). However, we find that setting the scale parameter of each hidden unit based on the intensity evaluated at its center parameter yields approximate variance stationarity empirically, while allowing the intensity to control an input dependent lengthscale. That is, we set the scale of hidden unit $k$ to $s^2_k = s^2_0 \lambda(c_k)^2$.


\vspace{-0.1cm}
\section{Inference}\label{sec:inference}
\vspace{-0.2cm}

Our objective is to infer the posterior distribution over functions $p(f \given \y,\x)$, which is equivalent to inferring the posterior distribution over the weights  $p(\ttheta \given \y,\x)$.
Given the posterior distribution, we model predictions for new observations and their associated uncertainties  through the posterior predictive distribution:
\begin{eqnarray}
p(\y^{\star}|\x^{\star},\mathcal{D}) = \int p(\y^{\star}|\x^{\star},\W) p(\W | \mathcal{D}) d\W.
\end{eqnarray}
The PoRB-NET prior defines a function over an unbounded region.  In practice, because the radial basis function has a finite region of effect, we only need to perform posterior inference in regions near the data; in regions far from the data, we may simply sample centers from the prior. The size of the region to consider will depend on the widths of the radial basis functions.

Once this region is defined, we perform inference with a Markov-Chain Monte Carlo (MCMC) algorithm. Each iteration can be broken down into three steps. 
Step~1 updates all network parameters $\ttheta$, i.e. $\left(\{w_k, c_k\}_{k=1}^K, \b\right)$, conditional on the network width $K$ and intensity function $\L$. We update all network parameters at once with Hamiltonian Monte-Carlo (HMC) \citep{neal_priors_1996}. 
Step~2 updates the network width $K$ conditional on the network parameters and intensity function via birth and death Metropolis-Hastings steps.
Finally in Step~3, we compute a point-estimate of the Poisson process intensity by running an HMC subroutine and averaging multiple samples of the posterior distribution. 

\paragraph{Step 1: Update network parameters $\ttheta$.}
The full conditional distribution of the network parameters $\ttheta$ given the width of the network $K$ and intensity function $\L$ can be written as:
\vspace{-0.2cm}
\begin{align}
p(\ttheta \given \y, \x, K, \L) \propto 
& \left( \prod_{n=1}^N \mathcal{N}(y_n; f(x_n; \ttheta) \right) \nonumber
\mathcal{N}(b; 0, \sigma^2_{b}) \\
& \left( \prod_{k=1}^K \mathcal{N}(w_k; 0, \sigma^2_{w}) \; \L(c_k) \right).
\end{align}
We resort to HMC, which requires tuning $L$ leap-frog steps of size $\epsilon$, to propose updates from this full conditional distribution.
Notice that we do not have a parametric expression for the intensity $\L$, i.e., we cannot directly evaluate $\L(c_k)$; instead, we approximate this quantity by $\hat{\L}(c_k) = \mathbb{E}_{p(h|\y,\x,\theta)}\left[\lambda^{\star} \phi(h(c_k)) \right]$ in Step~3.

\paragraph{Step 2: Update network width $K$.} We adapt the network width via Metropolis-Hastings (MH) steps. We randomly sample a deletion or an insertion of a hidden unit (in practice, we do this multiple times per iteration) with equal probability. 
%

For a birth step, we sample the weight and scale parameters from their prior distributions and the center parameter from a Gaussian process conditioned on all of the center parameters, including auxiliary ``thinned'' center parameters introduced in Step 3.
In the case of a fixed intensity function, we propose the center parameter from $\lambda(c) / \int \lambda(c)~dc$.  For the death step, we propose to delete a hidden unit at random by uniformly selecting among the existing hidden units. 

Therefore, we can write the hidden unit deletion and insertion proposal densities as follows:
\begin{align*}
\begin{split}
q_{del}(K \to K-1) = &\frac{1}{2}\frac{1}{K} \\
q_{ins}(K \to K+1) = &\frac{1}{2} \frac{1}{\mu(\mathcal{T})} 
\mathcal{N}(w'; 0, \sigma^2_w)
\mathcal{N}(s'; 1, \sigma^2_s) \cdot \\
& \mathcal{G}\mathcal{P}(h'\mid \bm{h}_{M+K}, c', \{c_k\}, \{\tilde{c}_m\})
\end{split}
\end{align*}
and acceptance probabilities are:
\begin{align*}
a_{del} &= \frac{\prod_{n=1}^N \mathcal{N}(y_n; f(x_n; \{\ttheta_{i-1}\})}
{\prod_{n=1}^N \mathcal{N}(y_n; f(x_n; \{\ttheta_i\}) \lambda^* \sigma(h(c'))}
\frac{K}{\mu(T)}
\\
a_{ins} &= \frac{\prod_{n=1}^N \mathcal{N}(y_n; f(x_n; \{\ttheta_{i+1}\}) \lambda^*\sigma(h(c'))}
{\prod_{n=1}^N \mathcal{N}(y_n; f(x_n; \{\ttheta_{i}\})}
\frac{\mu(T)}{K+1}
\end{align*}
where $\{\ttheta_{K-1} \}$ indicates the parameters for a network with $K-1$ hidden units and $\{\ttheta_{K+1} \}$ indicates the parameters for a network with $K+1$ hidden units. 

\begin{figure*}
	\begin{center}
		\includegraphics[width=1.0\textwidth]{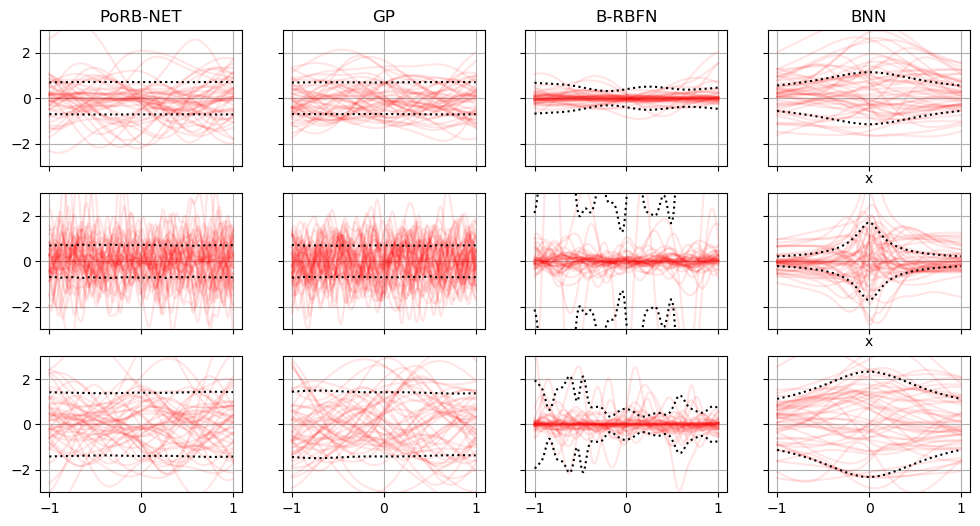}
		\caption{\textbf{PoRB-NET allows for easy specification of lengthscale and signal variance like a GP.} We show prior samples from PoRB-NET with homogeneous intensity, GP prior~\citep{williams_gaussian_2006} B-RBFN~\citep{andrieu_robust_2001}, and standard BNN~\citep{neal_priors_1996}. First row: reference prior; second row: lower lengthscale, same amplitude variance if possible; third row: higher amplitude variance, same lengthscale if possible.}\label{fig:prior_samples}
		\vspace{-0.3cm}
	\end{center}
\end{figure*}

\paragraph{Step 3: Update Poisson process intensity $\L$.}
As stated earlier, the intensity $\L$ has no parametric expression, and cannot be directly evaluated point-wise. However, under the Sigmoid Gaussian Cox Process prior, the intensity $\L(c)=\lambda^\star\phi(h(c))$ only depends on the random variable $h$. Thus, we can build a Monte Carlo point-estimate $\hat{\L}$ by averaging multiple samples from the full conditional posterior of $h$. That is,
\begin{equation}
\hat{\L}(c) \approx \frac{1}{S} \sum  \lambda^\star\phi(h^{(s)}(c)),
\end{equation}
where $h^{(s)} \sim p(h|\y,\x,\ttheta)$.\footnote{Note that $\lambda^\star$ is just a hyper-parameter indicating an upper bound for the intensity.}
We adopt an inference procedure similar to~\citep{adams_gaussian_2008} based on HMC, with the two crucial differences that the centers are unobserved (not fixed) in our case, and that the posterior of $h$ given the $centers$ is \emph{not} independent of the observations.
We add $M$ ``thinned`` auxiliary variables to make computation tractable. We then proceed as follows: i) sample the number of thinned events using death and birth steps, and sample the location of such events using perturbative proposals; ii) sample from the GP posterior $p(h|\y,\x,\c,\ttheta)$.



\section{Results}\label{sec:results}
\vspace{-0.2cm}

In this Section, we empirically demonstrate desirable properties of PoRB-NET. In particular, PoRB-NET allows for (a) easy specification of lengthscale and amplitude variance information (analogous to a GP), and (b) learning of input-dependent lengthscale.  We present synthetic examples and results on three real-case scenarios.
In the Appendix, we report additional empirical results that demonstrate the ability of PoRB-NET to adapt the network architecture based on the data, and to control the uncertainty when extrapolating.  

\vspace{-0.2cm}
\subsection{Synthetic examples}
\vspace{-0.1cm}

\paragraph{PoRB-NET allows for easy specification of stationary lengthscale and signal variance.}
Figure~\ref{fig:prior_samples} shows function samples from different prior models; we decrease the lengthscale from left to right. We plot 50 samples in red, and compute the average variance across the input region averaging out 10,000 function samples. Like a GP, the amplitude variance of PoRB-NET (shown as dotted lines) is constant over the input space and does not depend on the lengthscale. On the other hand, the amplitude variance of B-RBFN \citep{andrieu_robust_2001}, which effectively assumes a homogeneous Poisson process prior on the center parameters, varies over the input space and \textit{does} depend on the lengthscale. The last column shows that for a standard BNN the amplitude variance and lengthscale are concentrated near the origin (within each panel) and that the variance increases as we decrease the lengthscale (from 1st to 2nd row).  



\begin{figure}
	\vspace{-0.3cm}
	\begin{center}
		\includegraphics[width=0.5\textwidth]{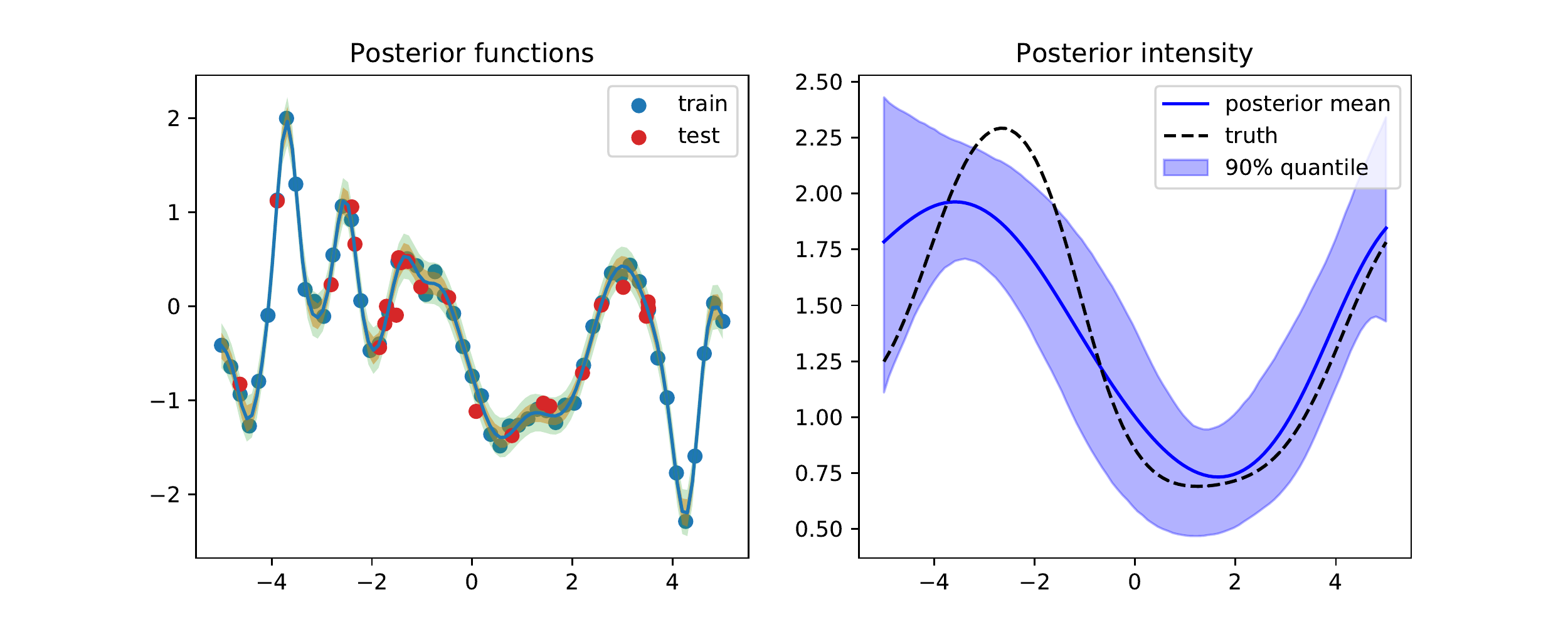}
	\end{center}
	\vspace{-0.3cm}
	\caption{\textbf{PoRB-NET is able to learn input-dependent lengthscale information.} The ground truth synthetic example has been generated by sampling from the PoRB-NET prior.}\label{fig:nonhomoPP}
	\vspace{-0.5cm}
\end{figure}

\begin{figure*}[!b]
	\centering
	\includegraphics[width=1.0\textwidth]{./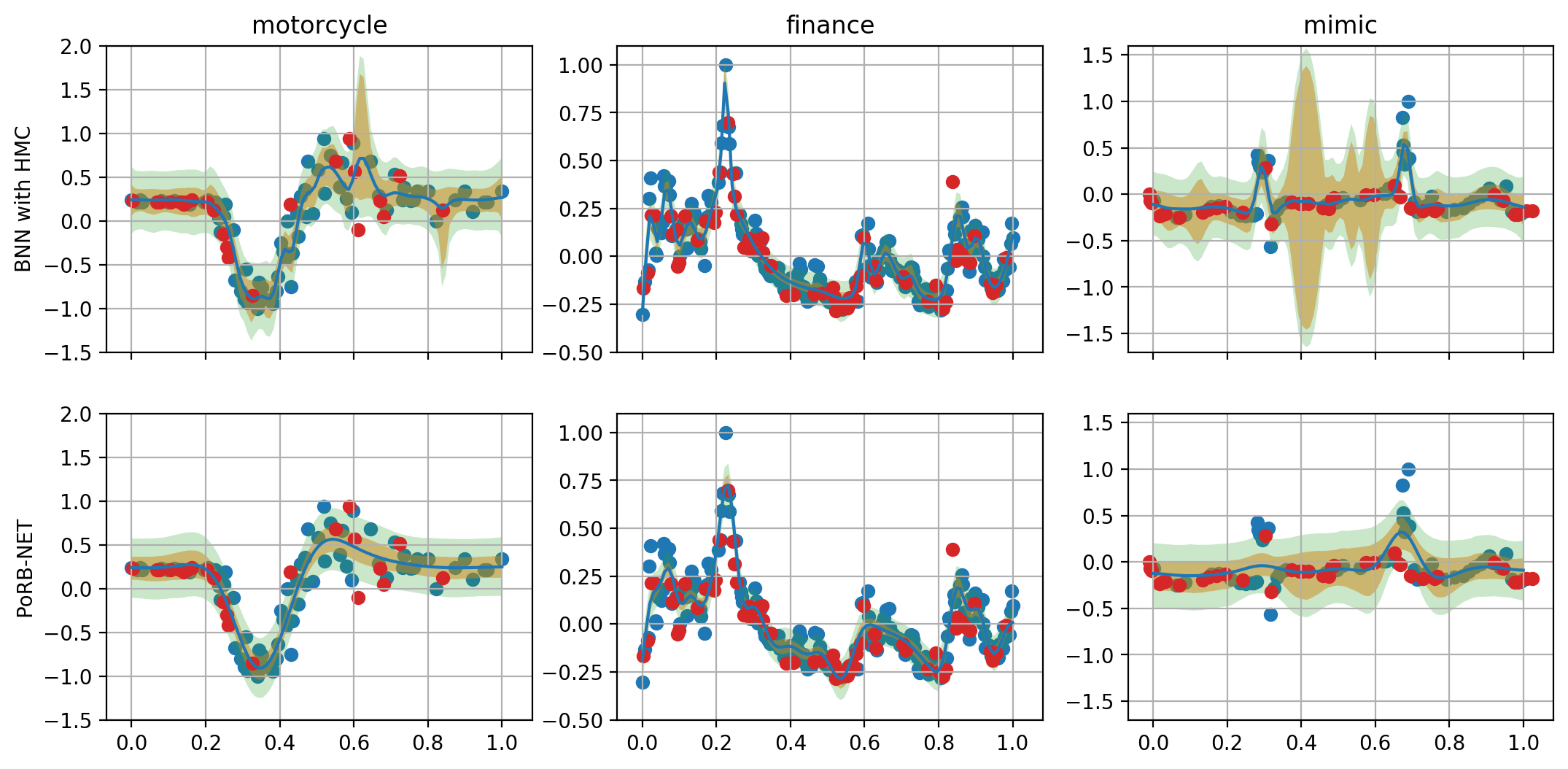}
	\caption{\textbf{PoRB-NET is able to capture non-stationary patterns in real scenarios, adapting the lengthscale locally as needed.} Posterior predictive of PoRB-NET in three real datasets, in comparison to a standard BNN trained with HMC. Priors of both models have been matched in the a priori number of upcrossings; BNN posterior exhibits undesired large fluctuations. Blue points are train set and red points are test set.}\label{fig:realdb_bnn_hmc}
\end{figure*}

\vspace{-0.1cm}
\paragraph{PoRB-NET can capture functions with input-dependent lengthscale.} Figure~\ref{fig:nonhomoPP} illustrates the capacity of PoRB-NET to infer input-dependent lengthscale information. Here the true function is a sample from the PoRB-NET model. The right panel shows that we are able to infer the true intensity function from the noisy $(\x, \y)$ observations. The left panel shows the inferred posterior predictive function.


 
 \vspace{-0.1cm}
\subsection{Real data}
\vspace{-0.1cm}

In this section, we show results for PoRB-NET in three different real-world datasets: motorcycle, mimic, and CBOE volatility index.
All these datasets correspond to non-stationary timeseries (see the Appendix for more information). Figure~\ref{fig:realdb_bnn_hmc} compares the posterior predictive densities of PoRB-NET against BNN trained with HMC.
While PoRB-NET and BNN are matched to have the same number of expected prior upcrossings of y=0 (as a proxy for the inverse lengthscale), the distribution of upcrossings in the posterior is different. This results in the undesired visibly large fluctuations exhibited by the BNN for the motorcycle and mimic datasets, yielding worse test log likelihood and test root mean square error, as shown in Tables~\ref{tab:LLH} and~\ref{tab:RMSE}.
Note that HMC is a gold standard for posterior inference; the fact that the standard BNN lacks desireable properties under HMC demonstrates that its failings come from the model/prior, not the inference.
In the Appendix, we also include further baselines, root mean square errors, held-out log likelihood values, as well as the corresponding inferred intensity functions for PoRB-NET. Interestingly, the learned intensity picks whenever the function exhibit faster variations (see the corresponding figure with the learned intensity function in the Appendix).




\begin{table}[h]
	\centering
	\scriptsize
	\begin{tabular}{llll}
		\toprule
		{} &     \textbf{motorcycle} &      \textbf{finance} &          \textbf{mimic} \\
		\toprule
		\textbf{BNN (HMC)}     &  -0.58 $\pm$ 0.26 &  \textbf{0.55 $\pm$ 0.67} &  -0.52 $\pm$ 0.29 \\
		\textbf{PORBNET} &  \textbf{-0.03 $\pm$ 0.14} &  0.15 $\pm$ 1.02 &   \textbf{0.04 $\pm$ 0.33} \\
		\bottomrule
	\end{tabular}
	\caption{Test Log Likelihood (LLH).}\label{tab:LLH}
\end{table}

\begin{table}[h]
	\centering
	\scriptsize
	\begin{tabular}{llll}
		\toprule
		{} &    \textbf{motorcycle} &      \textbf{finance} &         \textbf{mimic} \\
		\toprule
		\textbf{BNN (HMC)}     &  0.25 $\pm$ 0.02 &  \textbf{0.09 $\pm$ 0.02} &  \textbf{0.14 $\pm$ 0.06} \\
		\textbf{PORBNET} &  \textbf{0.22 $\pm$ 0.02} &   0.1 $\pm$ 0.03 &  0.18 $\pm$ 0.06 \\
		\bottomrule
	\end{tabular}
	\caption{Test Root Mean Square Error (RMSE)}\label{tab:RMSE}
\end{table}

\vspace{-0.2cm}
\section{Discussion}
\label{sec:discussion}

\vspace{-0.2cm}
In this work, we introduced PoRB-NETs, which maintain desirable characteristics of working with neural network models while providing the ability of specifying basic properties such as lengthscale, amplitude variance, and stationarity \emph{in a decoupled manner}. 
While we provide a principled inference scheme, we emphasize that our primary goal was to develop a model that exhibited the appropriate properties; given this, future work can now consider questions of scalable inference.  

While we put priors on the weight space, an important element of our work was reparameterizing the standard BNN formulation to have the center-and-scale form of Eq.~\eqref{eq:RBFN}.  Since the centers lie in the same space as the data, this parameterization makes it much more natural to think about properties such as stationarity, which depend on the data. Additionally, it makes possible to put a Poisson process prior over the hidden units, which facilitates the decoupling of lengthscale and signal variance information.  We focused on radial basis activations because they have a limited region of effect, unlike other popular activations such as Tanh or ReLu.  Exploring how to get desirable properties for those activations seems challenging,
and remains an area for future exploration.
That said, we emphasize that RBFNs are commonly used in many practical applications, as surveyed in~\citet{dash_radial_2016}.


Finally, all of our work was developed in the context of single-layer networks.  From a theoretical perspective, this is not an overly restrictive assumption, as single layer networks are still universal function approximators~\cite{park_universal_1991}; moreover, if prior to our work, we could not get desired properties from single-layer networks, it seemed premature to consider the multi-layer case.  The multi-layer case is an interesting direction for future work, and could perhaps draw on work related to the theoretical properties of deep GPs~\cite{damianou_deep_2012}.

\vspace{-0.1cm}
\section{Conclusion}\label{sec:conclusion}
\vspace{-0.2cm}

This work presents a novel Bayesian prior for neural networks called PoRB-NET that allows for easy encoding of essential basic properties such as lengthscale, signal variance, and stationarity in a decoupled fashion.  
Given the popularity of neural networks and the need for uncertainty quantification in them, understanding prior assumptions---which will govern how we will quantify uncertainty---is essential.  If prior assumptions are not well understood and properly specified, the Bayesian framework makes little sense: the posteriors we find may not be ones that we expect or want.  Our work provides an important step toward specifying Bayesian NN priors with desired basic functional properties. 




%


\bibliography{centered_bnn}

\newpage
\onecolumn

%


\setcounter{section}{0}
\section{Appendix: Covariance}

In this Section, we derive the covariance function of PoRB-NET. We consider one dimensional inputs for simplicity. First, we show that our model has zero-mean prior. Note that $b$, $\{(w_k,c_k)\}_{k=1}^K$, and $K$ are all random variables, and the scales $s^2_k$ are fixed as a function of the intensity, e.g., $s^2_k =s^2_0  \lambda(c_k)^2$. Let $\phi$ denote an arbitrary activation function, e.g., $\phi(x) = \exp(-x)$.
\begin{align}
\E[f(x)] &= \E\left[b+\sum_{k=1}^K w_k \phi(s_k(x-c_k)) \right] \\
&= \E[b] + \E\left[\sum_{k=1}^K w_k \phi(s_k(x-c_k)) \right] \\
&= \E\left[ \E\left[\sum_{k=1}^K w_h \phi(s_k(x-c_k)) \mid K=K_0 \right] \right] \\
&= \sum_{K_0=0}^\infty \text{Pr}[K=K_0] \E\left[\sum_{k=1}^K w_k \phi(s_k(x-c_k)) \mid K=K_0 \right]  \\
&= \sum_{K_0=0}^\infty \text{Pr}[K=K_0] \sum_{k=1}^{K_0} \E\left[ w_k \phi(s_k(x-c_k)) \mid K=K_0 \right]  \\
&= \sum_{K_0=0}^\infty \text{Pr}[K=K_0] \sum_{k=1}^{K_0} \E\left[ w_k \phi(s_k(x-c_k)) \right]  \label{eq:drop_K0}\\
&= \sum_{K_0=0}^\infty \text{Pr}[K=K_0] K_0 \E\left[ w_k \phi(s_k(x-c_k)) \right]  \\
&=  \E\left[ w_k \phi(s_k(x-c_k)) \right] \sum_{K_0=0}^\infty  \text{Pr}[K=K_0] K_0 \\
&=  \underbrace{\E\left[ w_k \right]}_{0} \E\left[ \phi(s_k(x-c_k)) \right] \E[K_0] \\
&= 0
\end{align}
In Equation~\eqref{eq:drop_K0}, we drop the condition $K=K_0$ since, conditional on the network width $K$ being fixed, the weights $\{w_k\}$ are independently normally distributed and the centers $\{c_k\}$ are independently distributed according to the normalized intensity $\lambda(c) / \Lambda$ where $\Lambda = \int_{\mathcal{C}} \lambda(c) dc$, so they do not depend on the actual value of the network width. 

Next, we consider the covariance:
\begin{align}
\text{Cov}\left[f(x_1), f(x_2) \right] &= \E\left[f(x_1) f(x_2) \right] \nonumber \\
&= \E\left[ \left(b+\sum_{k=1}^K w_k \phi(s_k(x_1-c_k)) \right) \left(b+\sum_{k=1}^K w_k \phi(s_k(x_2-c_k)) \right)  \right] \\
&= \E\left[ \E\left[ \left(b+\sum_{k=1}^K w_k \phi(s_k(x_1-c_k)) \right) \left(b+\sum_{k=1}^K w_k \phi(s_k(x_2-c_k)z) \right) \mid K=K_0 \right] \right] \nonumber \\
&= \E\left[ \E\left[  b^2 + \sum_{k_1=1}^K \sum_{k_2=1}^K w_{k_1} w_{k_2} \phi(s_{k_1}(x_1-c_{k_1})) \phi(s_{k_2}(x_2-c_{k_2}))  \mid K=K_0 \right] \right] \nonumber \\
&= \sigma^2_b + \E\left[\E\left[ \sum_{k=1}^K w_k^2 \phi(s_k(x_1-c_k)) \phi(s_k(x_2-c_k)) \mid K=K_0\right] \right] \nonumber \\
&\quad +  \E\left[\E\left[ 2 \sum_{k_1=1}^K \sum_{k_2=k_1+1}^K w_{k_1} w_{k_2} \phi(s_{k_1}(x_1-c_{k_1})) \phi(s_{k_2}(x_2-c_{k_2})) \mid K=K_0\right] \right] \nonumber \\
&= \sigma^2_b + \E\left[\sum_{k=1}^{K_0} \underbrace{\E\left[ w_k^2\right]}_{\tilde{\sigma}^2_w} \E\left[\phi(s_k(x_1-c_k)) \phi(s_k(x_2-c_k)) \mid K=K_0\right] \right] \nonumber \\
&\quad +  \E\left[ 2 \sum_{k_1=1}^K \sum_{k_2=k_1+1}^K \underbrace{\E\left[w_{k_1}\right]}_{0} \underbrace{\E\left[w_{k_2}\right]}_{0} \E\left[\phi(s_{k_1}(x_1-c_{k_1})) \phi(s_{k_2}(x_2-c_{k_2})) \mid K=K_0\right] \right] \nonumber \\
&= \sigma^2_b + \E\left[\sum_{k=1}^{K_0} \tilde{\sigma}^2_w \E\left[\phi(s_k(x_1-c_k)) \phi(s_k(x_2-c_k)) \mid K=K_0\right] \right] \\
&= \sigma^2_b + \E\left[\sum_{k=1}^{K_0} \tilde{\sigma}^2_w \E\left[\phi(s_k(x_1-c_k)) \phi(s_k(x_2-c_k)) \right] \right] \label{eq:drop_K0_2} \\
&= \sigma^2_b + \E\left[K_0 \tilde{\sigma}^2_w \E\left[\phi(s(x_1-c)) \phi(s(x_2-c)) \right] \right] \label{eq:drop_k} \\
&= \sigma^2_b + \tilde{\sigma}^2_w \E\left[K_0 \right] \E\left[\phi(s(x_1-c)) \phi(s(x_2-c))\right], \label{eq:K_0}  
\end{align}
where $\tilde{\sigma}^2_w = \sqrt{s^2_0 / \pi} \sigma^2_w$ is the prior variance for the weights.
In Equation (\ref{eq:drop_K0_2}), as in Equation~\eqref{eq:drop_K0}, we drop the condition $K=K_0$ but remember that the expectation is with respect to the weight and center parameters conditional on a fixed network width.
Also note that going from Equation~\eqref{eq:drop_K0} to Equation~\eqref{eq:drop_k} relies on the fact that the priors on the scales $\{s_k\}$ and centers $\{c_k\}$ are i.i.d priors; we drop the subindex $k$ from now on for notational simplicity.

To actually evaluate the covariance, we need to evaluate the term $\E\left[\phi(s(x_1-c)) \phi(s(x_2-c)) \right]$. We next consider two cases. Case 1 is a homogeneous Poisson process prior over $c$ and Case 2 is an inhomogeneous Poisson process prior over $c$. Note that in both cases, the Poisson process prior over $c$ is unconditional on the network width. Conditioned on the network width, as in the expectation that we are trying to evaluate, Case 1 is a uniform distribution over $\mathcal{C}$ and Case 2 has PDF $\lambda(c)/\Lambda$.

\subsection*{Case 1: Homogeneous Poisson Process}

First we consider the case where the intensity is fixed, i.e., $\lambda(c) = \lambda$. Then we have:
\begin{align}
\E&\left[\phi(s(x_1-c)) \phi(s(x_2-c))\right]  \\
&= \int_{\mathcal{C}} \phi(s(x_1-c)) \phi(s(x_2-c)) \frac{\lambda}{\Lambda}~dc  \\
&=  \int_{\mathcal{C}} \exp\left\{-\frac{1}{2}(s(x_1-c))^2 \right\} \exp\left\{-\frac{1}{2}(s(x_1-c))^2 \right\}  \frac{\lambda}{\Lambda} ~dc  \\
&= \int_{\mathcal{C}} \exp\left\{-\frac{1}{2}s^2 [(x_1-c)^2+(x_2-c)^2] \right\}  \frac{\lambda}{\Lambda} ~dc   \\
&=\int_{\mathcal{C}} \exp\left\{-s^2 \left[\left(\frac{x_1-x_2}{2} \right)^2 +  \left(\frac{x_1+x_2}{2} - c\right)^2  \right] \right\} \frac{\lambda}{\Lambda}  ~dc  \\
&=\int_{\mathcal{C}} \exp\left\{-s^2 \left(\frac{x_1-x_2}{2} \right)^2\right\} \exp\left\{-s^2 \left[ \left(\frac{x_1+x_2}{2} - c\right)^2 \right] \right\}  \frac{\lambda}{\Lambda} ~dc   \\
&=\int_{\mathcal{C}} \exp\left\{-s^2_0 \lambda^2 \left(\frac{x_1-x_2}{2} \right)^2\right\} \exp\left\{-s^2_0 \lambda^2 \left[ \left(\frac{x_1+x_2}{2} - c\right)^2 \right] \right\}  \frac{\lambda}{\Lambda} ~dc  \label{eq:h_lambda} \\
&=  \underbrace{\exp\left\{-s^2_0 \lambda^2 \left(\frac{x_1-x_2}{2} \right)^2\right\}}_{\text{SE kernel}}  \underbrace{\int_{\mathcal{C}} \exp\left\{-s^2_0 \lambda^2 \left[ \left(\frac{x_1+x_2}{2} - c\right)^2 \right] \right\}  \frac{\lambda}{\Lambda} ~dc}_{ \text{uniform mixture of Gaussians} } \label{eq:kernel_mixture}
\end{align}
In Equation~\eqref{eq:h_lambda}, we plug in $s^2 = s^2_0 \lambda(c)^2 =  s^2_0 \lambda^2$. In Equation~\eqref{eq:kernel_mixture}, we point out that we can write this term as the product of an square exponential kernel and a mixture of Gaussians. Considering only the uniform mixture of Gaussian terms, we have:
\begin{align}
\int_{\mathcal{C}}  \exp\left\{-s^2_0 \lambda^2 \left[ \left(\frac{x_1+x_2}{2} - c\right)^2 \right] \right\} \frac{\lambda}{\Lambda} ~dc
&= \frac{\lambda}{\Lambda} \int_{\mathcal{C}} \exp\left\{-s^2_0 \lambda^2 \left[ \left(\frac{x_1+x_2}{2} - c\right)^2 \right] \right\} ~dc \\
&=  \frac{\lambda}{\Lambda} \int_{C_0}^{C_1} \exp\left\{-\frac{1}{2\psi^2} \left[ \left(x_m - c\right)^2 \right] \right\} ~dc \label{eq:psi} \\
&=  \frac{\lambda}{\Lambda} \psi  \sqrt{2\pi} \int_{(C_0-x_m)/\psi}^{(C_1 - x_m)\psi} \frac{1}{\sqrt{2\pi}} \exp\left\{-\frac{1}{2} u^2  \right\} ~du \label{eq:var_change}\\
&= \frac{\lambda}{\Lambda} \frac{1}{\sqrt{2 s_0^2 \lambda^2}} \sqrt{2\pi} \left[\Phi((C_1-x_m) / \psi) - \Phi((C_1-x_m) / \psi)\right] \\
&= \frac{1}{\Lambda} \sqrt{\frac{\pi}{s^2_0}} \left[\Phi((C_1-x_m) \sqrt{2}s_0\lambda) - \Phi((C_0-x_m) \sqrt{2}s_0\lambda)\right] \label{eq:mixture}
\end{align}
where $\Phi$ is the error function for a standard Gaussian, and $\mathcal{C}=[C_0,C_1]$ is the region where the Poisson Process intensity is defined.
In Equation (\ref{eq:psi}) we define $\psi^2:=1/(2 s_0^2 \lambda^2)$, and $x_m := (x_1+x_2)/2$ as the midpoint. In Equation (\ref{eq:var_change}) we use the change of variables $u = (c - x_m)/\sigma$. Plugging Equation~(\ref{eq:mixture}) in Equation~(\ref{eq:kernel_mixture}), and Equation (\ref{eq:kernel_mixture}) into Equation (\ref{eq:K_0}), we have:
\begin{align}
\text{Cov}\left[f(x_1), f(x_2) \right] &= 
\sigma^2_b + \sigma^2_w 
\exp\left\{-s^2_0 \lambda^2 \left(\frac{x_1-x_2}{2} \right)^2\right\} \left[\Phi((C_1-x_m) \sqrt{2}s_0\lambda) - \Phi((C_0-x_m) \sqrt{2}s_0\lambda)\right].
\end{align}
This gives a closed form representation for the covariance (to the extent that the error function is closed form). 
If we further assume that $C_0$ and $C_1$ are large in absolute value relative to the midpoint $x_m$, in other words, that the Poisson Process is defined over a larger region than the data, then the difference in error functions is approximately one (i.e., the integral over the tails of the Gaussian goes to zero) and the covariance becomes:
\begin{align}
\text{Cov}\left[f(x_1), f(x_2) \right] &\approx 
\sigma^2_b + \sigma^2_w 
\exp\left\{-s^2_0 \lambda^2 \left(\frac{x_1-x_2}{2} \right)^2\right\}.
\end{align}
Finally, notice that the variance depends only on the weight and bias variance parameters:
\begin{align}
\mathbb{V}\left[f(x) \right] &\approx 
\sigma^2_b + \sigma^2_w 
\end{align}

\subsection*{Case 2: Inhomogeneous Poisson Process}

Now consider an inhomogenous Poisson process prior on the center parameters with an arbitrary intensity function $\lambda(c)$. We use the heuristic $s^2_k = s^2_0 \lambda(c_k)$. Figure \ref{fig:heuristic} shows different function samples for a single fixed intensity sampled from the prior. On the left, the $s^2_k$ is constant for each unit while on the right $s^2_k = s^2_0 \lambda(c_k)$. The top row shows the true intensity, the middle row shows the amplitude/function variance, and the bottom row shows a histogram of the number of function upcrossings of zero, which can be thought of as a measurement of lengthscale. We see that setting the scales based on the intensity results in approximately constant function variance and increased number of upcrossings. 

\begin{figure}[H]
	\centering
	\begin{subfigure}[t]{0.5\textwidth}
		\centering
		\includegraphics[height=3.4in]{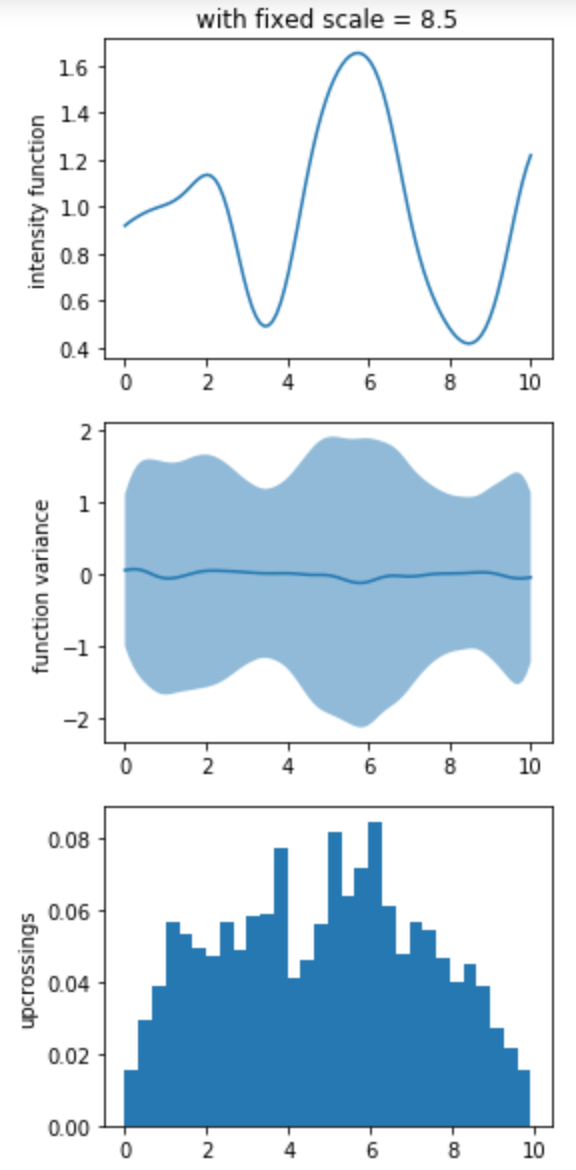}
		\caption{Constant $s_k^2$}
	\end{subfigure}%
	~
	\begin{subfigure}[t]{0.5\textwidth}
		\centering
		\includegraphics[height=3.4in]{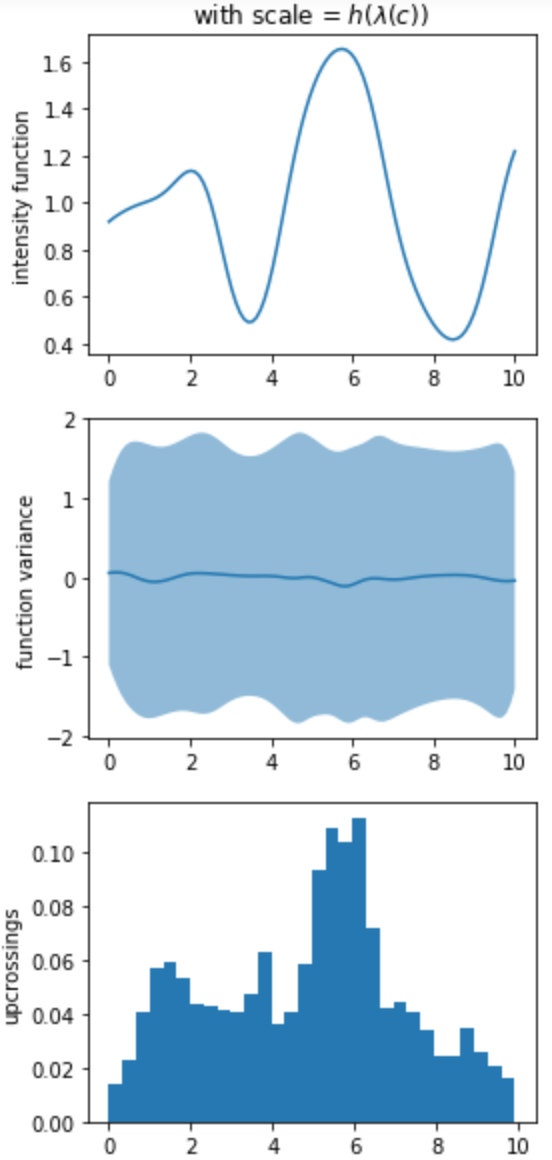}
		\caption{$s_k^2= s^2_0 \lambda(c_k)^2$, where for $s^2_0 = 2\pi$.}
	\end{subfigure}
	\caption{Setting $s_k^2= s^2_0 \lambda(c_k)^2$ results in approximately stationary amplitude variance. \label{fig:heuristic}}
\end{figure}

\newpage

\section{Appendix: Proof for consistency}

We are interested in the posterior behavior of our model as the number of observations $n\to\infty$. Specifically, we want to show that our estimated regression function $\hat{g}_n(x) = \mathbb{E}[Y\mid X=x]$ is asymptotically consistent for the true regression function $g_0(x)$, i.e.:
$$
\int (\hat{g}_n(x) - g_0(x))^2 dx \xrightarrow{p} 0
$$
To do this, we first show that the posterior probability assigned to all joint distribution functions $f(X,Y)$ in any Hellinger neighborhood of the true joint distribution function $f_0(X,Y)$ approaches one as $n\to\infty$. That is, if $A_\epsilon = \{f\mid D_H(f,f_0) \le \epsilon \}$ defines a Hellinger neighborhood of the true distribution function, then $\forall \epsilon>0$:
$$
p(A_\epsilon \mid (X_1,Y_1),\dotsc,(X_n,Y_n))  \xrightarrow{p} 1
$$  
We assume that the marginal distribution of $X$ is uniform on $[0,1]$ (i.e., $f(X)=1$), so the joint distribution $f(X,Y)$ and the conditional distribution $f(Y\mid X)$ are the same, since $f(X,Y) = f(Y\mid X) f(X) = f(Y\mid X)$. The estimated regression function is defined as $\hat{g}_n(x) = \mathbb{E}_{\hat{f}_n}[Y\mid X=x]$, where $\hat{f}_n$ is given by the posterior predictive density:
$$
\hat{f}_n(X,Y) = \int f(X,Y) ~dP(f \mid (X_1,Y_1)),\dotsc,(X_n,Y_n)).
$$

After introducing a few definitions and notation, Section \ref{sec:consistency_general} discusses the necessary conditions on the prior required for any radial basis function network to achieve consistency, with many results taken or adapted from \citep{lee_consistency_2000}. Section \ref{sec:consistency_porbnet} checks that these necessary conditions are met by PoRB-NET with a homogeneous Poisson process prior on the number of hidden units. We first show asymptotic consistency when the number of hidden units is allowed to grow with the data. This gives a sequence of models known as a sieve. We then extend this to the case when the number of hidden units is inferred. 

\subsection*{Definitions and notation} \label{sec:consistency_notation}
We begin by specifying our notation and definitions:

\begin{itemize}
	
	\item $D$ is the input dimension.
	
	\item $K$ is the network width.
	
	\item $I$, $I^{(w)}$ and $I^{(c)}$ are the number of total parameters, weight parameters, and center parameters, respectively. $I = I^{(w)} + I^{(c)} + 1$. 
	
	\item $\mathcal{I}$, $\mathcal{I}^{(w)}$, $\mathcal{I}^{(c)}$, and $\mathcal{I}^{(\lambda^2)}$ are the index set of total parameters, weight parameters, center parameters, and intensity respectively (e.g., $\mathcal{I} = 1,2,\dotsc,I$). $\mathcal{I}^{(w)} \subset \mathcal{I}$, $\mathcal{I}^{(c)} \subset \mathcal{I}$, $\mathcal{I}^{(\lambda^2)} \subset \mathcal{I}$, $I = |\mathcal{I}|$, $I^{(w)} = |\mathcal{I}^{(w)}|$, $I^{(c)} = |\mathcal{I}^{(c)}|$, and $1=|\mathcal{I}^{(\lambda^2)}|$.
	
	\item The subscript $n$ always denotes the sample size dependence (applies to $K_n$, $I_n$, $\mathcal{I}_n$, $I_n^{(w)}$, $\mathcal{I}_n^{(w)}$, $I_n^{(c)}$, $\mathcal{I}_n^{(c)}$, $C_n$).
	
	\item Let $\theta_i$ denote any parameter, $c_i$ denote a center parameter, and $w_i$ denote a weight parameter.
	
	\item $C_n$ is a bound on the absolute value of the parameters. For the sieves approach in we assume $C_n\le \exp(n^{b-a})$, where $0<a<b<1$.
	
	\item Assume that the Poisson process intensity function $\lambda(c)$ is only defined on a bounded region $\mathcal{C}$.
	
	\item Let $f(x,y)$ denote a joint density of covariates $X$ and label $Y$ and let $g(x) = \mathbb{E}[Y\mid X=x]$ denote a regression function.
	
	\item Let $f_0(x,y)$ and $g_0(x)$ denote the true joint density and regression function, respectively.
	
	\item We assume $x\in\mathcal{X}=[0,1]^D$ and that the marginal density of $x$ is uniform, i.e. $f(X)=1$.
	
	\item Let $D_H(f_0, f)$ denote the Hellinger distance and let $A_\epsilon = \{f : D_H(f_0,f) \le \epsilon\}$.
	\item Let $D_K(f_0, f)$ denote the KL divergence and let $K_\gamma$ denote a KL neighborhood of the true joint density:
	$
	K_\gamma = \{f \mid D_K(f_0,f) \le \gamma \}
	$$ = \{f : D_K(f_0,f) \le \gamma\}$.
	
	\item Let $(x_1,y_1),\dotsc,(x_n,y_n)$ denote the $n$ observations and $\pi_n$ denote a prior probability distribution over the parameters of a single hidden layer PoRB-NET conditional on there being $K_n$ nodes, where $K_n$ increases with $n$. Let $I_n$ denote the number of parameters for an RBFN network with $K_n$ nodes.  
	
	\item Let $\mathcal{F}$ denote the space of all single-layer radial basis function networks $\text{RBFN}(x; \bm{\theta})\mapsto y$, let $\mathcal{F}_n \subset \mathcal{F}$ be its restriction to networks with parameters less than $C_n>0$ in absolute value, where $C_n$ also increases with $n$; let $\mathcal{H}_n \subset \mathcal{F}$ be its restriction to networks with $K_n$ nodes; and let $\mathcal{G}_n = \mathcal{F}_n \cap \mathcal{G}_n$ be the intersection of both restrictions. 
	
\end{itemize}

\subsection{Consistency for RBFNs with arbitrary priors} \label{sec:consistency_general}

\subsubsection{Supporting results}

The following theorems are used in proof of Lemma \ref{lemma:adapt_lemma_2_lee}, which is adapted from \citep{lee_consistency_2000}.
Theorem~\ref{thm:3} upper bounds the bracketing number $N_{[]}(\ )$ by the covering number $N(\ )$. Define the Hellinger bracketing entropy by $H_{[]}(\ ) := \log N_{[]}(\ )$.

\begin{theorem} \label{thm:3}
	\citep{vandervaart_convergence_1996}
	Let $s, t \in \mathcal{F}_n$, i.e., $s$ and $t$ are realizations of the parameter vector. Let $f_t(x,y) \in \mathcal{F}^*$ be a function of $x$ and $y$ with parameter vector equal to $t$. Suppose that:
	\begin{equation}
	\left\lvert f_t(x,y) - f_s(x,y) \right\rvert \le d^*(s,t) F(x,y)
	\end{equation}
	for some metric $d^*$, for some fixed function $F$, and for every $s$, $t$, and every $(x,y)$. Then for any norm $\left\| \cdot \right\|$,
	\begin{equation}
	N_{[]}(2\epsilon \left\| F \right\|, \mathcal{F}^*, \left\| \cdot \right\|) \le N(\epsilon,  \mathcal{F}_n, d^*).
	\end{equation}
	
\end{theorem}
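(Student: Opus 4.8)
The plan is to prove the bound by the classical device of turning a covering of the parameter set $\mathcal{F}_n$ into a bracketing of the induced density class $\mathcal{F}^*$, with the pointwise Lipschitz-type estimate $\left\lvert f_t - f_s \right\rvert \le d^*(s,t)\,F$ supplying everything that is needed.

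First I would fix $\epsilon>0$ and choose a minimal $\epsilon$-cover of $\mathcal{F}_n$ in the metric $d^*$: centers $t_1,\dots,t_N$ with $N = N(\epsilon,\mathcal{F}_n,d^*)$, so that every $t\in\mathcal{F}_n$ satisfies $d^*(t,t_j)\le\epsilon$ for some $j$. For each center I would form the bracket $[\ell_j,u_j]$ with $\ell_j(x,y) = f_{t_j}(x,y) - \epsilon\,F(x,y)$ and $u_j(x,y) = f_{t_j}(x,y) + \epsilon\,F(x,y)$, and then check that these $N$ brackets cover $\mathcal{F}^*$: any member of $\mathcal{F}^*$ is $f_t$ for some $t\in\mathcal{F}_n$, and choosing $j$ with $d^*(t,t_j)\le\epsilon$ the hypothesis gives, pointwise in $(x,y)$,
\begin{equation*}
\left\lvert f_t(x,y) - f_{t_j}(x,y) \right\rvert \;\le\; d^*(t,t_j)\,F(x,y) \;\le\; \epsilon\,F(x,y),
\end{equation*}
so $\ell_j \le f_t \le u_j$ everywhere. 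Since $u_j - \ell_j = 2\epsilon F$, each bracket has size $\left\| u_j - \ell_j \right\| = 2\epsilon\left\|F\right\|$ in the chosen norm, and the family $\{[\ell_j,u_j]\}_{j=1}^N$ therefore witnesses $N_{[]}\!\left(2\epsilon\left\|F\right\|,\mathcal{F}^*,\left\|\cdot\right\|\right)\le N(\epsilon,\mathcal{F}_n,d^*)$, which is the claim.

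There is no serious obstacle here: the argument is a one-step construction, and the only care required is bookkeeping. One wants $F$ measurable with $\left\|F\right\|<\infty$ so the brackets are genuine objects of finite size in the given norm, and one should take the cover by closed $d^*$-balls so that $d^*(t,t_j)\le\epsilon$ is actually attained (otherwise pass to slightly larger brackets, which only weakens the conclusion harmlessly). The conceptual point — and the reason the theorem is stated with the envelope $F$ — is that the passage from a parameter-space cover to a bracket is driven entirely by the domination $\left\lvert f_t - f_s\right\rvert \le d^*(s,t)F$; this is exactly what converts $d^*$-balls of radius $\epsilon$ into brackets of width $2\epsilon\left\|F\right\|$. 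In the PoRB-NET application the envelope $F$ will be built from the boundedness of the parameters on $\mathcal{F}_n$ (capped by $C_n$) together with the explicit form of the RBF activation, so the real work downstream is not this theorem but the routine Lipschitz estimate verifying its hypothesis for $\text{RBFN}(x;\bm\theta)$ in its parameters.
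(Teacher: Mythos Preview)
Your argument is correct and is precisely the standard construction: turn an $\epsilon$-cover of the parameter space into brackets $[f_{t_j}-\epsilon F,\,f_{t_j}+\epsilon F]$ of $\|\cdot\|$-size $2\epsilon\|F\|$, and use the Lipschitz hypothesis to show they cover $\mathcal{F}^*$. The paper does not supply its own proof of this statement---it is quoted as a supporting result from \citet{vandervaart_convergence_1996} and used as a black box in Lemma~\ref{lemma:adapt_lemma_2_lee}---so there is nothing to compare against, and your write-up would serve perfectly well as the omitted justification.
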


\begin{theorem} \label{thm:4}
	\citep{wong_sieves_1995}
	Define the ratio of joint likelihoods between the inferred density and the true density as
	\begin{equation}
	R_n(f) = \prod_{i=1}^n \frac{f(x_i, y_i)}{f_0(x_i, y_i)}.
	\end{equation}
	For any $\epsilon>0$ there exists constants $a_1$, $a_2$, $a_3$, $a_4$ such that if 
	\begin{equation}
	\int_{\epsilon^2/2^8}^{\sqrt\epsilon} \sqrt{H_{[]}(u/ a_3)}~du \le 2 a_4 \sqrt{n} \epsilon^2,
	\end{equation}
	then
	\begin{equation}
	P^{*} \left( \sup_{f\in A_\epsilon^c \cap \mathcal{F}_n} R_n(f) \ge \exp(-a_1 n \epsilon^2) \right) \le 4\exp(-a_2 n \epsilon^2).
	\end{equation}
	
\end{theorem}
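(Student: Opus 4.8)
The plan is to establish the exponential bound by passing to the \emph{square-root likelihood ratio}, which linearizes the product and ties it directly to the Hellinger geometry. Write $m_f = \sqrt{f/f_0}$, so that $R_n(f)^{1/2} = \prod_{i=1}^n m_f(x_i,y_i)$. The starting point is the affinity identity $\E_{f_0}[m_f] = \int \sqrt{f f_0}\,d\mu = 1 - \tfrac{1}{2} D_H^2(f,f_0)$, which shows that on the complement $A_\epsilon^c$ we have $\E_{f_0}[m_f] \le 1 - \epsilon^2/2$. For a \emph{single} $f$, independence gives $\E_{f_0}[R_n(f)^{1/2}] \le (1-\epsilon^2/2)^n \le \exp(-n\epsilon^2/2)$, and Markov's inequality applied to $R_n(f)^{1/2}$ already delivers a tail of the desired form. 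The entire difficulty is upgrading this pointwise statement to a bound that holds \emph{uniformly} over the infinite-dimensional class $A_\epsilon^c \cap \mathcal{F}_n$.

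The first reduction is to an empirical process tail. Using $\log t \le t-1$,
\begin{equation}
R_n(f)^{1/2} = \prod_{i=1}^n m_f(x_i,y_i) \le \exp\!\left(\sum_{i=1}^n \big(m_f(x_i,y_i)-1\big)\right) = \exp\!\left(\sqrt{n}\,\nu_n(m_f-1) + n\,\E_{f_0}[m_f-1]\right),
\end{equation}
where $\nu_n g := \sqrt{n}(\mathbb{P}_n - \E_{f_0})g$ is the empirical process and $\mathbb{P}_n$ the empirical measure. Since $\E_{f_0}[m_f-1] \le -\epsilon^2/2$ on $A_\epsilon^c$, the event $\{\sup_f R_n(f) \ge \exp(-a_1 n\epsilon^2)\}$ is contained in $\{\sup_f \nu_n(m_f-1) \ge c\sqrt{n}\,\epsilon^2\}$ with $c = (1-a_1)/2$. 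Thus it suffices to bound the tail of the supremum of the empirical process indexed by the class $\{m_f - 1 : f \in A_\epsilon^c \cap \mathcal{F}_n\}$.

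The key geometric fact is that the $L_2(f_0)$ metric on the index functions equals the Hellinger metric on the densities: $\norm{m_f - m_g}_{L_2(f_0)}^2 = \int (\sqrt{f}-\sqrt{g})^2\,d\mu = D_H^2(f,g)$. Hence the $L_2(f_0)$ bracketing entropy of $\{m_f\}$ coincides with $H_{[]}$ of the density class, and Theorem~\ref{thm:3} can be invoked to pass between covering and bracketing numbers. I would then apply a Bernstein-type exponential maximal inequality with a \emph{peeling} argument: partition $A_\epsilon^c$ into Hellinger shells $\{2^{j}\epsilon \le D_H(f,f_0) < 2^{j+1}\epsilon\}$, control the empirical process over each shell by chaining through brackets of geometrically decreasing width, and union-bound over the shells. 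Within each bracket one bounds $\nu_n$ of the (truncated) upper bracketing function by Bernstein's inequality; summing the chaining increments reproduces exactly the entropy integral $\int \sqrt{H_{[]}(u/a_3)}\,du$, and the hypothesis that this integral is $\le 2a_4\sqrt{n}\,\epsilon^2$ guarantees the accumulated fluctuation stays below the negative drift $n\epsilon^2/2$. Each shell then contributes a term of order $\exp(-a_2 n\epsilon^2 2^{2j})$, and summing the geometric series over $j\ge 0$ yields the stated $4\exp(-a_2 n\epsilon^2)$.

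The main obstacle is that $m_f - 1$ is \emph{not} uniformly bounded, since $f/f_0$ can be arbitrarily large; Hoeffding-type inequalities therefore fail, and one must combine a truncation argument with Bernstein's inequality while carefully tracking how the variance of the bracketing functions scales with the shell radius. A single global maximal inequality would lose the sharp $n\epsilon^2$ exponent, so the per-shell tuning of the truncation level and bracket widths is essential to recover it, and it is precisely this bookkeeping that pins down the integration limits $\epsilon^2/2^8$ and $\sqrt{\epsilon}$ and the constants $a_1,\dots,a_4$. Verifying that the truncation error is asymptotically negligible and that the per-shell bounds sum to a convergent geometric series is the delicate, technically heaviest part of the argument.
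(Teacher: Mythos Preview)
The paper does not prove this theorem: it is quoted verbatim from \cite{wong_sieves_1995} and used as a black box in the proof of Lemma~\ref{lemma:adapt_lemma_2_lee}. There is therefore no ``paper's own proof'' to compare against.

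That said, your sketch is a faithful outline of the Wong--Shen argument itself: the passage to the square-root likelihood ratio via the affinity identity, the reduction to an empirical-process supremum, the identification of the $L_2(f_0)$ metric with the Hellinger metric, and the peeling-plus-chaining scheme with Bernstein-type control on truncated brackets are exactly the ingredients of the original proof. You have also correctly identified the main technical obstacle (unboundedness of $m_f-1$ forcing a truncation-plus-variance argument rather than a Hoeffding bound) and the reason the shell-by-shell tuning is needed to recover the sharp $n\epsilon^2$ exponent. For the purposes of this paper, however, none of this is required---you may simply cite the result and move on, as the authors do.
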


\begin{lemma} (Adaptation of Lemma 1 in \cite{lee_consistency_2000})\footnote{This lemma differs from \citep{lee_consistency_2000} because they assume $H_{[]}(u) \le \log[(C_n^2 I_n/u )^{I_n}]$ and $I_n = (D+2)K_n + 1$.} \label{lemma:adapt_lemma_1_lee}
	Suppose that $H_{[]}(u) \le \log[(a' n^a C_n^{a''} I_n/u )^{I_n}]$, where $I_n = (D+1)K_n + 1$, $K_n \le n^a$, $a',a''>0$, and $C_n \le \exp(n^{b-a})$ for $0<a<b<1$. Then for any fixed constants $a''', \epsilon>0$ and for all sufficiently large $n$,
	\begin{equation}
	\int_0^\epsilon \sqrt{H_{[]}(u)} \le c\sqrt{n}\epsilon^2.
	\end{equation}
\end{lemma}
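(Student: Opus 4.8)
The plan is a direct estimate of the entropy integral. First I would substitute the hypothesised bound, which reads $H_{[]}(u) \le I_n\log\!\big(B_n/u\big)$ with $B_n := a' n^a C_n^{a''} I_n$, and pull the dimension factor out of the square root, obtaining
$$
\int_0^\epsilon \sqrt{H_{[]}(u)}\,du \;\le\; \sqrt{I_n}\int_0^\epsilon \sqrt{\log(B_n/u)}\,du .
$$
(For all large $n$ we have $B_n \ge \epsilon$, so the logarithm is nonnegative throughout the range of integration.)

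Next I would control the remaining one-dimensional integral by the rescaling $u=\epsilon v$ together with the subadditivity $\sqrt{p+q}\le\sqrt p+\sqrt q$ applied to $\log(B_n/(\epsilon v))=\log(B_n/\epsilon)+\log(1/v)$, which gives
$$
\int_0^\epsilon \sqrt{\log(B_n/u)}\,du \;\le\; \epsilon\sqrt{\log(B_n/\epsilon)} + \epsilon\int_0^1 \sqrt{\log(1/v)}\,dv ,
$$
and $\int_0^1 \sqrt{\log(1/v)}\,dv = \Gamma(3/2)$ is a finite absolute constant. Hence the left-hand side is at most $\sqrt{I_n}\,\epsilon\big(\sqrt{\log(B_n/\epsilon)}+\Gamma(3/2)\big)$, and it suffices to show that $\sqrt{I_n}\big(\sqrt{\log(B_n/\epsilon)}+O(1)\big)=o(\sqrt n)$ as $n\to\infty$ with $\epsilon$ fixed.

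Then I would insert the growth conditions. From $K_n\le n^a$ we get $I_n=(D+1)K_n+1\le (D+2)n^a$ for large $n$, so $\sqrt{I_n}\le\sqrt{D+2}\,n^{a/2}$. For the logarithmic factor, $\log(B_n/\epsilon)=\log a' + a\log n + a''\log C_n + \log I_n - \log\epsilon$; since $C_n\le\exp(n^{b-a})$ we have $\log C_n\le n^{b-a}$, while $\log I_n=O(\log n)$ and the remaining terms are $O(1)$, so because $b-a>0$ the term $a''n^{b-a}$ dominates and $\log(B_n/\epsilon)\le 2a''\,n^{b-a}$ for all large $n$. Combining, $\sqrt{I_n}\sqrt{\log(B_n/\epsilon)}\le\sqrt{2(D+2)a''}\,n^{a/2}n^{(b-a)/2}=\sqrt{2(D+2)a''}\,n^{b/2}$, and $\sqrt{I_n}\cdot O(1)=O(n^{a/2})$. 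Since $a<1$ and $b<1$, both $n^{a/2}$ and $n^{b/2}$ are $o(n^{1/2})$, so the bound $\sqrt{I_n}\,\epsilon\big(\sqrt{\log(B_n/\epsilon)}+\Gamma(3/2)\big)$ is eventually smaller than $c\sqrt n\,\epsilon^2$ for any fixed $c>0$, which is the claim.

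The argument is computational rather than conceptual; the only subtlety is the exponent bookkeeping, and it is exactly here that the hypotheses are used: the parameter truncation is permitted to grow only subexponentially ($C_n\le\exp(n^{b-a})$ with $b<1$), so that after combining $\sqrt{I_n}\sim n^{a/2}$ with $\sqrt{\log C_n^{a''}}\sim n^{(b-a)/2}$ one lands at $n^{b/2}$, strictly of smaller order than the $n^{1/2}$ on the right. I would also note that the displayed conclusion should read $\int_0^\epsilon\sqrt{H_{[]}(u)}\,du \le c\sqrt n\,\epsilon^2$, and that the auxiliary constant $a'''$ is presumably the constant $a_3$ of Theorem~\ref{thm:4} appearing as $H_{[]}(u/a''')$; it merely rescales $B_n$ by a constant and does not affect the estimate. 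This lemma is the workhorse needed to verify the integral hypothesis of Theorem~\ref{thm:4}, via the bracketing-to-covering comparison of Theorem~\ref{thm:3}.
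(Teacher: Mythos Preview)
Your argument is correct and arrives at the same endpoint as the paper, but by a shorter and more transparent route. The paper performs the substitution $z=\sqrt{2\log(a_n/u)}$ (with $a_n$ your $B_n$), integrates by parts, and invokes Mill's ratio to obtain the bound $\epsilon\sqrt{I_n}\sqrt{\log(a_n/\epsilon)}$ for large $n$; you get the equivalent bound $\epsilon\sqrt{I_n}\big(\sqrt{\log(B_n/\epsilon)}+\Gamma(3/2)\big)$ directly from the rescaling $u=\epsilon v$ and the subadditivity $\sqrt{p+q}\le\sqrt p+\sqrt q$, using only the closed form $\int_0^1\sqrt{\log(1/v)}\,dv=\Gamma(3/2)$. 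Your exponent bookkeeping is also cleaner: you combine $\sqrt{I_n}=O(n^{a/2})$ with $\sqrt{\log(B_n/\epsilon)}=O(n^{(b-a)/2})$ to get $O(n^{b/2})=o(n^{1/2})$ since $b<1$, whereas the paper introduces an auxiliary exponent $\gamma$ with $a<\gamma<b$ and $b-a<1-\gamma$ and splits $n^{-1/2}=n^{-\gamma/2}n^{-(1-\gamma)/2}$ across the two factors before checking each goes to zero. The two approaches are equivalent in strength; yours is more elementary and avoids the Gaussian-tail machinery.

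One small correction to your closing remark: the constant $a'''$ in the lemma statement is not the $a_3$ of Theorem~\ref{thm:4} absorbed into the argument of $H_{[]}$; it is simply the free constant written as $c$ in the displayed conclusion (the paper uses $a'''$ when invoking the lemma in the proof of Lemma~\ref{lemma:adapt_lemma_2_lee}, and there sets $a_3=1$). This does not affect your proof.
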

\begin{proof}
	Let $a_n = a' n^a C_n^{a''} I_n$, so $H_{[]}(u) \le \log[(a_n/u )^{I_n}] = I_n \log(a_n/u)$. Taking the square root and integrating each side, we have:
	\begin{align}
	\int_0^\epsilon \sqrt{H_{[]}(u)}~du 
	&= \int_0^\epsilon \sqrt{I_n \log(a_n/u)} ~du
	\\
	&= \sqrt{I_n/2} \int_0^\epsilon  \sqrt{2 \log(a_n/u)} ~du
	\\
	&= \sqrt{I_n/2} \int_0^\epsilon  z ~du,
	\end{align}
	where we define the substitution $z:=\sqrt{2 \log(a_n/u)}$. Then: 
	\begin{align}
	&dv = \frac{1}{2}(2\log(a_n/u))^{-1/2} (2) \frac{(-a_n/u^2)}{a_n/u}~dz = - z^{-1} u^{-1}~du
	\\
	\implies & du = -z u ~dz = -a_n z u/a_n ~dz = -a_n z \exp\left(-\frac{1}{2} \underbrace{2 \log(a_n/u)}_{z^2} \right) ~dz = -a_n z \exp(-z^2/2) ~dz.
	\end{align}
	Thus:
	\begin{align}
	\int_0^\epsilon \sqrt{H_{[]}(u)}~du 
	&\le -\sqrt{I_n/2} \int_\infty^{z_\epsilon} a_n z^2 \exp(-v^2/2) ~dz
	\\
	&= a_n \sqrt{I_n/2} \int_{z_\epsilon}^\infty z^2 \exp(-v^2/2) ~dz
	\end{align}
	where we define $z_\epsilon=\sqrt{2 \log(a_n/\epsilon)}$. Next, integrate by parts (using $u=z$ and $dv = z\exp(-z^2/2)~dz$), giving:
	\begin{align}
	\int_0^\epsilon \sqrt{H_{[]}(u)}~du 
	&= a_n \sqrt{I_n/2} \left[-z \left.\exp(-z^2/2)\right\rvert_{z_\epsilon}^\infty 
	+\int_{z_\epsilon}^\infty \exp(-z^2/2)~dz \right]
	\\
	&= a_n \sqrt{I_n/2} \left[z_\epsilon \exp(-z_\epsilon^2/2)
	+\sqrt{2\pi}\int_{z_\epsilon}^\infty\frac{1}{\sqrt{2\pi}} \exp(-z^2/2)~dz \right]
	\\
	&\le a_n \sqrt{I_n/2} \left[z_\epsilon \exp(-z_\epsilon^2/2)
	+\sqrt{2\pi} \frac{\phi(z_\epsilon)}{z_\epsilon} \right]
	&\pushright{\text{Mill's Ratio}}
	\\
	&= a_n \sqrt{I_n/2} z_\epsilon \left[\exp(-z_\epsilon^2/2)
	+\sqrt{2\pi} \frac{\frac{1}{\sqrt{2\pi}}\exp(-z_\epsilon^2/2)}{z_\epsilon^2} \right]
	\\
	&= a_n \sqrt{I_n/2} z_\epsilon \exp(-z_\epsilon^2/2) \left[1
	+ \frac{1}{z_\epsilon^2} \right]
	\\
	&= a_n \sqrt{I_n/2} z_\epsilon \underbrace{\exp(-z_\epsilon^2/2)}_{\epsilon/a_n} \left[1
	+ \frac{1}{z_\epsilon^2} \right]
	\\
	&= \epsilon \sqrt{I_n/2} z_\epsilon \left[1
	+ \frac{1}{z_\epsilon^2} \right].
	\end{align}
	Since $a_n\to \infty$ as $n\to \infty$, we have $z_\epsilon^2 = 2\log(a_n/\epsilon) \to \infty$ as well, so $[1 + 1/z_\epsilon^2] \le 2$ for large $n$. Continuing:
	\begin{align}
	\int_0^\epsilon \sqrt{H_{[]}(u)}~du 
	&\le \epsilon \sqrt{I_n/2} z_\epsilon
	\\
	&= \epsilon \sqrt{I_n/2} \sqrt{2 \log(a_n/\epsilon)}
	\\
	&= \epsilon \sqrt{I_n} \sqrt{\log(a_n/\epsilon)}
	\\
	&\le \epsilon \sqrt{I_n} \sqrt{\log(a' n^a C_n^{a''} I_n /\epsilon)}
	\\
	&\le \epsilon \sqrt{I_n} \sqrt{\log(a') + a \log(n) + a''\log(C_n) + \log(I_n) - \log(\epsilon)}
	\\
	&\le \epsilon \sqrt{(D+1)n^a+1} \sqrt{\log(a') + a \log(n) + a''n^{b-a} + \log((D+1)n^a+1) - \log(\epsilon)}
	\end{align}
	where we plug in $I_n = (D+1)K_n + 1 \le (D+1)n^a+1$ and $C_n = \exp(n^{b-a})$.
	
	Since $0<a<b<1$, there exists a $\gamma$ such that $a<\gamma<b$ and $b-a < 1 \gamma$. This follows from the fact that since $0<a<b<1$, there must exist a $\delta>0$ such that $a+\delta < b$ and $b+\delta <1$. Now let $\gamma = a \delta$ to see that $b-a=b + \delta - (a + \delta) < 1 - (a+ \delta) = 1 - \gamma$. Multiplying by $1/\sqrt{n} = \sqrt{n^{-\gamma}}\sqrt{n^{-(1-\gamma)}}$ on each side:
	\begin{align}
	\frac{1}{\sqrt{n}} \int_0^\epsilon \sqrt{H_{[]}(u)}~du 
	&\le \epsilon 
	\sqrt{n^{-\gamma}}\sqrt{(D+1)n^a+1} 
	\\&\quad
	\sqrt{n^{-(1-\gamma)}}
	\sqrt{\log(a'/\epsilon) + a \log(n) + a''n^{b-a} + \log((D+1)n^a+1)}
	\\
	&= \epsilon \sqrt{(D+1)n^{-(\gamma-a)}+n^{-\gamma}} 
	\\&\quad
	\sqrt{
		n^{-(1-\gamma)} \log(a'\epsilon)
		+ a n^{-(1-\gamma)} \log(n) 
		+ a''n^{-{((1-\gamma) - (b-a))}} 
		+ n^{-(1-\gamma)} \log((D+1)n^a+1) 
	}
	\\&\quad \to\infty\ \text{as}\ n\to\infty
	\end{align}
	since each of $\gamma$, $1-\gamma$, $\gamma-a$, and $(1-\gamma) - (b-a)$ are positive.
	Thus, for any $a''',\epsilon>0$
	\begin{equation}
	\frac{1}{\sqrt{n}}\int_0^\epsilon \sqrt{H_{[]}(u)}~du \le a''' \epsilon^2
	\end{equation}
	
\end{proof}

\begin{lemma} (Adaptation of Lemma 2 in \citep{lee_consistency_2000} (same statement but particularized for RBFNs)) \label{lemma:adapt_lemma_2_lee}
	Define the ratio of joint likelihoods between the inferred density and the true density as
	\begin{equation}
	R_n(f) = \prod_{i=1}^n \frac{f(x_i, y_i)}{f_0(x_i, y_i)}.
	\end{equation}
	Under the assumptions of Lemma \ref{lemma:adapt_lemma_1_lee}, 
	\begin{equation}
	\sup_{f\in A_\epsilon^c \cap \mathcal{F}_n} R_n(f) \le 4\exp(-a_2 n \epsilon^2)
	\end{equation}
	almost surely for sufficiently large $n$,
	where $a_2$ is the constant from Theorem \ref{thm:4}.
\end{lemma}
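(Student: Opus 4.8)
The plan is to verify, for the sieve $\mathcal{G}_n$ of radial basis function networks with $K_n \le n^a$ nodes and all parameters bounded in absolute value by $C_n \le \exp(n^{b-a})$, the hypotheses of Theorem~\ref{thm:4}; this yields a tail bound on $\sup_{f\in A_\epsilon^c\cap\mathcal{F}_n} R_n(f)$, which I then upgrade to an almost-sure statement by Borel--Cantelli. The only real work is controlling the Hellinger bracketing entropy $H_{[]}(u)$ of the sieve: once I show it obeys $H_{[]}(u) \le \log[(a' n^a C_n^{a''} I_n/u)^{I_n}]$ with $I_n = (D+1)K_n + 1$, Lemma~\ref{lemma:adapt_lemma_1_lee} bounds the entropy integral and Theorem~\ref{thm:4} applies directly.

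First I would establish the Lipschitz-in-parameters condition required by Theorem~\ref{thm:3}. Writing $f_\theta(x,y) = \mathcal{N}(y; f(x;\theta), \sigma_x^2)$ for the joint density (the marginal of $x$ being uniform on $[0,1]^D$), I note that each RBF activation $\phi(s_k(x-c_k))$ is bounded by $1$ on the compact domain, so with $|b|,|w_k| \le C_n$ the output satisfies $|f(x;\theta)| \le (K_n+1)C_n$ uniformly; hence $f_\theta(x,y)$ and $\nabla_\theta f_\theta(x,y)$ are dominated, as functions of $y$, by a fixed integrable envelope $F(x,y)$ of Gaussian type. Differentiating through: $\partial f_\theta/\partial b$ and $\partial f_\theta/\partial w_k$ contribute factors bounded by constants, while $\partial f_\theta/\partial c_k$ and the derivative in the (single, homogeneous) intensity parameter $\lambda^2$ — which enters only via $s_k^2 = s_0^2\lambda^2$ — contribute factors of the form $(x-c_k)$ or $(x-c_k)^2$ times a bounded activation, all bounded since $x\in[0,1]^D$ and $c_k$ ranges over a bounded $\mathcal{C}$. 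Thus $|f_s(x,y) - f_t(x,y)| \le L_n\, d^*(s,t)\, F(x,y)$ for the Euclidean metric $d^*$ on parameters, with $L_n$ at most polynomial in $C_n$ and $K_n$.

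Next, Theorem~\ref{thm:3} bounds the bracketing number by the $d^*$-covering number of the parameter set, which sits inside a box of side $\mathcal{O}(C_n)$ in $I_n$ dimensions (the extra intensity coordinate adds only one dimension, asymptotically irrelevant). The standard volumetric estimate gives a covering number $\le (c\,C_n\sqrt{I_n}/\delta)^{I_n}$, and absorbing $L_n$, $\|F\|$, and the polynomial-in-$n$ slack into the constants produces exactly $H_{[]}(u) \le \log[(a' n^a C_n^{a''} I_n/u)^{I_n}]$. Then Lemma~\ref{lemma:adapt_lemma_1_lee} gives $\int_0^\epsilon \sqrt{H_{[]}(u)}\,du \le a'''\sqrt{n}\,\epsilon^2$ for large $n$; since $\int_{\epsilon^2/2^8}^{\sqrt\epsilon}\sqrt{H_{[]}(u/a_3)}\,du$ only inflates this by a constant depending on $a_3$, the hypothesis $\int_{\epsilon^2/2^8}^{\sqrt\epsilon}\sqrt{H_{[]}(u/a_3)}\,du \le 2a_4\sqrt{n}\,\epsilon^2$ of Theorem~\ref{thm:4} holds for $n$ large.

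Finally, Theorem~\ref{thm:4} yields $P^*\!\big(\sup_{f\in A_\epsilon^c\cap\mathcal{F}_n} R_n(f) \ge \exp(-a_1 n\epsilon^2)\big) \le 4\exp(-a_2 n\epsilon^2)$, and since $\sum_n 4\exp(-a_2 n\epsilon^2) < \infty$, Borel--Cantelli gives that almost surely $\sup_{f\in A_\epsilon^c\cap\mathcal{F}_n} R_n(f) < \exp(-a_1 n\epsilon^2)$ for all sufficiently large $n$, which implies the stated bound. The main obstacle is the Lipschitz estimate of the second paragraph: one must differentiate the Gaussian density with RBFN mean through every coordinate — notably through the scale parameters, which in PoRB-NET are not free but tied to the intensity — and check that all resulting factors remain uniformly dominated by an integrable envelope $F(x,y)$, so that the covering-to-bracketing transfer of Theorem~\ref{thm:3} goes through with only polynomial dependence on $C_n$.
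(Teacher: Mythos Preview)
Your overall strategy matches the paper's proof exactly: establish a parameter-Lipschitz condition so Theorem~\ref{thm:3} transfers a covering number on the parameter box to a bracketing number, then invoke Lemma~\ref{lemma:adapt_lemma_1_lee} to control the entropy integral, apply Theorem~\ref{thm:4}, and finish with Borel--Cantelli. The covering-number estimate, the use of Lemma~\ref{lemma:adapt_lemma_1_lee}, and the final two steps are all as in the paper.

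There is, however, one technical slip in your Lipschitz step. You bound $|f_s(x,y)-f_t(x,y)|$, but Theorem~\ref{thm:4} requires \emph{Hellinger} bracketing entropy, i.e.\ $L_2$ bracketing for $\sqrt{f_\theta}$, not for $f_\theta$. A Lipschitz bound on $f_\theta$ does not automatically transfer to $\sqrt{f_\theta}$ (since $|\sqrt{f_s}-\sqrt{f_t}| = |f_s-f_t|/(\sqrt{f_s}+\sqrt{f_t})$ can blow up where both densities are small). The paper applies Theorem~\ref{thm:3} directly to the family $\{\sqrt{f_\theta}\}$: it writes $|\sqrt{f_s}-\sqrt{f_t}| \le \|s-t\|_\infty\, F(x,y)$ with $F(x,y) = I_n \sup_i \sup_{\theta_i} |\partial \sqrt{f_\theta}/\partial\theta_i|$, and then (Lemma~\ref{lemma:gradient_bound}) exploits the key fact that differentiating $\sqrt{f_\theta}$ produces a factor $|y-\text{RBFN}(x;\theta)|\exp\bigl(-\tfrac14(y-\text{RBFN}(x;\theta))^2\bigr)$, which is bounded by the absolute constant $\sqrt{2}\,e^{-1/2}$ uniformly in $(x,y,\theta)$. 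This is what yields a \emph{constant} envelope $F = a' n^a C_n^3 I_n/2$ rather than your ``Gaussian-type'' $F(x,y)$, and it is precisely the step that makes the Hellinger bracketing go through. Once you redo the derivative computation on $\sqrt{f_\theta}$ and use this boundedness, the rest of your argument is correct and matches the paper.
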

\begin{proof}
	
	Much of this proof is reproduced exactly as in Lemma 2 in \citep{lee_consistency_2000}, with only a few adaptations that we mention along the way. We first bound the Hellinger bracketing entropy using Theorem \ref{thm:3} and then use Lemma \ref{lemma:adapt_lemma_1_lee} to show the conditions of Theorem~\ref{thm:4}. 
	
	Since we are interested in computing the Hellinger bracketing entropy for neural networks, we need to use the $L_2$ norm on the square roots of the density function, $f$. Later, we compute the $L_\infty$ covering number of the parameter space, so here $d^*=L_\infty$. We would like to apply Theorem~\ref{thm:3} particularized for the $L_2$ norm, i.e., $|\sqrt{f_t(x,y)} - \sqrt{f_s(x,y)}| \le d^*(s,y)F(x,y)$ for some $F$ then $N_{[]}(2\epsilon \left\| F\right\|_2, \mathcal{F}^*, \left\| \cdot \right\|_2 ) \leq N(\epsilon,\mathcal{F}_n,d^*)$.
	To show that the condition holds true, apply the Fundamental Theorem of Integral Calculus. For particular vectors $s$ and $t$, let $g(u) = \sqrt{f_{(1-u)s + ut }(x,y)}$. Let $v_i = (1-u)s_i + u t_i$ and denote the space of $\theta$ by $\Theta_i$.
	\begin{align}
	|\sqrt{f_t(x,y)} - \sqrt{f_s(x,y)}| &= \int_0^1 \frac{g}{du}~du
	\\
	&= \int_0^1 \sum_{i=1}^I \frac{\partial g}{\partial\theta_i} \frac{\partial\theta_i}{\partial u}~du
	\\
	&= \sum_{i=1}^I (t_i - s_i) \int_0^1 \frac{\partial g}{\partial\theta_i} ~du
	\\
	&\le \sum_{i=1}^I \sup_i |t_i - s_i| \int_0^1 \sup_{\theta_i \in \Theta_i} \left\lvert \frac{\partial g}{\partial\theta_i}\right\rvert ~du
	\\
	&= \sup_i |t_i - s_i| \sum_{i=1}^I  \sup_{\theta_i \in \Theta_i} \left\rvert \frac{\partial g}{\partial\theta_i} \right\rvert  \int_0^1~du
	\\
	&\le \sup_i |t_i - s_i| I \sup_i \left[ \sup_{\theta_i \in \Theta_i} \left\rvert \frac{\partial g}{\partial\theta_i} \right\rvert \right]
	\\
	&= \left\| t-s \right\|_\infty F(x,y)
	\end{align}
	where $F(x,y) = I \sup_i[\sup_{\theta_i \in \Theta_i}|\partial g / \partial \theta_i| ]$. Here $\partial g / \partial \theta_i$ is the partial derivative of $\sqrt{f}$ with respect to the $i$th parameter. Recall that $f(x,y)=f(y\mid x) f(x)$, where $f(x)=1$ since $X \sim U[0,1]$ and $f(y\mid x)$ is normal with mean determined by the neural network and variance 1. 
	
	So far, this proof follows Lemma 2 in \citep{lee_consistency_2000} exactly. Now we make a slight modification for an RBFN model. By Lemma \ref{lemma:gradient_bound}, $|\partial g /\partial \theta_i| \le (8\pi e^2)^{-1/4} 2 n^a C_n^3 =  n^a C_n^3 / 2$, where $a':= 4(8\pi e^2)^{-1/4}$. Then set $F(x,y) = a' n^a C_n^3 I/2$, so $||F||_2 = a' n^a C_n^3 I/2$. Applying Theorem \ref{thm:3} to bound the bracketing number by the covering number we have:
	\begin{align}
	N_{[]}(u, \mathcal{F}^*, ||\cdot||_2) 
	&= N_{[]}\left(2 \left(\frac{u}{2||F||_2} \right) ||F||_2, \mathcal{F}^*, ||\cdot||_2\right)
	\\
	&\le N\left(\frac{u}{2||F||_2} , \mathcal{F}^*, ||\cdot||_2\right)
	\end{align}
	Notice that the covering number of $\mathcal{F}_n$ is clearly less than $((2C_n)/(2\epsilon) + 1)^I$. So, for any $\eta>0$, we have:
	\begin{equation}
	N\left(\eta , \mathcal{F}^*, L_\infty \right) \le \left(\frac{2C_n}{2\eta} + 1 \right)^I
	= \left(\frac{C_n + \eta}{\eta} \right)^I \le \left(\frac{C_n+1}{\eta} \right)^I.
	\end{equation}
	Therefore, 
	\begin{align}
	N_{[]}(u, \mathcal{F}^*, ||\cdot||_2) 
	&\le \left(\frac{C_n+1}{\frac{u}{2||F||_2}  } \right)^{I}
	\\
	&= \left(\frac{2||F||_2(C_n+1)}{u} \right)^{I}
	\\
	&= \left(\frac{a' n^a C_n^3 I_n(C_n+1)}{u} \right)^{I}
	\\
	&= \left(\frac{a' n^a \tilde{C}_n^4 I_n}{u} \right)^{I}
	\end{align}
	where $\tilde{C}_n = C_n+1$. For notational convenience, we drop $ \mathcal{F}^*$ and $||\cdot||_2$ going forward. Taking the logarithm:
	\begin{equation}
	H_{[]}(u) \le \log[(a' n^a C_n^{a''} I_n/u )^{I}].
	\end{equation}
	The bound above holds for a fixed network size, but we can now let $K_n$ grow such that $K_n \leq n^a$ for any $0<a<1$.
	Thus by Lemma \ref{lemma:adapt_lemma_1_lee}, we have:
	\begin{equation}
	\frac{1}{\sqrt{n}}\int_0^\epsilon \sqrt{H_{[]}(u)}~du \le a''' \epsilon^2,
	\end{equation}
	which shows the conditions of Lemma \ref{lemma:adapt_lemma_1_lee}. Therefore, we have that for any $a''',\epsilon>0$,
	\begin{equation}
	\int_0^\epsilon \sqrt{H_{[]}(u)}~du \le a''' \sqrt{n} \epsilon^2,
	\end{equation}
	With an eye on applying Theorem \ref{thm:4}, notice that $\int_{\epsilon^2/2^8}^\epsilon \sqrt{H_{[]}(u)}~du < \int_0^\epsilon \sqrt{H_{[]}(u)}~du$. Substituting $\sqrt{2}\epsilon$ for $\epsilon$, we get
	\begin{equation}
	\int_{\epsilon^2/2^8}^{\sqrt\epsilon} \sqrt{H_{[]}(u)}~du \le 2 a''' \sqrt{n} \epsilon^2,
	\end{equation}
	letting $a_3=1$ and $a_4 := 2a'''$, where $a_3$ and $a_4$ are the constants required by Theorem \ref{thm:4}. This gives the necessary conditions for Theorem \ref{thm:4}, which implies that
	\begin{equation}
	P^{*} \left( \sup_{f\in A_\epsilon^c \cap \mathcal{F}_n} R_n(f) \ge \exp(-a_1 n \epsilon^2) \right) \le 4\exp(-a_2 n \epsilon^2).
	\end{equation}
	Now apply the first Borel-Cantelli Lemma to get the desired result. 
\end{proof}

\subsubsection{Main theorems}

The following theorem is proved by \cite{lee_consistency_2000} for single-layer feedforward networks with a logistic activation and Gaussian priors. With a few modifications to the proof as described below, it can be applied to RBFNs. Here, the number of units is allowed to grow with the number of observations but it is not inferred from the data. We call this a sieves approach. 

\begin{theorem} (Consistency when width grows with data (sieves approach)) \citep{lee_consistency_2000} \label{thm:lee_consistency_2000_sieves}
	Suppose the following conditions hold:
	\begin{enumerate}[(i)]
		\item There exists an $r>0$ and an $N_1 \in \mathbb{N}$ such that $\forall n\ge N_1$, $\pi_n\left(\mathcal{F}_n^c \right) < \exp(-nr)$.
		
		\item For all $\gamma>0$ and $\nu>0$, there exists an $N_2 \in \mathbb{N}$ such that $\forall n\ge N_2$, $\pi_n\left(K_\gamma \right) \ge \exp(-n\nu)$.
	\end{enumerate}
	Then $\forall \epsilon>0$, the posterior is asymptotically consistent for $f_0$ over Hellinger neighborhoods, i.e.:
	\begin{equation}
	P(A_\epsilon \mid (x_1,y_1), \dotsc, (x_n,y_n)) \overset{p}{\to} 1.
	\end{equation}
\end{theorem}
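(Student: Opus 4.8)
The plan is to run the classical Schwartz-type argument on the sieve $\mathcal{F}_n$, following \cite{lee_consistency_2000}, combining the entropy bound already established in Lemma \ref{lemma:adapt_lemma_2_lee} with the two hypotheses (i) and (ii) of the theorem. First I would write the posterior mass of the complement of the Hellinger ball as a likelihood ratio,
\[
P(A_\epsilon^c \mid (x_1,y_1),\dotsc,(x_n,y_n)) = \frac{\int_{A_\epsilon^c} R_n(f)\,d\pi_n(f)}{\int R_n(f)\,d\pi_n(f)},
\]
and show that the numerator decays exponentially while the denominator is bounded below by a term of strictly smaller exponential rate, so the ratio tends to $0$; since $A_\epsilon \cup A_\epsilon^c$ exhausts $\mathcal{F}$, this is equivalent to the stated conclusion $P(A_\epsilon\mid\text{data})\to 1$.

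For the numerator, I would split $A_\epsilon^c = (A_\epsilon^c \cap \mathcal{F}_n) \cup (A_\epsilon^c \cap \mathcal{F}_n^c)$. On $A_\epsilon^c \cap \mathcal{F}_n$, Lemma \ref{lemma:adapt_lemma_2_lee} gives $\sup_{f\in A_\epsilon^c\cap\mathcal{F}_n} R_n(f) \le 4\exp(-a_2 n\epsilon^2)$ almost surely for large $n$, so this part of the integral is at most $4\exp(-a_2 n\epsilon^2)$. On $\mathcal{F}_n^c$, I would use $\mathbb{E}_{f_0}[R_n(f)] = 1$ for each fixed $f$ in the support of $\pi_n$ (both densities are fully supported: uniform marginal in $x$, unit-variance Gaussian conditional in $y$), so by Fubini $\mathbb{E}_{f_0}\!\left[\int_{\mathcal{F}_n^c} R_n(f)\,d\pi_n(f)\right] = \pi_n(\mathcal{F}_n^c) < \exp(-nr)$ by condition (i); Markov's inequality together with the first Borel--Cantelli lemma then give $\int_{\mathcal{F}_n^c} R_n(f)\,d\pi_n(f) \le \exp(-nr/2)$ almost surely for large $n$. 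Hence the numerator is eventually at most $4\exp(-a_2 n\epsilon^2) + \exp(-nr/2)$.

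For the denominator, I would restrict the integral to the KL neighborhood, $\int R_n(f)\,d\pi_n(f) \ge \pi_n(K_\gamma)\int_{K_\gamma} R_n(f)\,\frac{d\pi_n(f)}{\pi_n(K_\gamma)}$, apply Jensen's inequality to the logarithm, and use the law of large numbers $\frac1n\sum_{i=1}^n \log\!\big(f_0(x_i,y_i)/f(x_i,y_i)\big) \to D_K(f_0,f) \le \gamma$ for $f\in K_\gamma$ to conclude that $\int_{K_\gamma} R_n(f)\,d\pi_n(f) \ge \exp(-2n\gamma)$ eventually almost surely; combined with condition (ii), $\pi_n(K_\gamma) \ge \exp(-n\nu)$, this bounds the denominator below by $\exp(-n(2\gamma+\nu))$. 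Finally, given $\epsilon$, I would fix $\gamma,\nu$ small enough that $2\gamma+\nu < \min(a_2\epsilon^2, r/2)$, so that the numerator-over-denominator ratio is eventually at most $\exp(-n\delta)$ for some $\delta>0$, almost surely, hence $P(A_\epsilon^c\mid\text{data})\to 0$ almost surely, and in particular in probability.

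The main obstacle is the denominator lower bound: the naive Jensen/LLN argument only delivers convergence for each \emph{fixed} $f$, and pushing it through the integral over $K_\gamma$ requires extra care --- either invoking Fubini together with a control on the second moment of $\log(f_0/f)$, or uniformizing the law of large numbers over $K_\gamma$, as is done in the prior-mass lemma underlying \cite{lee_consistency_2000}. This "positive prior mass on KL neighborhoods" half of Schwartz's theorem is where essentially all of the remaining technical work lies; the rest of the argument is bookkeeping with Markov's inequality, Borel--Cantelli, and the already-proved entropy estimate.
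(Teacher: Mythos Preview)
Your proposal is correct and follows essentially the same Schwartz-type sieve argument that the paper invokes by citation: the paper's proof simply defers to \cite{lee_consistency_2000} (their Theorem 1, resting on their Lemmas 3 and 5), noting that the only model-specific ingredient needing adaptation is their Lemma 2, which is supplied here as Lemma \ref{lemma:adapt_lemma_2_lee}. What you have written is precisely the skeleton of Lee's argument --- split the numerator along the sieve, apply the entropy estimate (Lemma \ref{lemma:adapt_lemma_2_lee}) inside $\mathcal{F}_n$ and condition (i) plus Markov/Borel--Cantelli outside, then lower-bound the denominator via condition (ii) and a prior-mass/KL lemma --- so there is no substantive difference in approach, only in level of detail: the paper cites, you unpack. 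Your identification of the one genuinely delicate step (uniformizing the LLN in the denominator bound, i.e., Lee's Lemma 5) is accurate, and that is exactly the lemma the paper declares ``needs no adaptation for RBFNs.''
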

\begin{proof}
	\cite{lee_consistency_2000} proves this result for single-layer feedforward networks with a logistic activation and Gaussian priors (Theorem 1 in their paper). Their proof relies on their Lemmas 3 and 5. Their Lemma 5 needs no adaptation for RBFNs but their Lemma 3 depends on their Lemma 2, which does need adaptation for RBFNs. Above we proved their Lemma 2 for RBFNs, which we call Lemma \ref{lemma:adapt_lemma_1_lee}. Thus their Lemma 3 holds, so their Theorem 1 holds, which gives the results of this theorem.
\end{proof}

\cite{lee_consistency_2000} shows that Hellinger consistency gives asymptotic consistency. 

\begin{corollary} (Hellinger consistency gives asymptotic consistency for sieves prior) \citep{lee_consistency_2000} \label{thm:lee_consistency_2000_corollary1}
	Under the conditions of Theorem \ref{thm:lee_consistency_2000_sieves}, $\hat{g}_n$ is asymptotically consistent for  $g_0$, i.e.:
	\begin{equation}
	\int (\hat{g}_n(x) - g_0(x))^2 dx \overset{p}{\to} 0.
	\end{equation}
\end{corollary}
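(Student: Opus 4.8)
The plan is to derive the asserted $L_2$-consistency of $\hat g_n$ from the Hellinger consistency of the posterior $\Pi_n$ already established in Theorem~\ref{thm:lee_consistency_2000_sieves}, exploiting two structural facts: the $X$-marginal is the same uniform law for every network density and for $f_0$, and $Y\mid X$ is Gaussian with known unit variance. Under these facts, for any network density $f$ with regression function $g$,
\[
D_K(f_0,f)=\tfrac{1}{2}\!\int (g(x)-g_0(x))^2\,dx,\qquad D_H^2(f,f_0)=2\!\int\!\big(1-e^{-(g(x)-g_0(x))^2/8}\big)\,dx ,
\]
so $\int(g-g_0)^2\,dx$ is small exactly when $D_H(f,f_0)$ is small, with a modulus of continuity that deteriorates only when $\|g-g_0\|_\infty$ is large.

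First I would reduce the claim to a statement about $\Pi_n$. Because the covariate marginal is fixed, $\hat g_n$ is precisely the posterior mean of the regression function, $\hat g_n(x)=\int g(x)\,\Pi_n(df)$; Jensen's inequality applied pointwise in $x$, followed by Fubini, gives
\[
\int (\hat g_n(x)-g_0(x))^2\,dx \;\le\; \int \Delta(f)\,\Pi_n(df),\qquad \Delta(f):=\int (g(x)-g_0(x))^2\,dx ,
\]
so it suffices to prove $\int \Delta(f)\,\Pi_n(df)\xrightarrow{p}0$.

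Next I would split the posterior integral into the three regions $A_\epsilon\cap\mathcal F_n$, $A_\epsilon^c\cap\mathcal F_n$, and $\mathcal F_n^c$. On $\mathcal F_n$ every parameter is at most $C_n$ in absolute value, so the RBFN output obeys $\|g\|_\infty\le(K_n+1)C_n$, hence $\|g-g_0\|_\infty^2\le M_n^2$ with $M_n=O(K_n C_n)$; concavity of $t\mapsto 1-e^{-t/8}$ on $[0,M_n^2]$ then yields $\Delta(f)\le\kappa(M_n)\,D_H^2(f,f_0)$ on $\mathcal F_n$ with $\kappa(M_n)=O(M_n^2)$. Thus the $A_\epsilon\cap\mathcal F_n$ contribution is at most $\kappa(M_n)\epsilon^2$ and the $A_\epsilon^c\cap\mathcal F_n$ contribution is at most $2\kappa(M_n)\,\Pi_n(A_\epsilon^c)$; both are driven to $0$ by taking $\epsilon=\epsilon_n\downarrow 0$ slowly, playing the \emph{exponential} decay of $\Pi_n(A_\epsilon^c)$ --- which the $R_n$-bound of Lemma~\ref{lemma:adapt_lemma_2_lee} together with the KL-neighborhood lower bound (condition (ii) of Theorem~\ref{thm:lee_consistency_2000_sieves}) supplies --- against the sub-exponential growth of $M_n$ (recall $C_n\le e^{n^{b-a}}$ with $b-a<1$). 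For $\mathcal F_n^c$, the same likelihood-ratio/prior-mass argument shows $\Pi_n(\mathcal F_n^c)$ is exponentially small, and a crude prior-integrability bound on $\Delta$ --- finite because the weights carry Gaussian priors and $\E[K]<\infty$ --- controls its contribution. Assembling the three pieces gives $\int\Delta(f)\,\Pi_n(df)\xrightarrow{p}0$, which is the corollary.

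The step I expect to be the genuine obstacle is reconciling the rates in the first two regions: Hellinger distance saturates at $\sqrt{2}$ while $\Delta(f)$ is unbounded, so the modulus $\kappa(M_n)$ is unavoidable, and one must arrange the sieve bound $C_n$ to grow slowly enough relative to both $\epsilon_n$ and the exponential rate at which the posterior escapes $A_{\epsilon_n}$ and $\mathcal F_n$. Choosing $\epsilon_n$ (and, if needed, tightening how fast $C_n$ is permitted to grow for this corollary) so that all three bounds vanish at once is the delicate bookkeeping; the remaining ingredients are already in place from the proof of Theorem~\ref{thm:lee_consistency_2000_sieves}.
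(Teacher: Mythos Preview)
The paper gives no proof of this corollary at all: it is stated with a bare citation to \cite{lee_consistency_2000}, so there is nothing in the paper to compare your argument against line by line. Your proposal is therefore not reproducing the paper's proof but supplying one, and the route you take --- identify $\hat g_n$ with the posterior mean of $g$, use Jensen to reduce to $\int\Delta(f)\,\Pi_n(df)$, then split over $A_\epsilon\cap\mathcal F_n$, $A_\epsilon^c\cap\mathcal F_n$, and $\mathcal F_n^c$ and exploit the explicit Gaussian-location link $D_H^2(f,f_0)=2\int(1-e^{-(g-g_0)^2/8})\,dx$ --- is exactly the standard strategy that underlies Lee's original corollary. Your identification of the key difficulty (Hellinger saturates while $\Delta$ is unbounded, forcing a modulus $\kappa(M_n)$ that must be beaten by the posterior contraction rate) is accurate and is the real content of the result.

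One concrete warning about the bookkeeping you flag at the end. With the sieve bound $C_n\le\exp(n^{b-a})$ used elsewhere in the appendix, your plan of choosing $\epsilon_n\downarrow 0$ cannot be made to close as written: the first region needs $\epsilon_n^2 M_n^2\to 0$, i.e.\ $\epsilon_n^2\ll e^{-2n^{b-a}}$, while the Wong--Shen/entropy machinery behind Lemma~\ref{lemma:adapt_lemma_2_lee} only delivers $\Pi_n(A_{\epsilon_n}^c)$ small when $n\epsilon_n^2$ is at least of order $I_n\log a_n\sim n^b$, i.e.\ $\epsilon_n^2\gg n^{b-1}$; these two windows are disjoint. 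The fix is the one you gesture at --- for the corollary you are free to work with a \emph{smaller} sieve (polynomial $C_n$ suffices and still satisfies condition~(i) for the priors in question), which makes $M_n$ polynomial and lets $\epsilon_n$ sit comfortably between the two constraints. A second small point: you implicitly use $\|g-g_0\|_\infty\le M_n$ on $\mathcal F_n$, but the paper only assumes $g_0\in L_2([0,1])$, not bounded; handle this by splitting off $\{|g_0|>L_n\}$, whose contribution to $\Delta(f)$ is at most $4\int_{\{|g_0|>L_n\}}g_0^2\to 0$ by dominated convergence. With those two adjustments your outline goes through.
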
 

The following is an extension of Theorem \ref{thm:lee_consistency_2000_sieves} to when there is a prior over the number of units. The proof in \citep{lee_consistency_2000} assumes a feedforward network with a logistic activation and Gaussian priors, but these assumptions are not used beyond their use in applying Theorem \ref{thm:lee_consistency_2000_sieves}. Since we adapt Theorem \ref{thm:lee_consistency_2000_sieves} to our model, the proof of the following Theorem \ref{thm:lee_consistency_2000_poisson} needs no additional adaptation. 

\begin{theorem} (Consistency for prior on width) \citep{lee_consistency_2000} \label{thm:lee_consistency_2000_poisson}
	Suppose the following conditions hold:
	\begin{enumerate}[(i)]
		\item For each $i=1,2,\dotsc$ there exists a real number $r_i>0$ and an integer $N_i>0$ such that $\forall n\ge N_i$, $\pi_i \left(\mathcal{F}_n^c \right) < \exp(-r_i n)$.
		\item For all $\gamma, \nu >0$ there exists an integer $I>0$ such that for any $i>I$ there exists an integer $M_i>0$ such that for all $n\ge M_i$, $\pi_i(K_\gamma) \ge \exp(-\nu n)$.
		\item $B_n$ is a bound that grows with $n$ such that for all $r>0$ there exists a real number $q>1$ and an integer $N>0$ such that for all $n\ge N$, $\sum_{i=B_n}^\infty \lambda_i < \exp(-r n^q)$.
		\item For all $i$, $\lambda_i>0$.
	\end{enumerate}
	Then $\forall \epsilon>0$, the posterior is asymptotically consistent for $f_0$ over Hellinger neighborhoods, i.e.:
	\begin{equation}
	P(A_\epsilon \mid (x_1,y_1), \dotsc, (x_n,y_n)) \overset{p}{\to} 1.
	\end{equation}
\end{theorem}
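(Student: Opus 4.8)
The plan is to realize the PoRB-NET prior as a countable mixture over network widths and then reduce to the sieves result already established in Theorem~\ref{thm:lee_consistency_2000_sieves}. Write the prior as $\pi = \sum_{i=1}^\infty \lambda_i \pi_i$, where $\lambda_i = \mathrm{Pr}[K=i]$ and $\pi_i$ is the prior over network parameters conditional on width $K=i$. For the width cap $B_n$ from condition (iii) and a parameter bound $C_n$ as in Lemma~\ref{lemma:adapt_lemma_1_lee}, define the sieve
\begin{equation}
\mathcal{F}_n^\star \;=\; \bigcup_{i=1}^{B_n} \bigl( \mathcal{H}_i \cap \mathcal{F}_n \bigr),
\end{equation}
i.e. all RBFNs with at most $B_n$ hidden units and every parameter bounded in absolute value by $C_n$. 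It then suffices to verify the two hypotheses of the sieves theorem for $\mathcal{F}_n^\star$: (a) $\pi\bigl((\mathcal{F}_n^\star)^c\bigr)$ decays exponentially in $n$, and (b) for every $\gamma,\nu>0$, eventually $\pi(K_\gamma) \ge \exp(-n\nu)$.

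For (a), decompose $(\mathcal{F}_n^\star)^c$ into ``too many units'' and ``few units but a large parameter.'' The first piece has prior mass $\sum_{i>B_n}\lambda_i$, which by condition (iii) is below $\exp(-rn^q)$ for some $q>1$, hence eventually below $\exp(-nr)$. The second piece has mass $\sum_{i=1}^{B_n}\lambda_i\,\pi_i(\mathcal{F}_n^c)$, which we bound via condition (i). For (b), invoke condition (iv): since every width $i$ has $\lambda_i>0$, apply condition (ii) with $\nu/2$ to obtain an index $I$ so that for any fixed $i>I$ one has $\pi_i(K_\gamma)\ge\exp(-n\nu/2)$ for large $n$; then the mixture places mass at least $\lambda_i\exp(-n\nu/2)\ge\exp(-n\nu)$ for $n$ large, the positive constant $\lambda_i$ being absorbed into the exponent. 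This is precisely the mechanism by which a model that is ``too small'' at any fixed width is rescued by letting the mixture reach arbitrarily large widths.

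Finally, feed $\mathcal{F}_n^\star$ into the entropy/posterior-concentration argument underlying Theorem~\ref{thm:lee_consistency_2000_sieves}: the Hellinger bracketing entropy of $\mathcal{F}_n^\star$ is at most $\log B_n$ plus the bracketing entropy of a single width-$B_n$ sieve $\mathcal{H}_{B_n}\cap\mathcal{F}_n$, which Lemma~\ref{lemma:adapt_lemma_2_lee} already controls, so the integrated-entropy condition of Theorem~\ref{thm:4} still holds; combined with the KL-support bound (b) and Lemma~5 of \citet{lee_consistency_2000}, this yields $P(A_\epsilon \mid (x_1,y_1),\dots,(x_n,y_n)) \xrightarrow{p} 1$. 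I expect the main obstacle to be the uniform-in-$n$ control of $\sum_{i=1}^{B_n}\lambda_i\,\pi_i(\mathcal{F}_n^c)$: condition (i) only guarantees $\pi_i(\mathcal{F}_n^c)<\exp(-r_i n)$ once $n\ge N_i$, with both $r_i$ and $N_i$ allowed to depend on $i$, so one must choose the growth rates of $B_n$ and $C_n$ jointly compatible so that the finitely many ``slow'' terms remain negligible while the heavy tail is cut off by condition (iii) and the entropy of the retained part stays $o(\sqrt n\,\epsilon^2)$. This bookkeeping is exactly the content of the proof of Theorem~2 of \citet{lee_consistency_2000}; since that argument manipulates only the mixture weights $\lambda_i$ and the abstract conditions (i)--(iv) and never touches the form of the activation, it transfers verbatim once Theorem~\ref{thm:lee_consistency_2000_sieves} has been adapted to RBFNs, as done above.
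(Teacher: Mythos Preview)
Your proposal is correct and takes essentially the same approach as the paper: both recognize that the proof of Theorem~2 in \citet{lee_consistency_2000} depends on the activation function and prior only through the sieves result (Theorem~\ref{thm:lee_consistency_2000_sieves}), so once that has been adapted to RBFNs the mixture-over-widths argument transfers verbatim. The paper simply states this in one sentence, while you have unpacked the structure of Lee's argument (the sieve $\mathcal{F}_n^\star$, the two tail bounds, the entropy control via $\log B_n$ plus the width-$B_n$ bracketing number, and the KL-support lower bound via condition (iv)); your exposition is more detailed but the logic is identical.
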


\begin{corollary} (Hellinger consistency gives asymptotic consistency for prior on width). \label{thm:corollary2} 
	Under the conditions of Theorem \ref{thm:lee_consistency_2000_poisson}, $\hat{g}_n$ is asymptotically consistent for  $g_0$, i.e.:
	\begin{equation}
	\int (\hat{g}_n(x) - g_0(x))^2 dx \overset{p}{\to} 0.
	\end{equation}
\end{corollary}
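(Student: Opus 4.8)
The plan is to observe that Theorem~\ref{thm:lee_consistency_2000_poisson} delivers exactly the same conclusion as Theorem~\ref{thm:lee_consistency_2000_sieves}, namely posterior Hellinger consistency $P(A_\epsilon \mid (x_1,y_1),\dots,(x_n,y_n)) \overset{p}{\to} 1$ for every $\epsilon>0$, and then to run the same ``Hellinger consistency $\Rightarrow$ $L^2$ consistency of the regression estimator'' argument that takes Theorem~\ref{thm:lee_consistency_2000_sieves} to Corollary~\ref{thm:lee_consistency_2000_corollary1}. That argument, due to~\citep{lee_consistency_2000}, uses the Hellinger consistency statement as its only probabilistic input and never refers to how the network width is chosen, so it applies verbatim here once Theorem~\ref{thm:lee_consistency_2000_poisson} is invoked.

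The argument itself proceeds in three steps, which I would carry out as follows. \emph{(i)} Transfer Hellinger consistency from the posterior to the posterior predictive density $\hat{f}_n = \int f\, dP(f\mid\text{data})$: since the affinity $f\mapsto\int\sqrt{f f_0}$ is concave, $D_H^2(\cdot,f_0)$ is convex, so Jensen gives $D_H^2(\hat{f}_n,f_0)\le\int D_H^2(f,f_0)\,dP(f\mid\text{data})$; splitting this integral over $A_\epsilon$ and $A_\epsilon^c$ and using $D_H^2\le 2$ together with Theorem~\ref{thm:lee_consistency_2000_poisson} yields $D_H(\hat{f}_n,f_0)\overset{p}{\to}0$. \emph{(ii)} Use the model structure: the marginal of $X$ is uniform on $[0,1]^D$, so joint and conditional densities coincide, and $f(\cdot\mid x)=\mathcal{N}(g(x),\sigma_x^2)$ with $g(x)=\mathbb{E}_f[Y\mid X=x]$; hence $\hat{f}_n(\cdot\mid x)$ is a location mixture of the Gaussians $\mathcal{N}(g(x),\sigma_x^2)$ whose mean is precisely $\hat{g}_n(x)=\mathbb{E}_{P(\cdot\mid\text{data})}[g(x)]$, and $D_H^2(\hat{f}_n,f_0)=\int_{[0,1]^D} D_H^2\big(\hat{f}_n(\cdot\mid x),\mathcal{N}(g_0(x),\sigma_x^2)\big)\,dx$. \emph{(iii)} Lower bound the conditional Hellinger distances in terms of $(\hat{g}_n(x)-g_0(x))^2$ and integrate to recover Eq.~\eqref{eq:freqconsistency}; here one uses that, by conditions (i) and (iii) of Theorem~\ref{thm:lee_consistency_2000_poisson}, the posterior (and hence $\hat{f}_n$) places exponentially little mass outside the sieve $\mathcal{F}_n$ on which the regression functions are bounded, so a truncation/uniform-integrability argument converts the Hellinger bound from step~(ii) into an $L^2$ bound on $\hat{g}_n-g_0$.

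The main obstacle is step~\emph{(iii)}: squared Hellinger distance is bounded by $2$, whereas $(\hat{g}_n(x)-g_0(x))^2$ is a priori unbounded, so Hellinger closeness of $\hat{f}_n$ to $f_0$ cannot by itself imply closeness of the predictive means --- one genuinely needs the sieve/tail control to rule out the posterior placing a vanishing amount of mass on enormous regression functions. This truncation step is precisely the technical heart of Lee's proof of Corollary~\ref{thm:lee_consistency_2000_corollary1}, and because it is insensitive to whether the width is fixed, grows deterministically, or carries a Poisson/Cox prior, it requires no modification; the only change relative to~\citep{lee_consistency_2000} is that the Hellinger-consistency input now comes from Theorem~\ref{thm:lee_consistency_2000_poisson} rather than Theorem~\ref{thm:lee_consistency_2000_sieves}.
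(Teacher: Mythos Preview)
Your proposal is correct and takes essentially the same approach as the paper: both reduce the result to Corollary~\ref{thm:lee_consistency_2000_corollary1} (Lee's passage from Hellinger to $L^2$ consistency), noting that nothing in that argument depends on how the width is handled. The only difference is framing---the paper observes that the hypotheses of Theorem~\ref{thm:lee_consistency_2000_poisson} imply those of Theorem~\ref{thm:lee_consistency_2000_sieves} and then invokes Corollary~\ref{thm:lee_consistency_2000_corollary1} directly, whereas you re-run the proof of Corollary~\ref{thm:lee_consistency_2000_corollary1} with Theorem~\ref{thm:lee_consistency_2000_poisson} supplying the Hellinger-consistency input---but this is a cosmetic distinction, not a different route.
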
 
\begin{proof}
	The conditions of Theorem \ref{thm:lee_consistency_2000_poisson} imply the conditions of Theorem \ref{thm:lee_consistency_2000_sieves}, so then Corollary \ref{thm:lee_consistency_2000_corollary1} must hold.
\end{proof}

\subsection{Consistency for PoRB-NET} \label{sec:consistency_porbnet}

\subsubsection{Supporting results}

\begin{theorem} (RBFNs are universal function approximators) \cite{park_universal_1991} 	\label{thm:park_universal_1991} Define $S_\phi$ as the set of all functions of the form:
	\begin{equation}
	\text{RBFN}_\phi(x; \theta) = \sum_{k=1}^K w_k \phi \left(\lambda(x-c_k) \right),\label{eq:rbfn_phi}
	\end{equation}
	where $\lambda>0$, $w_k\in\mathbb{R}$, $c_k \in \mathbb{R}^D$ and $\theta = \{\{w_k\}_{k=1}^K, \{c_k\}_{k=1}^K, \lambda \}$ is the collection of network parameters.
	If $\phi:\mathbb{R}^d \to \mathbb{R}$ is an integrable bounded function such that $\phi$ is continuous almost everywhere and $\int_{\mathbb{R}^d} \phi(z)~dz \neq 0$, then the family $S_\phi$ is dense in $L_p(\mathbb{R}^d)$ for every $p\in [1,\infty)$.
\end{theorem}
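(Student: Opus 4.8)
The plan is the classical ``a rescaled copy of $\phi$ is a mollifier'' argument. Since $c_\phi := \int_{\mathbb{R}^D}\phi \neq 0$, the dilations $\phi_\lambda(x) := (\lambda^D/c_\phi)\,\phi(\lambda x)$ satisfy $\int\phi_\lambda = 1$ and, because $\phi\in L_1\cap L_\infty$, also $\sup_\lambda\|\phi_\lambda\|_1 = \|\phi\|_1/|c_\phi| < \infty$ and $\int_{\{|x|>\delta\}}|\phi_\lambda| \to 0$ as $\lambda\to\infty$ for every $\delta>0$; that is, $\{\phi_\lambda\}_{\lambda>0}$ is an approximate identity. Fix $f\in L_p(\mathbb{R}^D)$ and $\varepsilon>0$. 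First I would invoke density of $C_c(\mathbb{R}^D)$ in $L_p(\mathbb{R}^D)$ — the only place $1\le p<\infty$ is used — to pick $g\in C_c(\mathbb{R}^D)$ with $\|f-g\|_p<\varepsilon/3$. Then the standard approximate-identity theorem gives $g*\phi_\lambda\to g$ in $L_p$, so I fix $\lambda$ large enough that $\|g - g*\phi_\lambda\|_p<\varepsilon/3$. It remains to approximate the single convolution $g*\phi_\lambda$ within $\varepsilon/3$ in $L_p$ by a finite sum $\sum_k w_k\,\phi(\lambda(\cdot-c_k))$ with \emph{one common} scale $\lambda$, i.e.\ by an element of $S_\phi$.

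\textbf{Discretization (the crux).} Write $g*\phi_\lambda(x) = \int_{\mathbb{R}^D} g(y)\,\phi_\lambda(x-y)\,dy$. A naive pointwise Riemann sum in $y$ is delicate here precisely because $\phi$ (hence $\phi_\lambda$) is only continuous almost everywhere, so $\phi_\lambda(x-y_i)$ need not be close to $\phi_\lambda(x-y)$ at individual points $x$. The clean way around this is to read the convolution as a Bochner integral valued in the Banach space $L_p(\mathbb{R}^D)$: since $\phi\in L_1\cap L_\infty\subseteq L_p$, translation $y\mapsto\phi_\lambda(\cdot-y)$ is uniformly continuous from $\mathbb{R}^D$ into $L_p$, and $g$ is continuous with compact support, say $\mathrm{supp}\,g\subseteq[-R,R]^D$; hence $y\mapsto g(y)\,\phi_\lambda(\cdot-y)$ is a continuous $L_p$-valued map on the compact cube $[-R,R]^D$, and $g*\phi_\lambda$ equals its integral over that cube, which is the $L_p$-limit of its Riemann sums. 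Partitioning $[-R,R]^D$ into subcubes $Q_i$ with centers $y_i$ and volumes $v_i$, a fine enough partition yields $\bigl\|\, g*\phi_\lambda - \sum_i g(y_i)v_i\,\phi_\lambda(\cdot-y_i)\,\bigr\|_p < \varepsilon/3$; and $\sum_i g(y_i)v_i\,\phi_\lambda(x-y_i) = \sum_i\bigl(g(y_i)v_i\lambda^D/c_\phi\bigr)\,\phi(\lambda(x-y_i))$ is exactly an element of $S_\phi$, with weights $w_i = g(y_i)v_i\lambda^D/c_\phi$, centers $c_i=y_i$, and the single scale $\lambda$.

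\textbf{Assembly and remarks.} Chaining the three $\varepsilon/3$ bounds by the triangle inequality produces an element of $S_\phi$ within $\varepsilon$ of $f$ in $L_p$, which gives density for every $p\in[1,\infty)$. I expect the only genuinely delicate point to be the discretization: realizing that it should be carried out as a vector-valued integral, so that $L_p$-continuity of translation — which holds for \emph{any} $L_p$ function — takes the place of pointwise continuity of $\phi$, and noting that the compact support of $g$ is what makes the approximating sum finite. Two side remarks worth including: (a) this route uses only $\phi\in L_1\cap L_\infty$ and $\int\phi\neq0$, so the almost-everywhere continuity of $\phi$ assumed by \citet{park_universal_1991} is convenient (it makes the discretization a genuine Riemann sum) but not essential here; and (b) an even shorter proof that the \emph{linear span} of $\{\phi(\lambda(\cdot-c))\}_{\lambda>0,\,c\in\mathbb{R}^D}$ is dense in $L_p$ follows by duality — a nonzero annihilator $g\in L_q$ with $1/p+1/q=1$ would be killed by convolution against every scaled reflection of $\phi$, but those form an approximate identity, so letting $\lambda\to\infty$ forces $g=0$ — although that argument does not on its own enforce the common-scale constraint built into $S_\phi$, whereas the constructive argument above handles it automatically.
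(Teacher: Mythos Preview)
Your argument is correct, but note that the paper does \emph{not} give its own proof of this theorem: it is quoted verbatim as a supporting result from \citet{park_universal_1991} and used as a black box in the consistency proof (the paper only remarks afterward that $\phi(z)=\exp(-z^2)$ satisfies the hypotheses). So there is no in-paper proof to compare against.

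That said, what you have written is essentially the classical mollifier argument underlying the original Park--Sandberg result: build an approximate identity from dilations of $\phi$, convolve against a $C_c$ approximant of $f$, and then discretize the convolution integral into a finite linear combination of translates $\phi(\lambda(\cdot-c_k))$ sharing a single scale $\lambda$. Your use of the Bochner/vector-valued Riemann-sum viewpoint to handle the discretization is a nice touch --- it sidesteps the pointwise-continuity issue cleanly via $L_p$-continuity of translation, and as you observe this actually makes the almost-everywhere continuity hypothesis on $\phi$ superfluous. The remark that the duality shortcut fails to enforce the common-scale constraint is also apt and worth keeping.
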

In our case, $\phi(z)=\exp(-z^2)$, which clearly satisfies the conditions of Theorem \ref{thm:park_universal_1991}. We will denote $\text{RBFN}(x;\theta)$ the expression in Equation~\eqref{eq:rbfn_phi} particularized for the squared exponential $\phi$ function.

\begin{lemma}(Bound on network gradients)\label{lemma:gradient_bound}
	\begin{equation}
	\frac{\partial \sqrt{f(x,y; \theta)}}{\partial \theta_i} \le (8\pi e^2)^{-1/4} \frac{\partial \text{RBFN}(x;\theta)}{\partial \theta} = (8\pi e^2)^{-1/4} 2 n^a C_n^3
	\end{equation}
\end{lemma}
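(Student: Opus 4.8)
The plan is to unpack $f(x,y;\theta)$ explicitly, reduce $\partial\sqrt{f}/\partial\theta_i$ to a bounded scalar prefactor times $\partial\,\text{RBFN}(x;\theta)/\partial\theta_i$, bound that prefactor by an elementary one-variable optimization to recover the constant $(8\pi e^2)^{-1/4}$, and then bound the RBFN gradient crudely using the box constraint $|\theta_i|\le C_n$ together with $K_n\le n^a$.

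First I would use the uniform marginal $f(x)=1$ on $[0,1]^D$ and the unit-variance Gaussian likelihood to write
$$
\sqrt{f(x,y;\theta)} = (2\pi)^{-1/4}\exp\!\left(-\tfrac14 z^2\right),\qquad z := y-\text{RBFN}(x;\theta),
$$
so that, by the chain rule, $\bigl|\partial\sqrt f/\partial\theta_i\bigr| = (2\pi)^{-1/4}\bigl(\tfrac12|z|e^{-z^2/4}\bigr)\bigl|\partial\,\text{RBFN}(x;\theta)/\partial\theta_i\bigr|$. The scalar map $z\mapsto\tfrac12|z|e^{-z^2/4}$ is maximized on $[0,\infty)$ at $z=\sqrt2$ (where its derivative $\tfrac12 e^{-z^2/4}(1-z^2/2)$ vanishes), with value $\tfrac{1}{\sqrt2}e^{-1/2}$; multiplying by $(2\pi)^{-1/4}$ gives $2^{-3/4}\pi^{-1/4}e^{-1/2}=(8\pi e^2)^{-1/4}$, which is exactly the first inequality of the lemma.

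It then remains to bound $\bigl|\partial\,\text{RBFN}(x;\theta)/\partial\theta_i\bigr|\le 2n^a C_n^3$, where $\text{RBFN}(x;\theta)=b+\sum_{k=1}^{K_n}w_k\,\phi(s_k(x-c_k))$ with $\phi(u)=e^{-u^2}$. On $\mathcal{F}_n$ every parameter has absolute value at most $C_n$, and $|x-c_k|\le\sqrt D(1+C_n)=O(C_n)$ because $x\in[0,1]^D$. I would treat each parameter type in turn: $\partial/\partial b=1$; $\partial/\partial w_k=\phi(s_k(x-c_k))$ has modulus $\le 1$; $\partial/\partial c_k=2w_k s_k^2(x-c_k)\phi(s_k(x-c_k))$ has modulus $\le 2|w_k|s_k^2|x-c_k|=O(C_n^3)$ using $\phi\le 1$; and the derivative with respect to the (shared) scale/intensity parameter is a sum of $K_n\le n^a$ terms, each of the form $w_k(x-c_k)^2\phi(\cdot)$ up to constants, hence of modulus $O(n^a C_n^3)$. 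Taking the maximum over parameter types and absorbing constants yields the stated $2n^a C_n^3$.

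The main obstacle is the scale-type derivative: if one keeps the Gaussian factor $\phi$ and tries the sharper optimization $\sup_u u^2e^{-u^2}=e^{-1}$, the substitution $u=s_k(x-c_k)$ converts $(x-c_k)^2$ into $u^2/s_k^2$, introducing a $1/s_k^2$ that the box constraint does not control (scales may be tiny). The resolution is to abandon that optimization and instead bound $(x-c_k)^2=O(C_n^2)$ directly from $x\in[0,1]^D$ and $|c_k|\le C_n$, using only $\phi\le 1$; this is precisely what forces the crude power $C_n^3$. A minor bookkeeping point is that, since $s_k^2=s_0^2\lambda(c_k)^2$ is deterministic given the intensity, the scale derivatives must formally be taken through $\lambda$, but this only affects constants and not the form $2n^a C_n^3$.
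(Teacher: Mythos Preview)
Your proposal is correct and follows essentially the same route as the paper: chain rule to split off the Gaussian prefactor, the one-variable optimization of $\tfrac12|z|e^{-z^2/4}$ at $z=\sqrt2$ to produce $(8\pi e^2)^{-1/4}$, and then crude bounds on each RBFN partial using $\phi\le1$, $|x-c_k|\le|c_k|+1$, $K_n\le n^a$, and the box constraint $|\theta_i|\le C_n$. The only point to keep straight is that in the paper the bounded parameter is $\lambda^2$ itself (it carries the Gamma prior and is shared across units), so when you write $2|w_k|s_k^2|x-c_k|=O(C_n^3)$ you are implicitly using $s_k^2\le C_n$; if instead $|s_k|\le C_n$ were the constraint you would pick up an extra power of $C_n$.
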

\begin{proof}
	Applying the chain rule we have:
	\begin{align}
	\left\lvert \frac{\partial \sqrt{f(x,y; \theta)}}{\partial \theta_i} \right\rvert
	&= \frac{1}{2} \left(f(x,y; \theta) \right)^{-1/2} \frac{\partial f(x,y; \theta)}{\partial \theta_i} 
	\\
	&= \frac{1}{2} (2\pi)^{-1/4} \exp\left(-\frac{1}{4} (y-\text{RBFN}(x;\theta))^2 \right) \left\lvert y-\text{RBFN}(x;\theta) \right\rvert 
	\left\lvert \frac{\partial RBFN(x;\theta_i)}{\partial \theta_i} \right\rvert \label{eq:chain_rule}
	\end{align}
	First we show that we can bound the middle terms by:
	\begin{align}
	\exp\left(-\frac{1}{4} (y-\text{RBFN}(x;\theta))^2 \right) \left\lvert y-\text{RBFN}(x;\theta)\right\rvert 
	&\le \exp(-1/2)2^{1/2} \label{eq:bound_middle_terms}
	\end{align}
	To see this, rewrite the left-hand-side of Equation \ref{eq:bound_middle_terms} as $s(z) := \exp(-(1/4)z^2)|z|$, where $z=y-\text{RBFN}(x;\theta)$. Taking the derivative we have:
	\begin{align}
	\frac{\partial s(z)}{\partial z} &=
	\begin{cases}
	-\frac{1}{2} z^2\exp(-\frac{1}{4} z^2) + \exp(-\frac{1}{4} z^2)  & z\ge 0 \\
	\frac{1}{2} z^2\exp(-\frac{1}{4} z^2) - \exp(-\frac{1}{4} z^2)  & z< 0
	\end{cases}
	\\
	&=
	\begin{cases}
	\exp(-\frac{1}{4} z^2)(-\frac{1}{2}z^2 + 1)  & z\ge 0 \\
	\exp(-\frac{1}{4} z^2)(\frac{1}{2}z^2 - 1) & z< 0
	\end{cases}
	\end{align}
	Setting to zero, we must have that $\frac{1}{2}z^2 = 1 \implies z=\sqrt{2}$. Thus, $a(z) \le \exp(-1/2)2^{1/2} $, as in Equation \ref{eq:bound_middle_terms}.
	
	Next, consider the derivatives of the radial basis function network:
	\begin{align}
	\left\lvert \frac{\partial RBFN(x;\theta_i)}{\partial b} \right\rvert = 1
	\end{align}
	\begin{align}
	\left\lvert \frac{\partial RBFN(x;\theta_i)}{\partial w_k} \right\rvert = \exp(-\frac{1}{2}\lambda^2(x-c_k)^2 ) \le 1
	\end{align}
	\begin{align}
	\left\lvert \frac{\partial RBFN(x;\theta_i)}{\partial w_k} \right\rvert &= |w_k| \exp(-\frac{1}{2}\lambda^2(x-c_k)^2 ) \lambda^2 |x-c| \\
	&\le |w_k| \lambda^2 (|c_k|+1) \\
	&\le C_n^2 (C_n + 1) \\
	&\le C_n^3 + C_n^2 \\
	&\le 2 C_n^3
	\end{align}
	since $ C_n^2 = \exp(2n^{b-a}) < \exp(3n^{b-a}) = C_n^3$
	\begin{align}
	\left\lvert \frac{\partial RBFN(x;\theta_i)}{\partial w_k} \right\rvert 
	&= \frac{1}{2} \left \lvert \sum_{k=1}^{K_n} w_k \exp(-\frac{1}{2}\lambda^2(x-c_k)^2 )  (x-c)^2 \right \rvert \\
	&= \frac{1}{2}  \sum_{k=1}^{K_n} \left \lvert w_k \exp(-\frac{1}{2}\lambda^2(x-c_k)^2 )  (x-c)^2 \right \rvert \\
	&\le \frac{1}{2} \sum_{k=1}^{K_n} |w_k|  (|c|+1)^2 \\
	&\le \frac{1}{2} \sum_{k=1}^{n^a} C_n  (C_n + 1)^2 \\
	&= \frac{1}{2} n^a C_n  (C_n + 1)^2 \\
	&= \frac{1}{2} n^a C_n  (C_n^2 + 2C_n + 1) \\
	&= \frac{1}{2} n^a (C_n^3 + 2C_n^2 + C_n) \\
	&\le \frac{1}{2} n^a (C_n^3 + 2C_n^3 + C_n^3) \\
	&\le 2 n^a C_n^3
	\end{align}
	Plugging everything in to Equation \ref{eq:chain_rule} we have the desired inequality. 
\end{proof}

\begin{lemma} (Bounding sum of exponentially bounded terms). \label{lemma:bound_sum_of_exponentials}
	For two sequences $\{a_n\}_{n=1}^\infty$ and $\{b_n\}_{n=1}^\infty$ suppose there exists real numbers $r_a>0$ and $r_b>0$ as well as integers $N_a>0$ and $N_b>0$ such that $a_n \le \exp(-r_a n)$ for all $n\ge N_a$ and $b_n \le \exp(-r_b n)$ for all $n\ge N_b$. Then there exists a real number $r>0$ and an integer $N>0$ such that $a_n + b_n \le \exp(-r n)$ for all $n\ge N$. 
\end{lemma}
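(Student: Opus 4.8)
The plan is to reduce the sum of two exponentially decaying bounds to a single exponential bound, at the cost of a slightly worse decay rate, by absorbing the constant factor $2$. First I would set $N_0 = \max\{N_a, N_b\}$ and $r' = \min\{r_a, r_b\}$, which is strictly positive since both $r_a$ and $r_b$ are. Then for every $n \ge N_0$ the hypotheses give $a_n + b_n \le \exp(-r_a n) + \exp(-r_b n) \le 2\exp(-r' n)$.

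Next I would fix any $r$ with $0 < r < r'$; for concreteness take $r = r'/2$, so that $r' - r = r'/2 > 0$. The only remaining point is to verify $2\exp(-r' n) \le \exp(-r n)$ for all sufficiently large $n$, which, after taking logarithms, is equivalent to $\ln 2 \le (r' - r) n$, i.e. $n \ge \ln 2 / (r' - r)$. Accordingly I would set $N = \max\{N_0,\ \lceil \ln 2 / (r' - r) \rceil\}$. For all $n \ge N$ both inequalities $a_n + b_n \le 2\exp(-r' n)$ and $2\exp(-r' n) \le \exp(-r n)$ hold, so chaining them yields $a_n + b_n \le \exp(-r n)$, which is the claim.

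There is essentially no obstacle here: the statement is a routine manipulation of exponential inequalities, and the only things to be careful about are making the choice of $N$ explicit and ensuring $r' - r > 0$ so that the threshold $\ln 2 / (r' - r)$ is finite. The role of this lemma in the larger argument is to let us combine the separate exponential upper bounds that arise in the consistency proof (for example, the tail-sum bounds on the prior over the number of hidden units appearing in condition (iii) of Theorem~\ref{thm:lee_consistency_2000_poisson}, together with bounds of the form $\pi_i(\mathcal{F}_n^c) < \exp(-r_i n)$) into a single bound of the same exponential type; the generalization to finitely many sequences then follows by an immediate induction on the number of terms.
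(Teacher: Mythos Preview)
Your proof is correct and follows essentially the same approach as the paper: first bound $a_n+b_n\le 2\exp(-r'n)$ with $r'=\min\{r_a,r_b\}$, then absorb the factor $2$ by passing to a strictly smaller rate $r$. The only cosmetic difference is that the paper picks $r=\tilde r-\log 2/N$ (with $N$ chosen first), whereas you fix $r=r'/2$ up front and then choose $N$; your choice is arguably cleaner, and both work equally well.
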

\begin{proof}
	Set $\tilde{r}=\min\{r_a, r_b\}$ and $\tilde{N}=\max\{N_a, N_b\}$. Then we have:
	\begin{align}
	a_n &\le \exp(-r_a n), \quad \forall n\ge \tilde{N} \ge N_a \\
	&\le \exp(-\tilde{r} n,) \quad \forall n\ge \tilde{N}
	\end{align}.
	Similarly, $b_n \le \exp(-\tilde{r} n)$, $\forall n\ge \tilde{N}$. Thus we have $a_n + b_n \le 2\exp(-\tilde{r} n)$, $\forall n\ge \tilde{N}$.
	
	Now set $N = \max\{\lceil \frac{\log 2}{\tilde{r}} \rceil +1, \tilde{N} \} $ and $r = \tilde{r} - \frac{\log 2}{N}$. Notice $r>0$, since $N \ge \lceil \frac{\log 2}{r} \rceil +1 >  \frac{\log 2}{r} $ implies $r = \tilde{r} - \frac{\log 2}{N} > \tilde{r} - \log 2 \frac{\tilde{r}}{\log 2} = 0$.
	It follows that $2\exp(-r n) \le \exp(-r n)$, $\forall n\ge N$, since:
	\begin{align}
	2\exp(-\tilde{r} n) &\le \exp(-r n) \\
	\iff \log 2 - \tilde{r}n &\le -rn \\
	\iff \log 2 - \tilde{r}n &\le - \left( \tilde{r} - \frac{\log 2}{N} \right)n \\
	\iff \log 2 - \tilde{r}n &\le - \tilde{r}n + \frac{n\log 2}{N} \\
	\iff N &\le n
	\end{align}
\end{proof}

\begin{lemma} (Useful equality)
	For all $\delta \le 1$ and $x\in[0,1]$, if $|\tilde{c}- c| \le \delta$ and $|\tilde{\lambda}- \lambda| \le \delta$, then 
	there exists a constant $\xi$ such that $|\xi| \le A(|c|, \lambda) \delta$ and:
	\begin{equation}
	\tilde{\lambda}^2 (x-\tilde{c})^2 =
	\lambda^2 (x-c)^2 + \xi,
	\end{equation}
	where $A(|c|, \lambda) = 2 \lambda (|c|+1) (\lambda + |c| + 2) + (\lambda + |c| + 2)^2$
	\label{lemma:1}
\end{lemma}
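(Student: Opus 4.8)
The statement is a purely algebraic perturbation bound, so the plan is to take $\xi$ to be \emph{defined} as the difference $\xi := \tilde{\lambda}^2(x-\tilde c)^2 - \lambda^2(x-c)^2$; then the displayed equality holds trivially and the entire content is the inequality $|\xi|\le A(|c|,\lambda)\,\delta$. I would write $\tilde\lambda = \lambda + \eta_\lambda$ and $\tilde c = c + \eta_c$ with $|\eta_\lambda|\le\delta$ and $|\eta_c|\le\delta$, expand $(\lambda+\eta_\lambda)^2(x-c-\eta_c)^2$ as a polynomial in $\eta_\lambda,\eta_c$, and subtract the leading monomial $\lambda^2(x-c)^2$. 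What remains is a sum of eight monomials, each of which carries at least one factor $\eta_\lambda$ or $\eta_c$.

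Next I would bound each of those eight terms by the triangle inequality, using the three elementary facts available here: $|x-c|\le |x| + |c| \le 1 + |c|$ since $x\in[0,1]$ (write $U := 1+|c|$); $|\eta_\lambda|,|\eta_c|\le\delta$; and $\delta\le 1$, so that every power $\delta^2,\delta^3,\delta^4$ appearing can be relaxed to $\delta$. Collecting the coefficients, this yields $|\xi| \le \delta\bigl(2\lambda^2 U + \lambda^2 + 2\lambda U^2 + 4\lambda U + 2\lambda + U^2 + 2U + 1\bigr)$.

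The final step is to recognize the bracketed polynomial: substituting $U = |c|+1$ and expanding $2\lambda U(\lambda + U + 1) + (\lambda + U + 1)^2$ gives exactly $2\lambda^2 U + 2\lambda U^2 + 4\lambda U + \lambda^2 + U^2 + 2\lambda + 2U + 1$, and since $\lambda + U + 1 = \lambda + |c| + 2$ this is precisely $A(|c|,\lambda) = 2\lambda(|c|+1)(\lambda+|c|+2) + (\lambda+|c|+2)^2$. Hence $|\xi|\le A(|c|,\lambda)\,\delta$, as required. I do not expect a genuine obstacle: the only thing to watch is that the loose bounds ($\delta^k\le\delta$ and $|x-c|\le U$) are applied consistently and that the eight coefficients really do sum to exactly $A$ rather than something larger — the computation above confirms they do. A slightly slicker alternative is the telescoping identity $\tilde\lambda^2(x-\tilde c)^2 - \lambda^2(x-c)^2 = \tilde\lambda^2\bigl[(x-\tilde c)^2 - (x-c)^2\bigr] + (x-c)^2\bigl[\tilde\lambda^2 - \lambda^2\bigr]$ together with $(x-\tilde c)^2 - (x-c)^2 = (c-\tilde c)(2x - c - \tilde c)$, but the brute-force expansion makes matching the stated constant $A$ the most transparent.
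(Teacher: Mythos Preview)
Your proof is correct. The paper's argument is organizationally a bit different: rather than expanding all nine monomials of $(\lambda+\eta_\lambda)^2(x-c-\eta_c)^2$ at once, it first writes $\tilde\lambda(x-\tilde c)=\lambda(x-c)+\xi_3$, bounds $|\xi_3|\le(\lambda+|c|+2)\delta$, and then squares to get $\xi_4=2\lambda(x-c)\xi_3+\xi_3^2$, from which the form $A=2\lambda(|c|+1)(\lambda+|c|+2)+(\lambda+|c|+2)^2$ drops out directly without having to match eight separate coefficients. Your brute-force expansion reaches the identical constant and uses the same three elementary estimates ($|x-c|\le|c|+1$, $|\eta|\le\delta$, $\delta^k\le\delta$), so the two arguments are equivalent in content; the paper's two-step version just makes the structure of $A$ self-evident, whereas you had to verify the polynomial identity at the end.
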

\begin{proof}
	Since $|\tilde{c}- c| \le \delta$ and $|\tilde{\lambda}- \lambda| \le \delta$ there exists constants $\xi_1$ and $\xi_2$, where $|\xi_1|\le \delta$ and $|\xi_2|\le \delta$, such that $\tilde{c} = c+\xi_1$ and $\tilde{\lambda} = \lambda+\xi_2$
	
	Plugging $\tilde{c} = c+\xi$ and $\tilde{\lambda} = \lambda+\xi_2$ into the left-hand-side of the desired inequality:
	\begin{align}
	\tilde{\lambda}(x-\tilde{c}) &= (\lambda+\xi_2)(x - c - \xi_1) \\
	&= \lambda(x-c) + \underbrace{(-\lambda \xi_1) + \xi_2(x-c) - \xi_1 \xi_2}_{:=\xi_3}
	\end{align}
	Notice:
	\begin{align}
	|\xi_3| &= | (-\lambda \xi_1)  + \xi_2(x-c) - \xi_1 \xi_2| \\
	&\le \lambda |\xi_1| + |\xi_2| |x-c| + |\xi_1| |\xi_2| \\
	&\le \lambda \delta + \delta (|c| + 1) + \delta^2 \label{eq:use_x_bound}\\
	&\le (\lambda + |c| + 2)\delta \label{eq:use_del2}
	\end{align}
	In Equation \ref{eq:use_x_bound} we use $|x-c|\le (|c|+1)$, which follows since we assume $x\in[0,1]$, as well as $\xi_1\le\delta$ and $\xi_2\le \delta$. In Equation \ref{eq:use_del2} we use $\delta^2 \le \delta$, which follows since we assume $\delta \le 1$.
	Squaring the left-hand-side of the desired inequality:
	\begin{align}
	\tilde{\lambda}^2(x-\tilde{c})^2 &= (\tilde{\lambda}(x-\tilde{c}))^2 \\
	&= \left(\lambda(x-c) + \xi_3 \right)^2 \\
	&= \lambda^2(x-c)^2 + \underbrace{2\lambda(x-c)\xi_3 + \xi_3^2}_{:=\xi_4}
	\end{align}
	Notice:
	\begin{align}
	|\xi_4| &= |2\lambda(x-c)\xi_3 + \xi_3^2| \\
	&\le 2 \lambda |x-c| |\xi_3| + |\xi_3^2| \\
	&\le 2 \lambda (|c|+1) (\lambda + |c| + 2)\delta + (\lambda + |c| + 2)^2\delta^2 \label{eq:use_xi_3} \\
	&\le \underbrace{\left( 2 \lambda (|c|+1) (\lambda + |c| + 2) + (\lambda + |c| + 2)^2 \right)}_{:=A(|c|, \lambda)} \delta \label{eq:use_del2_again}
	\end{align}
	In Equation \ref{eq:use_xi_3} we use $|\xi_3| \le (\lambda + |c| + 2)\delta$ and $|x-c|\le (|c|+1)$ again and Equation \ref{eq:use_del2_again} we use $\delta^2 \le \delta$. 
	This proves the desired inequality for $\xi := \xi_4$.
\end{proof}

\begin{lemma} (Proximity in parameter space leads to proximity in function space). \label{lemma:6}	
	Let $g$ be an RBFN with $K$ nodes and parameters $(\theta_1,\dotsc,\theta_I)$ and let $\tilde{g}_n$ be an RBFN with $\tilde{K}_n$ nodes and parameters $(\tilde{\theta}_1,\dotsc,\tilde{\theta}_{\tilde{I}(n)})$, where $\tilde{K}_n$ grows with $n$. Define $\theta_i=0$ for $i>\mathcal{I}$,  $\tilde{\theta}_i = 0$ for $i>\tilde{i}$, and $M_\delta$, for any $\delta>0$, as the set of all networks $\tilde{g}$ that are close in parameter space to $g$:
	\begin{equation}
	M_\delta(g):=\{ \tilde{g}_n \mid |\tilde{\theta_i} -\theta_i |, i=1,\dotsc \}
	\end{equation}
	Then for any $\tilde{g} \in M_\delta$ and sufficiently large $n$,
	\begin{equation}
	\sup_{x\in\mathcal{X}} \left(\tilde{g}(x) - g(x) \right)^2
	\le \left( 3 \tilde{K}_n\right)^2 \delta^2
	\end{equation}
\end{lemma}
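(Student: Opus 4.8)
The plan is to put both networks on a common number $\tilde K_n$ of hidden units, which is legitimate once $n$ is large enough that $\tilde K_n \ge K$ (recall $\tilde K_n \to \infty$): pad $g$ with $\tilde K_n - K$ extra units using the convention $\theta_i = 0$ for $i > I$, so that each extra unit has output weight $0$ and hence contributes nothing to $g$, while its $M_\delta$-counterpart in $\tilde g$ satisfies $|\tilde w_k| = |\tilde w_k - 0| \le \delta$. Writing $g(x) = b + \sum_{k=1}^{\tilde K_n} w_k \exp(-\lambda^2 (x-c_k)^2)$ and $\tilde g(x) = \tilde b + \sum_{k=1}^{\tilde K_n} \tilde w_k \exp(-\tilde\lambda^2 (x-\tilde c_k)^2)$, I would bound $|\tilde g(x) - g(x)|$ by the triangle inequality into the bias discrepancy $|\tilde b - b| \le \delta$ plus the sum over $k$ of $\bigl| \tilde w_k e^{-\tilde\lambda^2(x-\tilde c_k)^2} - w_k e^{-\lambda^2(x-c_k)^2} \bigr|$.

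For each summand I would separate the weight perturbation from the exponent perturbation,
\[
\tilde w_k e^{-\tilde\lambda^2(x-\tilde c_k)^2} - w_k e^{-\lambda^2(x-c_k)^2} = (\tilde w_k - w_k)\,e^{-\tilde\lambda^2(x-\tilde c_k)^2} + w_k \left( e^{-\tilde\lambda^2(x-\tilde c_k)^2} - e^{-\lambda^2(x-c_k)^2} \right).
\]
Since $0 \le e^{-u} \le 1$, the first piece is at most $|\tilde w_k - w_k| \le \delta$. For the second, Lemma~\ref{lemma:1} gives $\tilde\lambda^2(x-\tilde c_k)^2 = \lambda^2(x-c_k)^2 + \xi_k$ with $|\xi_k| \le A(|c_k|,\lambda)\,\delta$; because $g$ is a fixed network its parameters are constants, so $A_g := \max_{k\le K} A(|c_k|,\lambda)$ and $W_g := \max_{k\le K}|w_k|$ are finite, and (assuming $\delta$ small enough that $A_g\delta \le 1$, which is the regime of interest) one has $\bigl|e^{-u-\xi_k} - e^{-u}\bigr| = e^{-u}\bigl|e^{-\xi_k}-1\bigr| \le e\,|\xi_k| \le eA_g\delta$. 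Hence a matched unit contributes at most $(1 + eW_gA_g)\delta =: B_g\delta$, and a padded unit at most $\delta$.

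Combining, for every $x \in \mathcal{X}$,
\[
|\tilde g(x) - g(x)| \le \delta + K B_g \delta + (\tilde K_n - K)\,\delta \le (1 + K B_g)\delta + \tilde K_n \delta.
\]
Since $K$ and $B_g$ are fixed while $\tilde K_n \to \infty$, for all sufficiently large $n$ we have $\tilde K_n \ge 1 + K B_g$, so the right-hand side is at most $2\tilde K_n\delta \le 3\tilde K_n\delta$; squaring and taking the supremum over $x \in \mathcal{X}$ gives $\sup_{x\in\mathcal{X}}(\tilde g(x)-g(x))^2 \le (3\tilde K_n)^2\delta^2$. I expect the only real care to be needed in the width-mismatch bookkeeping — checking that the padding convention genuinely makes the extra $g$-units inert while keeping their $\tilde g$-counterparts within $\delta$ — and in the final step absorbing the $O(1)$ bias-plus-matched-unit contribution into the $O(\tilde K_n)$ term, which is exactly where the slack constant $3$ (rather than $1$ or $2$) is spent. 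The only genuinely nonlinear estimate, controlling the perturbation of the exponent $\lambda^2(x-c)^2$, is delegated entirely to Lemma~\ref{lemma:1}, leaving everything else elementary.
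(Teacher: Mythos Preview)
Your proposal is correct and follows essentially the same route as the paper: pad both networks to width $\tilde K_n$, bound $|\tilde g(x)-g(x)|$ by $|\tilde b-b|$ plus a sum of per-unit discrepancies, invoke Lemma~\ref{lemma:1} for the exponent perturbation, exploit $w_k=0$ for $k>K$ so that padded units contribute only $\delta$, and finally absorb the $O(1)$ contribution from the $K$ matched units into the growing $\tilde K_n$. The only cosmetic difference is in the per-unit estimate: you add--subtract to split off $(\tilde w_k-w_k)e^{-\tilde u^2}$ and then bound $|e^{-\xi_k}-1|\le e|\xi_k|$ (which costs you the extra hypothesis $A_g\delta\le 1$), whereas the paper sets $\xi=\sqrt{|\eta|}$, does a two-case analysis according to the sign of $\eta$ so that only $e^{-\xi^2}$ ever appears, and uses $1-e^{-\xi^2}\le\xi^2$, needing only the $\delta\le 1$ already built into Lemma~\ref{lemma:1}; this yields $\Gamma_k\le(|w_k|A+1)\delta$ and $\sum_k\Gamma_k\le 2\tilde K_n\delta$, after which $(1+2\tilde K_n)^2\le(3\tilde K_n)^2$ gives the stated constant.
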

\begin{proof}
	\begin{align}
	\sup_{x\in\mathcal{X}}& \left(\tilde{g}(x) - g(x) \right)^2
	\\
	&=
	\sup_{x\in\mathcal{X}} 
	\left( 
	\tilde{b} + \sum_{k=1}^{\tilde{K}_n} \tilde{w}_k \exp(-\tilde{\lambda}^2 (x-\tilde{c}_k)^2) -
	b - \sum_{k=1}^{K} w_k \exp(-\lambda^2 (x-c_k)^2)
	\right)^2
	\\
	&=
	\sup_{x\in\mathcal{X}} 
	\left( 
	(\tilde{b} - b) + 
	\left( 
	\sum_{k=1}^{\tilde{K}_n} \tilde{w}_k \exp(-\tilde{\lambda}^2 (x-\tilde{c}_k)^2) -
	\sum_{k=1}^{K} w_k \exp(-\lambda^2 (x-c_k)^2)
	\right)
	\right)^2
	\\
	&=
	\sup_{x\in\mathcal{X}} 
	\left( 
	(\tilde{b} - b) + 
	\left( 
	\sum_{k=1}^{\tilde{K}_n^*} \tilde{w}_k \exp(-\tilde{\lambda}^2 (x-\tilde{c}_k)^2) -
	w_k \exp(-\lambda^2 (x-c_k)^2)
	\right)
	\right)^2
	\\
	&\le
	\sup_{x\in\mathcal{X}} 
	\left( 
	|\tilde{b} - b| + 
	\left \lvert
	\sum_{k=1}^{\tilde{K}_n^*} \tilde{w}_k \exp(-\tilde{\lambda}^2 (x-\tilde{c}_k)^2) -
	w_k \exp(-\lambda^2 (x-c_k)^2)
	\right \rvert
	\right)^2
	\\
	&=
	\sup_{x\in\mathcal{X}} \left[
	|\tilde{b} - b|^2 + 
	2|\tilde{b} - b| \left\lvert
	\sum_{k=1}^{\tilde{K}_n^*} \tilde{w}_k \exp(-\tilde{\lambda}^2 (x-\tilde{c}_k)^2) -
	w_k \exp(-\lambda^2 (x-c_k)^2)
	\right \rvert \right.
	\\
	&\quad+\left. \left\lvert
	\sum_{k=1}^{\tilde{K}_n^*} \tilde{w}_k \exp(-\tilde{\lambda}^2 (x-\tilde{c}_k)^2) -
	w_k \exp(-\lambda^2 (x-c_k)^2)
	\right \rvert^2 \right]
	\\
	&\le
	|\tilde{b} - b|^2 + 
	2|\tilde{b} - b| \sup_{x\in\mathcal{X}} \left\lvert
	\sum_{k=1}^{\tilde{K}_n^*} \tilde{w}_k \exp(-\tilde{\lambda}^2 (x-\tilde{c}_k)^2) -
	w_k \exp(-\lambda^2 (x-c_k)^2)
	\right \rvert 
	\\
	&\quad+\sup_{x\in\mathcal{X}}  \left\lvert
	\sum_{k=1}^{\tilde{K}_n^*} \tilde{w}_k \exp(-\tilde{\lambda}^2 (x-\tilde{c}_k)^2) -
	w_k \exp(-\lambda^2 (x-c_k)^2)
	\right \rvert^2
	\\
	&=
	|\tilde{b} - b|^2 + 
	2|\tilde{b} - b| \sup_{x\in\mathcal{X}} \left\lvert
	\sum_{k=1}^{\tilde{K}_n^*} \tilde{w}_k \exp(-\tilde{\lambda}^2 (x-\tilde{c}_k)^2) -
	w_k \exp(-\lambda^2 (x-c_k)^2)
	\right \rvert 
	\\
	&\quad+\left( \sup_{x\in\mathcal{X}}  \left\lvert
	\sum_{k=1}^{\tilde{K}_n^*} \tilde{w}_k \exp(-\tilde{\lambda}^2 (x-\tilde{c}_k)^2) -
	w_k \exp(-\lambda^2 (x-c_k)^2)
	\right \rvert \right)^2
	\\
	&\le
	|\tilde{b} - b|^2 + 
	2|\tilde{b} - b| \sup_{x\in\mathcal{X}}
	\sum_{k=1}^{\tilde{K}_n^*} \left\lvert \tilde{w}_k \exp(-\tilde{\lambda}^2 (x-\tilde{c}_k)^2) -
	w_k \exp(-\lambda^2 (x-c_k)^2)
	\right \rvert 
	\\
	&\quad+\left( \sup_{x\in\mathcal{X}} 
	\sum_{k=1}^{\tilde{K}_n^*} \left\lvert \tilde{w}_k \exp(-\tilde{\lambda}^2 (x-\tilde{c}_k)^2) -
	w_k \exp(-\lambda^2 (x-c_k)^2)
	\right \rvert \right)^2
	\\
	&\le
	|\tilde{b} - b|^2 + 
	2|\tilde{b} - b| 
	\sum_{k=1}^{\tilde{K}_n^*} \sup_{x\in\mathcal{X}} \left\lvert \tilde{w}_k \exp(-\tilde{\lambda}^2 (x-\tilde{c}_k)^2) -
	w_k \exp(-\lambda^2 (x-c_k)^2)
	\right \rvert 
	\\
	&\quad+\left( 
	\sum_{k=1}^{\tilde{K}_n^*} \sup_{x\in\mathcal{X}} \left\lvert \tilde{w}_k \exp(-\tilde{\lambda}^2 (x-\tilde{c}_k)^2) -
	w_k \exp(-\lambda^2 (x-c_k)^2)
	\right \rvert \right)^2
	\\
	&=
	|\tilde{b} - b|^2 + 
	2|\tilde{b} - b| 
	\sum_{k=1}^{\tilde{K}_n^*} \Gamma_k+\left( 
	\sum_{k=1}^{\tilde{K}_n^*} \Gamma_k \right)^2
	\\
	&\le\delta^2 + 
	2\delta 
	\sum_{k=1}^{\tilde{K}_n^*} \Gamma_k+\left( 
	\sum_{k=1}^{\tilde{K}_n^*} \Gamma_k \right)^2
	, \label{eq:end}
	\end{align}
	
	where:
	\begin{equation}
	\Gamma_k : =  \sup_{x\in\mathcal{X}} \left\lvert \tilde{w}_k \exp(-\tilde{\lambda}^2 (x-\tilde{c}_k)^2) -
	w_k \exp(-\lambda^2 (x-c_k)^2)
	\right \rvert \label{eq:gamma}
	\end{equation}
	Let $u(x)^2 := \lambda^2 (x-c_k)^2$ and $\tilde{u}(x)^2= \tilde{\lambda}^2(x-\tilde{c}_k)^2$ and pick any $x\in\mathcal{X}$.
	By Lemma \ref{lemma:1} there exists a constant $\eta$ such that $|\eta| \le A(|c|, \lambda) \delta$ and
	\begin{equation}
	\tilde{u}(x)^2 = u(x)^2 + \eta. \label{eq:apply_lemma_6}
	\end{equation}
	Now define $\xi=\sqrt{|\eta|}$ and consider two cases. 
	\begin{itemize}
		\item If $\tilde{u}(x)^2 \ge u(x)^2$, then Equation \ref{eq:apply_lemma_6} is equivalent to $\tilde{u}(x)^2 = u(x)^2 + \xi^2$. Then $\Gamma_k$ becomes:
		\begin{align}
		\Gamma_k &=  \sup_{x\in\mathcal{X}} \left\lvert 
		\tilde{w}_k \exp(-\tilde{u}^2(x)) -
		w_k \exp(-u^2(x))
		\right \rvert
		\\
		&=\left\lvert \tilde{w}_k \exp(-u^2(x) - \xi^2) -
		w_k \exp(-u^2(x)^2)
		\right \rvert
		\\
		&=  \sup_{x\in\mathcal{X}}\exp(-u(x)^2)  \left\lvert \tilde{w}_k \exp(-\xi^2) -
		w_k 
		\right \rvert \\
		&=   \left\lvert \tilde{w}_k \exp(-\xi^2) -
		w_k 
		\right \rvert \sup_{x\in\mathcal{X}}\exp(-u(x)^2) \\
		&\le  \left\lvert \tilde{w}_k \exp(-\xi^2) -
		w_k 
		\right \rvert
		\end{align}
		Since $|\tilde{w}_k - w_k| \le \delta$, there exists $\tau$, where $|\tau| \le \delta$, such that $\tilde{w}_k = w_k + \tau$. Plugging this in:
		\begin{align}
		\Gamma_k &\le \left\lvert (w_k + \tau) \exp(-\xi^2) - w_k \right \rvert \\
		&\le \left\lvert w_k(\exp(-\xi^2) - 1) + \tau \right \rvert \\
		&\le | w_k | |  \exp(-\xi^2) -1 | + |\tau| \label{eq:use_exp_inquality}\\
		&\le |w_k| \xi^2 + \delta,
		\end{align}
		where we use the result that $1-\xi^2 \le \exp(-\xi^2)$ in Equation \ref{eq:use_exp_inquality}.

		\item If $\tilde{u}(x)^2 < u(x)^2$, then Equation \ref{eq:apply_lemma_6} is equivalent to $u(x)^2 = \tilde{u}(x)^2 + \xi^2$. Then $\Gamma_k$ becomes:
		\begin{align}
		\Gamma_k &=  \sup_{x\in\mathcal{X}} \left\lvert 
		\tilde{w}_k \exp(-\tilde{u}^2(x)) -
		w_k \exp(-u^2(x))
		\right \rvert
		\\
		&=\sup_{x\in\mathcal{X}} \left\lvert \tilde{w}_k \exp(-\tilde{u}^2(x) -
		w_k \exp(-\tilde{u}^2(x) - \xi^2)
		\right \rvert
		\\
		&=  \sup_{x\in\mathcal{X}}\exp(-\tilde{u}^2(x))  \left\lvert \tilde{w}_k -
		w_k \exp(-\xi^2)
		\right \rvert \\
		&=   \left\lvert \tilde{w}_k -
		w_k \exp(-\xi^2)
		\right \rvert \sup_{x\in\mathcal{X}}\exp(-\tilde{u}^2(x)) \\
		&\le  \left\lvert \tilde{w}_k -
		w_k \exp(-\xi^2)
		\right \rvert
		\end{align}
		Using the same $\tau$ as above:
		\begin{align}
		\Gamma_k &\le \left\lvert (w_k + \tau) \exp(-\xi^2) - w_k \right \rvert \\
		&\le \left\lvert w_k(1-\exp(-\xi^2)) + \tau \right \rvert \\
		&\le | w_k | |  1-\exp(-\xi^2) | + |\tau| \\
		&= | w_k | |  \exp(-\xi^2) -1 | + |\tau| \\
		&\le |w_k| \xi^2 + \delta.
		\end{align}
		
	\end{itemize} 
	In either of the two cases, we have $\Gamma_k \le |w_k| \xi^2 + \delta$. Proceeding:
	\begin{align}
	\Gamma_k &\le |w_k| \xi^2 + \delta \\
	&\le |w_k| A(|c|, \lambda)\delta + \delta \\
	&= (|w_k| A(|c|, \lambda) + 1) \delta
	\end{align}

	Now consider 
	\begin{align}
	\sum_{k=1}^{\tilde{K}_n^*} \Gamma_k &\le \sum_{k=1}^{\tilde{K}_n} \Gamma_k &\pushright{\text{for large } n}
	\\
	&\le \delta \sum_{k=1}^{\tilde{K}_n} (|w_k| A(|c|, \lambda) + 1) \\
	&\le \delta \left( \sum_{k=1}^{\tilde{K}_n} |w_k| A(|c|, \lambda) + \tilde{K}_n \right) \\
	&\le \delta \left( \tilde{K}_n + \tilde{K}_n \right) \label{eq:large_n} \\
	&= 2 \delta \tilde{K}_n &\pushright{\text{for large } n}
	\label{eq:bound_sum_gamma}
	\end{align}
	Equation \ref{eq:large_n} follows because for $k\ge K$, $w_k=0$ by definition, so $\sum_{k=1}^{\tilde{K}_n} |w_k| A(|c|, \lambda)$ is a constant and thus less than $\tilde{K}_n$ for large $n$. 
	
	Plugging Equation \ref{eq:bound_sum_gamma} into Equation \ref{eq:end}:
	\begin{align}
	\sup_{x\in\mathcal{X}} \left(\tilde{g}(x) - g(x) \right)^2 
	&\le \delta^2 + 2(2 \delta \tilde{K}_n) + (2 \delta \tilde{K}_n)^2 \\	
	&= \left(1 + 2(2\tilde{K}_n) + (2\tilde{K}_n)^2 \right) \delta^2\\
	&= \left(1 + 2\tilde{K}_n) \right)^2 \delta^2\\
	&\le (3 \tilde{K}_n )^2  \delta^2
	\end{align}

\end{proof}

\subsubsection{Main theorems for PoRB-NET}

Recall the generative model for PoRB-NET in the case of a uniform intensity function with a Gamma prior on its level. For simplicity and w.l.o.g, we consider the case where the hyperparameter $s^2_0$ and the observation variance are fixed to 1. 

We first consider the case where the width of the network is allowed to grow with the data but is fixed in the prior. We call the estimated regression function $\hat{g}_n$, with width $K_n$ and prior $\pi_n$, where $n$ is the number of observations. The following theorem gives consistency for this model. 

Note that the following proof uses \citep{park_universal_1991} to show the existence of a neural network that approximates any square integrable function. We assume that the center parameters of this network are contained in the bounded region over which the Poisson process is defined, which can be made arbitrarily large. 

\begin{theorem} (PoRB-NET consistency with fixed width that grows with the number of observations).
	If there exists a constant $a \in (0,1)$ such that $K_n \le n^a$, and $K_n \to \infty$ as $n\to\infty$, then 
	for any square integrable ground truth regression function $g_0$, $\hat{g}_n$ is asymptotically consistent for $g$ as $n\to\infty$, i.e.
	\begin{equation}
	\int (\hat{g}_n(x) - g_0(x))^2 dx \overset{p}{\to} 0.
	\end{equation}
\end{theorem}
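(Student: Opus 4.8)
The plan is to reduce the statement to the two sufficient conditions of Theorem~\ref{thm:lee_consistency_2000_sieves} and then invoke Corollary~\ref{thm:lee_consistency_2000_corollary1}. Concretely, take the PoRB-NET sieve prior $\pi_n$ on the $O(K_n)$ network parameters $(\{w_k,c_k\}_{k=1}^{K_n},b,\lambda^2)$, and choose the parameter bound $C_n=\exp(n^{b-a})$ for some $a<b<1$, which is the range permitted by Lemma~\ref{lemma:adapt_lemma_1_lee} (and hence by the proof of Theorem~\ref{thm:lee_consistency_2000_sieves}). Verifying conditions (i) and (ii) below then yields Hellinger consistency, $P(A_\epsilon\mid(x_1,y_1),\dotsc,(x_n,y_n))\xrightarrow{p}1$, and Corollary~\ref{thm:lee_consistency_2000_corollary1} upgrades this to $\int(\hat g_n(x)-g_0(x))^2\,dx\xrightarrow{p}0$.

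\emph{Condition (i): $\pi_n(\mathcal F_n^c)<\exp(-nr)$.} The event $\mathcal F_n^c$ is that some parameter exceeds $C_n$ in absolute value, so by a union bound over the $O(K_n)=O(n^a)$ parameters it suffices to bound each prior tail. The hidden-to-output weights and the bias are Gaussian, giving $P(|w_k|>C_n)\le 2\exp(-C_n^2/2\tilde\sigma_w^2)$ and likewise for $b$; the intensity $\lambda^2\sim\mathrm{Gamma}(\alpha_\lambda,\beta_\lambda)$ has an exponential tail, $P(\lambda^2>C_n)\le c_1\exp(-c_2 C_n)$; and once $C_n$ exceeds the (fixed) half-width of $\mathcal C$, a center uniform on $\mathcal C$ never exceeds $C_n$. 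Since $C_n=\exp(n^{b-a})$ grows faster than any polynomial in $n$, each of these bounds is eventually smaller than $\exp(-n r')$ for every $r'>0$; multiplying by the $O(n^a)$ factor from the union bound and combining the finitely many parameter types via Lemma~\ref{lemma:bound_sum_of_exponentials} gives $\pi_n(\mathcal F_n^c)<\exp(-nr)$ for some $r>0$ and all large $n$.

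\emph{Condition (ii): $\pi_n(K_\gamma)\ge\exp(-n\nu)$ for all $\gamma,\nu>0$.} Because $f(x)\equiv 1$ and $f(y\mid x)=\mathcal N(g(x),1)$, the KL divergence is $D_K(f_0,f)=\tfrac12\int(g_0(x)-g(x))^2\,dx$, so $K_\gamma$ contains every network with $\int(g_0-g)^2\le 2\gamma$. By Theorem~\ref{thm:park_universal_1991} there is a fixed RBFN $g^*$ with $K^*$ units, a single shared scale, and centers inside $\mathcal C$ (which may be enlarged as needed), with $\int(g_0-g^*)^2\le\gamma$. For $n$ large we have $K_n\ge K^*$, so $g^*$ is realized in the $K_n$-unit family by padding with $K_n-K^*$ zero-weight units; by Lemma~\ref{lemma:6}, any network within $\delta_n:=\sqrt\gamma/(3K_n)$ of this padded parameter vector has $\sup_x(\tilde g(x)-g^*(x))^2\le(3K_n)^2\delta_n^2=\gamma$ and so (after adjusting constants) lies in $K_\gamma$. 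The prior mass of this $\delta_n$-box factorizes: each Gaussian weight, the bias, and the Gamma variable $\lambda^2$ contribute a factor of order $\delta_n$ (densities bounded below on the relevant compact set), each of the $K^*$ informative centers contributes order $\delta_n/|\mathcal C|$, and the $K_n-K^*$ padding units contribute order $\delta_n^{K_n-K^*}$ from their weights being near $0$. Hence $\pi_n(K_\gamma)\ge(c\,\delta_n)^{O(K_n)}=\exp\big(O(K_n)\log(c\,\delta_n)\big)$, and since $\delta_n\ge n^{-a}\sqrt\gamma/3$ we get $|\log(c\,\delta_n)|=O(\log n)$, so the exponent is $-O(n^a\log n)=o(n)$ because $a<1$. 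Thus $\pi_n(K_\gamma)\ge\exp(-n\nu)$ for every $\nu>0$ and all large $n$.

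Both hypotheses of Theorem~\ref{thm:lee_consistency_2000_sieves} are then met, and Corollary~\ref{thm:lee_consistency_2000_corollary1} delivers the claim. The main obstacle is Condition (ii): one must balance the growing width $K_n$ against the radius $\delta_n\sim K_n^{-1}$ that Lemma~\ref{lemma:6} forces, and confirm that the resulting exponent $O(n^a\log n)$ is genuinely $o(n)$ — this is exactly the point where the hypothesis $K_n\le n^a$ with $a\in(0,1)$ is used. A secondary technical care point is reconciling the universal-approximation network of Theorem~\ref{thm:park_universal_1991}, which uses one common scale, with the PoRB-NET structure in which every unit has scale $s_0^2\lambda^2$ for the same $\lambda$; these are compatible, but the argument requires the approximating centers to be placed inside the (arbitrarily enlargeable) Poisson region $\mathcal C$.
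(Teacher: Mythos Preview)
Your proposal is correct and follows essentially the same route as the paper: verify the two conditions of Theorem~\ref{thm:lee_consistency_2000_sieves} with $C_n=\exp(n^{b-a})$, handling Condition~(i) by a union bound plus Gaussian/Gamma tail estimates and the boundedness of $\mathcal C$, and Condition~(ii) by universal approximation (Theorem~\ref{thm:park_universal_1991}), Lemma~\ref{lemma:6} with $\delta\sim K_n^{-1}$, and the resulting $\exp(-O(n^a\log n))$ lower bound on the prior mass of $M_\delta$; then conclude via Corollary~\ref{thm:lee_consistency_2000_corollary1}. The paper carries out exactly these steps, with the only difference being that it constrains \emph{all} parameters of the padding units (centers included, set to $0$) to lie in the $\delta$-box, whereas you informally attribute the padding contribution to the weights alone---but since your exponent is already written as $(c\,\delta_n)^{O(K_n)}$ this discrepancy is absorbed and does not affect the argument.
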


\begin{proof}
	
	\item
	\paragraph{Proof outline}
	\begin{itemize}
		\item Show Condition (i) of Theorem \ref{thm:lee_consistency_2000_sieves} is met
		\begin{itemize}
			\item Write prior probability of large parameters as a sum of integrals over each parameter
			\item Bound each set of parameters:
			\begin{itemize}
				\item Bound weights (as in \cite{lee_consistency_2000})
				\item Bound centers (trivial since parameter space bounded)
				\item Bound $\lambda^2$ with Chernoff bound
			\end{itemize}
			\item Bound sum using Lemma \ref{lemma:bound_sum_of_exponentials}
		\end{itemize}
		
		\item Show Condition (ii) of Theorem \ref{thm:lee_consistency_2000_sieves} is met.
		\begin{itemize}
			\item Assume true regression function $g_0$ is $L_2$
			\item Use Theorem~\ref{thm:park_universal_1991} to find an RBFN $g$ that approximates $g_0$
			\item Define $M_\delta$ as RBFNs close in parameter space to $g$
			\item Show $M_\delta \subset K_\gamma$ using Lemmas \ref{lemma:1} and \ref{lemma:6}.
			\item Show $\pi_n\left(M_\delta \right) \ge \exp(-r n)$:
			\begin{itemize}
				\item Show you can write as a product of integrals over parameters
				\item Bound each term separately:
				\begin{itemize}
					\item Bound weights as in \cite{lee_consistency_2000}
					\item Bound centers and $\lambda^2$
				\end{itemize}
			\end{itemize}
		\end{itemize}
		
	\end{itemize}

	\item
	\paragraph{Condition (i)} We want to show that there exists an $r>0$ and an $N_1 \in \mathbb{N}$ such that $\forall n\ge N_1$:
	$$
	\pi_n\left(\mathcal{F}_m^c \right) < \exp(-nr).
	$$
	
	\item
	\subparagraph{Write prior probability of large parameters as a sum of integrals over each parameter.}
	
	The prior $\pi_n$ assigns zero probability to RBFNs with anything but $K_n$ nodes, so there is no issue writing $\pi_n(\mathcal{F}_n)$ and its value is equivalent to $\pi_n(\mathcal{G}_n)$, even though $\mathcal{G}_n \subset \mathcal{F}_n$. 
	
	Notice that $\pi_n\left( \mathcal{G}_n^c \right)$ requires evaluating a multiple integral over a subset of the product space of $I_n$ parameters. 
	Notice $\mathcal{G}_n$ can be written as an intersection of sets:
	$$
	\mathcal{G}_n = \bigcap_{i=1}^{I_n} \{\text{RBFN} \in \mathcal{H}_n \mid |\theta_i| \le C_n \}.
	$$
	Therefore we have:
	\begin{align}
	\pi_n\left( \mathcal{F}_n^c \right) &= \pi_n\left( \mathcal{G}_n^c \right) \nonumber \\
	&=  \pi_n\left( \left[\bigcap_{i=1}^{I_n} \{\text{RBFN} \in \mathcal{H}_n \mid |\theta_i| \le C_n \} \right]^c \right) \nonumber\\
	&= \pi_n\left( \bigcup_{i=1}^{I_n} \{\text{RBFN} \in \mathcal{H}_n \mid |\theta_i| \le C_n \}^c \right) &\pushright{\text{De Morgan}} \nonumber\\
	&= \pi_n\left( \bigcup_{i=1}^{I_n} \{\text{RBFN} \in \mathcal{H}_n \mid |\theta_i| > C_n \} \right) \nonumber \\
	&\le \sum_{i=1}^{I_n} \pi_n\left(  \{\text{RBFN} \in \mathcal{H}_n \mid |\theta_i| > C_n \} \right) &\pushright{\text{Union bound}} \label{eq:union}.
	\end{align}
	
	Next, independence in the prior will allow us to write each term in Equation \ref{eq:union} as an integral over a single parameter. Define the following sets:
	\begin{align*}
	\mathcal{C}_i(n) &:= \Theta_i \setminus [-C_n, C_n] \\
	\mathcal{R}_i(n) &:= \Theta_1 \times \dotsc \times \Theta_{i-1} \times \mathcal{C}_i(n) \times \Theta_{i+1} \times \dotsc \times \Theta_{I_n}
	\end{align*}
	where $\Theta_i$ is the parameter space corresponding to parameter $\theta_i$ (either $\mathbb{R}$ or $\mathbb{R}^+$). Notice that because $\mathcal{R}_i(n)$ is a union of two rectangular sets (one where $\theta_i$ is less than $-C_n$ and one where $\theta_i$ is greater than $C_n$), we can apply Fubini's theorem.
	Thus, each term in Equation \ref{eq:union} can be written as:	
	\begin{align}
	\pi_n &\left(  \{\text{RBFN} \in \mathcal{H}_n \mid |\theta_i| > C_n \} \right)
	\\
	&=\int \dotsc \int_{\mathcal{R}_i(n)} \pi_n(\theta_1,\dotsc,\theta_{I_n}) d(\theta_1,\dotsc,\theta_{I_n}) \label{eq:apply_fubini}
	\\
	&=\int d\theta_1 \dotsc \int d\theta_{I_n} ~ \pi_n(\theta_1,\dotsc,\theta_{I_n})
	\\
	&=\int d\lambda^2 \int dw_1 \dotsc \int dw_{I_n^{w}}  \int dc_1 \dotsc \int dw_{I_n^{c}} ~\pi_n(\lambda^2) \prod_j \pi_n(c_j\mid \lambda^2) \prod_j \pi_n(w_j) 
	\\
	&=\left(\int d\lambda^2 ~ \pi_n(\lambda^2) \int dc_1 \dotsc \int dc_{I_n^{c}} ~ \prod_j \pi_n(c_j\mid \lambda^2) \right)
	\left(
	\int dw_1 \dotsc \int dw_{I_n^{w}}  ~ \prod_j \pi_n(w_j) 
	\right)
	\\
	&=\left(\int d\lambda^2 ~ \pi_n(\lambda^2) \prod_j \int dc_j ~ \pi_n(c_j\mid \lambda^2) \right)
	\left(
	\prod_j  \int dw_j  ~\pi_n(w_j) 
	\right)
	\\
	&= \begin{cases}
	\int_{\mathcal{C}_n} d\lambda^2 ~\pi_n(\lambda^2) & i=\mathcal{I}^{(\lambda^2)}_n \\
	\int_{\mathcal{C}_n} dw ~\pi_n(w) & i\in \mathcal{I}^{(w)}_n \\
	\int_{\mathcal{R}^+} d\lambda^2 ~\pi_n(\lambda^2) \int_{\mathcal{C}_n} dc_i ~\pi_n(c_i \mid \lambda^2) & i\in \mathcal{I}^{(c)}_n \\
	\end{cases}\label{eq:integral_cases}
	\end{align}
	In Equation \ref{eq:apply_fubini} we apply Fubini's theorem, which allows us to write a multiple integral as an interated integral. It is understood that the $i$th integral is over the restricted parameters space $[-C_n, C_n]$ while the remaining integrals are over the entire parameter space, meaning they integrate to 1. This allows us to write the result in Equation \ref{eq:integral_cases}.
	
	Therefore, by Equations \ref{eq:union} and \ref{eq:integral_cases} we have:
	\begin{align}
	\pi_n\left( \mathcal{F}_n^c \right) &\le 
	\underbrace{\int_{\mathcal{C}_n} d\lambda^2 ~\pi_n(\lambda^2)}_{\lambda^2 \text{ term}}
	+ \underbrace{\sum_{i \in \mathcal{I}_n^{(w)}} \int_{\mathcal{C}_n} dw ~\pi_n(w)}_{W \text{ term}} 
	+ \underbrace{\sum_{i \in \mathcal{I}_n^{(c)}} \int_{\mathcal{R}^+} d\lambda^2 ~\pi_n(\lambda^2) \int_{\mathcal{C}_n} dc_i ~\pi_n(c_i \mid \lambda^2)}_{C \text{ term}}
	\end{align}
	
	\item
	\subparagraph{Bound each term in the sum.}
	
	We will deal with each of these terms separately. 
	\begin{itemize}
		\item \textit{$W$ term.} With some minor difference for the dependence of the number of weight parameters on the network width ($DK_n$ in our case compared to $(D+2)K_n + 1$), equations 119-128 in \citep{lee_consistency_2000} show for all $n\ge N_w$ for some $N_w$:
		$$
		\sum_{i \in \mathcal{I}_n^{(w)}} \int_{\mathcal{C}_i(n)} \pi_n(w_i) ~dw_i \le \exp(-nr)
		$$
		
		\item \textit{$C$ term.} Since the parameter bound $C_n\to\infty$ as $n\to\infty$ and since the prior over the center parameters is defined over a bounded region, as $n\to\infty$ the bounded region will be contained in $[-C_n, C_n]$ and thus disjoint from $\mathcal{C}_i(n) := \Theta_i \setminus [-C_n, C_n]$. Thus, for all $n$ greater than some $N_c$, $\int_{\mathcal{C}_i(n)} \pi_n(c_i) ~dc_i=0$ for all center parameters. 
		
		\item \textit{$\lambda^2$ term}. 
		\begin{align}
		\int \pi_n(\lambda^2) d\lambda &= \int_{C_n}^\infty \frac{{\beta_\lambda}^{\alpha_\lambda}}{\Gamma(\alpha_\lambda)} \lambda^{2(\alpha_\lambda-1)} \exp(-\beta_\lambda^2 \lambda) d\lambda^2
		\\
		&\le \left(\frac{\beta_\lambda C_n}{\alpha_\lambda} \right)^\alpha_\lambda \exp( \alpha_\lambda - \beta_\lambda C_n) 
		&\pushright{\text{Chernoff Bound}} 
		\\
		&\le  \left(\frac{\beta_\lambda e}{\alpha_\lambda} \right)^\alpha \exp(\alpha_\lambda n^{b-a}) \exp(-\beta_\lambda \exp( n^{b-a}))
		&\pushright{C_n \le n^{b-a}} 
		\end{align}
		Taking the negative log we have:
		\begin{align}
		-\log\left( \int \pi_n(\lambda^2) d\lambda^2 \right) &\ge
		\underbrace{-\alpha\log\left(\frac{\beta e}{\alpha} \right)}_{:=A} + \beta \exp(n^{b-a}) - \alpha n^{b-a}
		\\
		&=A + \beta \left(\sum_{j=0}^\infty \frac{(n^{b-a})^j}{j!} \right)	- \alpha n^{b-a}
		\\
		&=A + \beta \left(1 +  n^{b-a} + \frac{1}{2}n^{2(b-a)} + \sum_{j=3}^\infty \frac{(n^{b-a})^j}{j!} \right)	- \alpha n^{b-a}
		\\
		&=\underbrace{(A+\beta) + (\beta-\alpha) n^{b-a} + \frac{1}{2}\beta n^{2(b-a)}}_{:= h(n)} + \beta \sum_{j=3}^\infty \frac{(n^{b-a})^j}{j!} 
		\\
		&= h(n) + \beta \sum_{j=3}^\infty \frac{(n^{b-a})^j}{j!}.
		\end{align}
		Now pick $k^* \in \{3,4,\dotsc \}$ such that $(b-a)k^* \ge 1$, so $n^{(b-a)k^*} \ge n$, and pick any $r\in (0, \beta / (k^*!))$. Then, since every term in the sum is positive, we have:
		\begin{align}
		-\log\left( \int \pi_n(\lambda^2) d\lambda^2 \right) &\ge h(n) + \beta \frac{n^{(b-a)k^*}}{k^*!}
		\\
		&\ge h(n) + \frac{\beta}{k^*!} n
		\\
		&\ge h(n) + r n
		\\
		&\ge r n &\pushright{\forall n\ge N_\lambda},
		\end{align}
		where the last inequality holds because $\beta>0$ and $(b-a)\in(0,1)$ clearly implies there exists an $N_\lambda>0$ such that for all $n\ge N_\lambda$, $h(n)>0$. Negating and exponentiating each side we have:
		\begin{align}
		\int \pi_n(\lambda^2) d\lambda^2 &\le \exp(-r n)  & \pushright{\forall n\ge N_\lambda}.
		\end{align}

	\end{itemize}
	
	\item
	\subparagraph{Bound sum.}
	For any $n\ge N_c$, since the C term is zero in this case, we have:
	\begin{align}
	\pi_n\left( \mathcal{F}_n^c \right) 
	&\le \sum_{i \in \mathcal{I}_n^{(w)}} \int_{\mathcal{C}_i(n)} \pi_n(w_i) ~dw_i
	+  \int_{\mathcal{C}_i(n)} \pi_n(\lambda^2) ~d\lambda^2
	\\
	&\le \exp(-r n) & \pushright{\forall n \ge N}
	\end{align}
	where the last inequality follows from Lemma \ref{lemma:bound_sum_of_exponentials} applied to the sequences:
	\begin{align}
	a_n &:= \sum_{i \in \mathcal{I}_n^{(w)}} \int_{\mathcal{C}_i(n)} \pi_n(w_i) ~dw_i \\
	b_n &:=  \int_{\mathcal{C}_i(n)} \pi_n(\lambda^2) ~d\lambda^2
	\end{align}
	which we already showed to be exponentially bounded above for large $n$. 
	
	\item
	\paragraph{Condition (ii)}
	
	Let $\gamma, \nu>0$.
	\item
	\subparagraph{Assume true regression function.}
	Assume $g_0 \in L_2$ is the true regression function
	
	\item
	\subparagraph{Find RBFN near ground truth function.}
	Set $\epsilon = \sqrt{\gamma/2}$. By Theorem \ref{thm:park_universal_1991} there exists an RBFN $g$ such that $\|g - g_0 \|_2 \le \epsilon$. We assume the center parameters of $g$ are contained in the bounded region $\mathcal{C}$ over which the Poisson process is defined, which can be made arbitrarily large.  
	
	\item
	\subparagraph{Define $M_\delta$.}
	
	Set $\delta = \epsilon / (3 n^a)$ and let $M_\delta$ be defined as in Lemma \ref{lemma:6}. Then by Lemma \ref{lemma:6}, for any $\tilde{g} \in M_\delta$ we have:
	\begin{equation}
	\sup_{x\in\mathcal{X}} \left(\tilde{g}(x) - g(x) \right)^2
	\le \left( 3 \tilde{K}_n \delta \right)^2 = \epsilon^2
	\end{equation}
	
	Next we show that $M_\delta \subset K_\gamma$ for all $\gamma>0$ and appropriately chosen $\delta$. This means we only need to show $\pi_n\left(M_\delta \right) \ge \exp(-n\nu)$, since $M_\delta \subset K_\gamma$ implies $\pi_n\left(K_\gamma \right) \ge \pi_n\left(M_\delta \right)$.
	
	\item
	\subparagraph{Show $M_\delta$ contained in $K_\gamma$.}
	Next we show that for any $\tilde{g}\in M_\delta$, $D_K(f_0, \tilde{f}) \le \gamma$ i.e. $M_\delta \subset K_\gamma$. The following are exactly equations 129-132 and then 147-151 from \cite{lee_consistency_2000}.
	\begin{align}
	D_K(f_0, \tilde{f}) &= \int \int f_0(x,y) \log \frac{f_0(x,y)}{\tilde{f}(x,y)}~dy~dx
	\\
	&= \frac{1}{2} \int\int \left[(y-\tilde{g}(x))^2 - (y-g_0(x))^2 \right] f_0(y\mid x) f_0(x) ~dy~dx
	\\
	&= \frac{1}{2} \int\int \left[-2y\tilde{g}(x) + \tilde{g}(x)^2 + 2yg_0(x) - g_0(x)^2 \right] f_0(y\mid x) f_0(x) ~dy~dx
	\\
	&= \frac{1}{2} \int (\tilde{g}(x) - g_0(x))^2 f_0(x)~dx
	\\
	&= \frac{1}{2} \int (\tilde{g}(x) - g(x) + g(x) - g_0(x))^2 f_0(x)~dx
	\\
	&\le \frac{1}{2}\left[
	\int 
	\underbrace{\sup_{x\in\mathcal{X}} (\tilde{g}(x) - g(x))^2 }_{\text{Lemma } \ref{lemma:6}}
	f_0(x)~dx
	+\int 
	\underbrace{(g(x) - g_0(x))^2}_{\text{Theorem } \ref{thm:park_universal_1991}}
	f_0(x)~dx \right. \\
	&\quad\quad \left. +2 
	\underbrace{\sup_{x\in\mathcal{X}} |\tilde{g}(x) - g(x)|}_{\text{Lemma } \ref{lemma:6}}
	\int  
	\underbrace{|g(x) - g_0(x)| }_{\text{Theorem } \ref{thm:park_universal_1991}}
	f_0(x) dx
	\right]
	\\
	&< \frac{1}{2}[\epsilon^2 + \epsilon^2 + 2\epsilon^2] 
	\\
	&=2\epsilon^2 = \gamma
	\end{align}
	
	\item
	\subparagraph{Show mass on $M_\delta$ is greater than exponential}
	\begin{align*}
	\pi_n\left(M_\delta \right) &= 
	\int_{\theta_1-\delta}^{\theta_1+\delta} \dotsc
	\int_{\theta_{\tilde{I}_n}-\delta}^{\theta_{\tilde{I}_n}+\delta} 
	\pi_n(\tilde{\theta}_1, \dotsc, \tilde{\theta}_{\tilde{I}_n}) 
	~d\tilde{\theta}_1 \dotsc d\theta_{\tilde{I}_n}
	\\
	&= 
	\int_{\theta_1-\delta}^{\theta_1+\delta} \dotsc
	\int_{\theta_{\tilde{I}_n}-\delta}^{\theta_{\tilde{I}_n}+\delta} 
	\pi_n(\tilde{\lambda^2}) \prod_i \pi_n(\tilde{c}_i \mid \tilde{\lambda^2} ) \prod_i \pi_n(w )
	~d\tilde{\theta}_1 \dotsc d\theta_{\tilde{I}_n} 
	\\
	&= 
	\int_{\lambda^2-\delta}^{\lambda^2+\delta} \pi_n(\tilde{\lambda^2}) \prod_{i=1}^{\tilde{I}_n^{(c)}} \int_{c_i-\delta}^{c_i+\delta} \pi_n(\tilde{c}_i \mid \tilde{\lambda^2}) ~d\tilde{c}_i~d\tilde{\lambda^2} 
	\times
	\prod_{i=1}^{\tilde{I}_n^{(w)}}  \int_{w_i-\delta}^{w_i+\delta} \pi_n(\tilde{w}_i) ~d\tilde{w}_i
	\\
	&= 
	\int_{\lambda^2-\delta}^{\lambda^2+\delta} \pi_n(\tilde{\lambda^2}) 
	\prod_{i=1}^{\tilde{I}_n^{(c)}} \int_{c_i-\delta}^{c_i+\delta} 
	\frac{1}{\mu(\mathcal{C})} 1_{[\tilde{c}_i \in \mathcal{C}]}
	~d\tilde{c}_i~d\tilde{\lambda^2} 
	\times
	\prod_{i=1}^{\tilde{I}_n^{(w)}}  \int_{w_i-\delta}^{w_i+\delta} \pi_n(\tilde{w}_i) ~d\tilde{w}_i
	\\
	&= 
	\underbrace{\int_{\lambda^2-\delta}^{\lambda^2+\delta} \pi_n(\tilde{\lambda^2}) ~d\tilde{\lambda^2}}_{\lambda^2 \text{ term}}
	\times
	\underbrace{\prod_{i=1}^{\tilde{I}_n^{(c)}} \int_{c_i-\delta}^{c_i+\delta} 
		\frac{1}{\mu(\mathcal{C})} 1_{[\tilde{c}_i \in \mathcal{C}]}
		~d\tilde{c}_i}_{\text{C term}}
	\times
	\underbrace{\prod_{i=1}^{\tilde{I}_n^{(w)}}  \int_{w_i-\delta}^{w_i+\delta} \pi_n(\tilde{w}_i) ~d\tilde{w}_i}_{W\text{ term}} 
	\end{align*}
	
	\begin{itemize}
		\item W term. The following correspond to equations 138-145 from \citep{lee_consistency_2000}.
		\begin{align}
		\text{W term} &= \prod_{i=1}^{\tilde{I}_m^{(w)}}  \int_{w_i-\delta}^{w_i+\delta} \pi_n(\tilde{w}_i) ~d\tilde{w}_i
		\\
		&=\prod_{i =1}^{\tilde{I}_n^{(w)}} \int_{w_i-\delta}^{w_i + \delta} (2\pi \sigma^2_w)^{-1/2} \exp\left(-\frac{1}{2\sigma^2_w} \tilde{w}_i^2 \right) ~d\tilde{w}_i			
		\\
		&\ge \prod_{i =1}^{\tilde{I}_n^{(w)}} 2\delta \inf_{\tilde{\theta}_i \in [\theta_i-1, \theta_i+1]} \left \{ (2\pi \sigma^2_w)^{-1/2} \exp\left(-\frac{1}{2\sigma^2_w} \tilde{w}_i^2 \right) \right\}
		\\
		&\ge \prod_{i =1}^{\tilde{I}_n^{(w)}} \delta \sqrt{\frac{2}{\pi \sigma^2_w}}  \exp\left(-\frac{1}{2\sigma^2_w} \zeta_i \right) & \pushright{\zeta_i:=\max\{(\theta_i-1)^2, (\theta_i+1)^2 \} }
		\\
		&\ge \left( \delta \sqrt{\frac{2}{\pi \sigma^2_w}} \right)^{\tilde{I}_n^{(w)}}  \exp\left(-\frac{1}{2\sigma^2_w} \zeta \tilde{I}_n^{(w)} \right) & \pushright{\zeta:=\max\{ \zeta_1,\dotsc,\zeta_{\tilde{I}_n^{(w)}} \} }
		\\
		&= \exp\left(- \tilde{I}_n^{(w)} \left[\delta^{-1} \sqrt{\frac{\pi \sigma^2_w}{2}}\right] \right)
		\exp\left(-\frac{1}{2\sigma^2_w} \zeta \tilde{I}_n^{(w)} \right)
		\\
		\\
		&= \exp\left(- \tilde{I}_n^{(w)} \left[\frac{3n^a}{\epsilon} \sqrt{\frac{\pi \sigma^2_w}{2}}\right] -\frac{1}{2\sigma^2_w} \zeta \tilde{I}_n^{(w)}\right)
		\\
		&= \exp\left(- \tilde{I}_n^{(w)} \left[a\log n - \log\sqrt{\frac{9\pi \sigma^2_w}{2\epsilon^2}} + \frac{1}{2\sigma^2_w} \zeta\right] \right)
		\\
		&= \exp\left(- \tilde{I}_n^{(w)} \left[2a\log n + \frac{1}{2\sigma^2_w} \zeta  \right]\right) &\pushright{\text{for large } n} \label{eq:w_large_n}
		\\
		&\ge \exp\left(- D n^a \left[2a\log n + \frac{1}{2\sigma^2_w} \zeta  \right]\right)
		&\pushright{\tilde{I}_n^{(w)} \le  D n^a}
		\\
		&\ge \exp(-\nu n) &\pushright{\text{for large } n} \label{eq:w_large_n_2}
		\end{align}
		Let $N_w$ denote the integer large enough so that Equations \ref{eq:w_large_n} and \ref{eq:w_large_n_2} hold for $\nu/3$.
		
		\item C term.
		
		\begin{align}
		\prod_{i=1}^{\tilde{I}_n^{(c)}} \int_{c_i-\delta}^{c_i+\delta} 
		\frac{1}{\mu(\mathcal{C})} 1_{[\tilde{c}_i \in \mathcal{C}]}
		~d\tilde{c}_i
		&\ge \prod_{i=1}^{\tilde{I}_n^{(c)}} \frac{\delta}{\mu(\mathcal{C})}
		\\
		&\ge \left( \frac{\delta}{\mu(\mathcal{C})} \right)^{\tilde{I}_n^{(c)}}
		\\
		&=\exp\left(-D n^a \log\left[\frac{\mu(\mathcal{C})}{\delta} \right] \right)
		\\
		&=\exp\left(-D n^a \log\left[\frac{3\mu(\mathcal{C}) n^a}{\epsilon} \right] \right)
		\\
		&=\exp\left(-D n^a \left[ a\log n - \log\left(\frac{3\mu(\mathcal{C})}{\epsilon} \right) \right] \right)
		\\
		&=\exp\left(-D n^a \left[ 2a\log n \right] \right) &\pushright{\text{for large } n} \label{eq:c_large_n}
		\\
		&=\exp\left(-2aD n^a \log n \right)
		\\
		&\ge \exp(-\nu n) &\pushright{\text{for large } n} \label{eq:c_large_n_2}
		\end{align}
		Let $N_c$ denote the integer large enough so that Equations \ref{eq:c_large_n} and \ref{eq:c_large_n_2} hold for $\nu/3$.
		
		\item $\lambda^2$ term.
		\begin{align}
		\int_{\lambda^2-\delta}^{\lambda^2+\delta} \pi_n(\tilde{\lambda^2}) ~d\tilde{\lambda^2}
		&= \int_{[\lambda^2-\delta,\lambda^2+\delta]\cap \mathbb{R}^+ } \frac{\beta^\alpha}{\Gamma(\alpha)} \tilde{\lambda^2}^{\alpha-1} \exp(-\beta \tilde{\lambda^2})~d\tilde{\lambda^2} 
		\\
		&\ge \delta \left(\inf_{\tilde{\lambda^2}\in [\lambda^2-\delta,\lambda^2+\delta]\cap \mathbb{R}^+}\left\{
		\frac{\beta^\alpha}{\Gamma(\alpha)} \tilde{\lambda^2}^{\alpha-1} \exp(-\beta \tilde{\lambda^2})
		\right\} \right) \label{eq:delta_not_2delta}
		\\
		&\ge \delta   \underbrace{\left(\inf_{\tilde{\lambda^2}\in [\lambda^2-1,\lambda^2+1]\cap \mathbb{R}^+}\left\{
			\frac{\beta^\alpha}{\Gamma(\alpha)} \tilde{\lambda^2}^{\alpha-1} \exp(-\beta \tilde{\lambda^2})
			\right\} \right)}_{:=A} & \pushright{\text{for large } n} \label{eq:delta_large_n}
		\\
		&= \delta A 
		\\
		&= \frac{A\epsilon}{3 n^a} 
		\\
		&\ge \exp(-\nu n) & \pushright{\text{for large n}} \label{eq:delta_large_n_2}
		\end{align}
		In Equation \ref{eq:delta_not_2delta} we note that the length of the interval $[\lambda^2-\delta,\lambda^2+\delta]\cap \mathbb{R}^+$ is at least $\delta$, since $\lambda^2 \in \mathbb{R}^+$. In Equation \ref{eq:delta_large_n} we note that $\delta<1$ for large $n$, allowing us to define the quantity $A$ that does not depend on $n$. Let $N_\lambda$ denote the integer large enough so that Equations \ref{eq:delta_large_n} and \ref{eq:delta_large_n_2} hold for $\nu/3$.
		
	\end{itemize}
	
	\item
	\subparagraph{Bound product}
	
	Set $N_2 = \max\{N_w, N_c, N_\lambda \}$. Then for all $n\ge N_2$:
	\begin{align*}
	\pi_n\left(M_\delta \right) &\ge \exp(-n \nu/3) \exp(-n \nu/3) \exp(-n \nu/3) \\
	&= \exp(-n \nu) 
	\end{align*}
	This shows condition (ii). Thus, the conditions of Theorem \ref{thm:lee_consistency_2000_sieves} are met, so the model is Hellinger consistent. By Corollary \ref{thm:lee_consistency_2000_corollary1} this gives asymptotic consistency.

\end{proof}

Now we consider the case where the number of hidden units $K$ of the network is a parameter of the model. Since the center parameters follow a Poisson process prior with intensity $\lambda$ over the region $\mathcal{C}$, then conditional on $\lambda$, $K$ follows a Poisson distribution with parameter $\mu(\mathcal{C})\lambda$, where $\mu$ is the measure of $\mathcal{C}$.  We again denote the estimated regression function by $\hat{g}_n$ with the understanding that the number of hidden units not fixed. 

\begin{theorem} (PoRB-NET consistency for homogeneous intensity).
	For any square integrable ground truth regression function $g_0$, $\hat{g}_n$ is asymptotically consistent for $g$ as $n\to\infty$, i.e.
	\begin{equation}
	\int (\hat{g}_n(x) - g_0(x))^2 dx \overset{p}{\to} 0.
	\end{equation}
\end{theorem}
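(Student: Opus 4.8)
The plan is to reduce the statement to Theorem \ref{thm:lee_consistency_2000_poisson}, whose four hypotheses I would check in turn, and then invoke Corollary \ref{thm:corollary2} to pass from Hellinger consistency to $L_2$ consistency of $\hat g_n$. Here $\lambda_i:=\Pr[K=i]=\mathbb{E}_{\lambda}[e^{-\mu(\mathcal{C})\lambda}(\mu(\mathcal{C})\lambda)^i/i!]$ is the marginal prior mass on networks with exactly $i$ hidden units (with $\lambda=\sqrt{\lambda^2}$ and $\lambda^2\sim\mathrm{Gamma}(\alpha_\lambda,\beta_\lambda)$), and $\pi_i$ is the prior on the remaining parameters given $K=i$. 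Condition (iv), $\lambda_i>0$ for all $i$, is immediate because the integrand above is strictly positive on the support $(0,\infty)$ of $\lambda$.

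Conditions (i) and (ii) are the fixed-width analogues of the two conditions verified in the previous theorem, and are in fact simpler here since the width $i$ no longer grows with $n$. For (i) I would reuse the decomposition of $\pi_i(\mathcal{F}_n^c)$ into a weight term, a center term and a $\lambda^2$ term: the center term vanishes once $C_n$ exceeds the radius of the bounded region $\mathcal{C}$; the weight term is a finite sum of Gaussian tail integrals, each $\le\exp(-c\,C_n^2)\ll\exp(-r_i n)$ because $C_n\le\exp(n^{b-a})$; and the $\lambda^2$ term is controlled by the same Chernoff bound on a Gamma tail as before, so Lemma \ref{lemma:bound_sum_of_exponentials} yields (i) with an $i$-dependent rate $r_i$. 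For (ii), fix $\gamma,\nu>0$, set $\epsilon=\sqrt{\gamma/2}$, and use Theorem \ref{thm:park_universal_1991} to obtain an RBFN $g^{(\bar K)}$ with centers in $\mathcal{C}$ and $\|g^{(\bar K)}-g_0\|_2\le\epsilon$; choose $I$ larger than $\bar K$ and larger than the ($g^{(\bar K)}$-dependent) constant appearing in Lemma \ref{lemma:6}. For each $i>I$, apply Lemma \ref{lemma:6} with target $g^{(\bar K)}$ and perturbing over $i$-unit networks (its $M_\delta$-neighborhood already treats the extra units' parameters as centred at zero), with $\delta=\epsilon/(3i)$: then $\sup_x(\tilde g(x)-g^{(\bar K)}(x))^2\le(3i\delta)^2=\epsilon^2$ for every $\tilde g\in M_\delta$, and the same chain of inequalities bounding $D_K(f_0,\tilde f)$ as in the previous proof gives $M_\delta\subset K_\gamma$. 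Since $i$ and $\delta$ are fixed, $\pi_i(M_\delta)$ is a product of finitely many strictly positive integrals, hence a positive constant, so $\pi_i(K_\gamma)\ge\pi_i(M_\delta)\ge\exp(-\nu n)$ for all large $n$.

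For Condition (iii) I would take $B_n=n^2$. Because $\lambda^2\sim\mathrm{Gamma}$ has a sub-exponential tail and $x\mapsto\sqrt{x}$ is sublinear, $\lambda$ has a finite moment generating function, so the probability generating function $z\mapsto\mathbb{E}_\lambda[\exp(\mu(\mathcal{C})\lambda(z-1))]$ of $K$ is finite on all of $\mathbb{R}$, and a Chernoff/Stirling estimate gives $\Pr[K\ge m]\le\exp(-\tfrac12 m\log m+O(m))$. Hence $\sum_{i\ge B_n}\lambda_i\le\exp(-c\,n^2\log n)$ for large $n$, which is $<\exp(-r n^{3/2})$ eventually for every $r>0$; so $q=3/2$ works. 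With (i)--(iv) in hand, Theorem \ref{thm:lee_consistency_2000_poisson} delivers Hellinger consistency of the posterior and Corollary \ref{thm:corollary2} gives $\int(\hat g_n(x)-g_0(x))^2\,dx\xrightarrow{p}0$.

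I expect the main obstacle to be the uniformity built into Condition (ii): unlike in the sieve theorem, the bound $\pi_i(K_\gamma)\ge\exp(-\nu n)$ must hold for all widths $i$ beyond a threshold, which requires choosing $I$ carefully and verifying via Lemmas \ref{lemma:1} and \ref{lemma:6} that the $M_\delta$-neighborhood of the (implicitly zero-padded) approximant stays inside $K_\gamma$ for every such $i$; a secondary point requiring care is pinning down the tail of the marginal law of $K$ used in Condition (iii).
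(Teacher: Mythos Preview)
Your proposal takes essentially the same route as the paper: reduce to Theorem~\ref{thm:lee_consistency_2000_poisson} by checking its four hypotheses and then invoke Corollary~\ref{thm:corollary2}. The paper's own argument is in fact terser---it simply cites Theorem~7 of \cite{lee_consistency_2000} for the verification of (i)--(iv) in the Poisson-width case and notes that Theorem~\ref{thm:lee_consistency_2000_poisson} has already been adapted to RBFNs---so your sketch is a more explicit version of what the paper leaves to the reference; in particular you are more careful than the paper in treating the width prior as a Gamma-mixed Poisson rather than a pure Poisson when handling Condition~(iii).
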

\begin{proof}
	Since the number of hidden units follows a Poisson prior, the proof of this result is exactly as in Theorem 7 of \cite{lee_consistency_2000}. Their result relies on their Theorem 8, but we have adapted this result in Theorem \ref{thm:lee_consistency_2000_poisson} to our model and the remainder of the proof requires no additional assumptions regarding the model. Asymptotic consistency follows from Corollary \ref{thm:corollary2}.
\end{proof}

\newpage
\section{Appendix: Other Synthetic Results}\label{sec:results_app}

The following synthetic examples illustrate desirable aspects of the proposed approach. In particular, PoRB-NET allows to:
\begin{enumerate}
	\item easily specify lengthscale and signal variance information
	\item adapt the network architecture based on the data
	\item encode different degrees of uncertainty in out-of-sample regions
\end{enumerate}


\subsection{PoRB-NETs can express stationary priors}

For a BNN (top row) and PoRB-NET (bottom row), Figure \ref{fig:priors} shows for 10k prior function samples the functions themselves, a histogram of the upcrossings of the x-axis, and a plot of $\text{Cov}(x-k/2, x+k/2)$, for $k=0,1,2$, which is essentially a variogram (from left to right). Each network has $N(0, 1)$ priors on all weight and bias parameters, while the PoRB-NET has a $N(1,0)$ prior on the scale parameters and a uniform intensity over $[-10,10]$ for the center parameters. 

\begin{figure}[H]
	\centering
	\begin{subfigure}[t]{0.3\textwidth}
		\centering
		\includegraphics[height=1.2in]{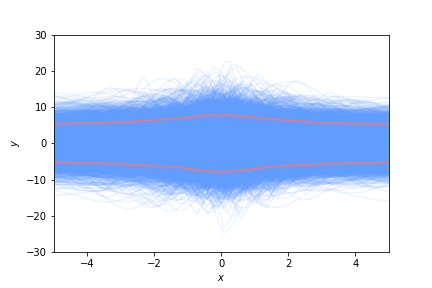}
	\end{subfigure}%
	~
	\begin{subfigure}[t]{0.3\textwidth}
		\centering
		\includegraphics[height=1.2in]{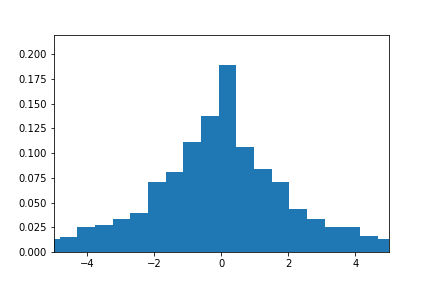}
	\end{subfigure}
	~
	\begin{subfigure}[t]{0.3\textwidth}
		\centering
		\includegraphics[height=1.2in]{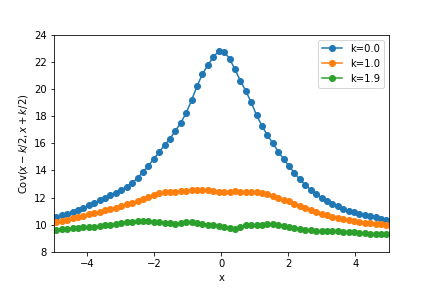}
	\end{subfigure}
	
	\begin{subfigure}[t]{0.3\textwidth}
		\centering
		\includegraphics[height=1.2in]{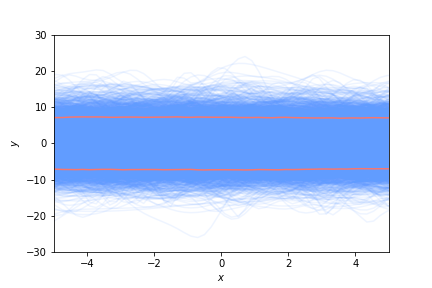}
		\caption*{Prior function samples}
	\end{subfigure}%
	~
	\begin{subfigure}[t]{0.3\textwidth}
		\centering
		\includegraphics[height=1.2in]{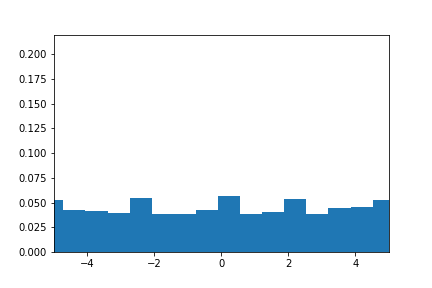}
		\caption*{Upcrossings of 0}
	\end{subfigure}
	~
	\begin{subfigure}[t]{0.3\textwidth}
		\centering
		\includegraphics[height=1.2in]{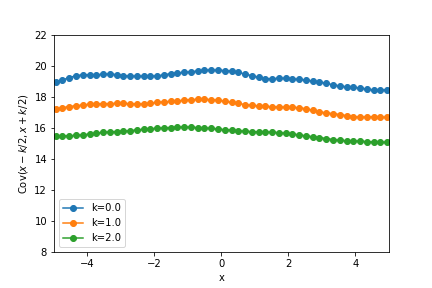}
		\caption*{Variogram}
	\end{subfigure}
	
	\caption{Priors for BNN (top row) vs. PoRB-NET (bottom row) \label{fig:priors}}
\end{figure}

Consequently, as shown in Figure \ref{fig:posterior}, modeling stationary functions away from the origin with a regular BNN (assuming the standard independent $w_k\sim\mathcal{N}(0,\sigma^2_w)$ and $b_k\sim\mathcal{N}(0,\sigma^2_b)$ priors on the weights and biases) requires making a tradeoff: Set the prior variance small, resulting in inability to capture the function and underestimated uncertainty away from the origin (left panel), or set the prior variance larger, resulting in nonstationary uncertainty (middle panel). Perhaps for some examples a ``sweet spot'' for this tradeoff exists, but a PoRB-NET is robust to this choice because the prior can easily be made stationary (right panel). For the PoRB-NET, we use a fixed uniform intensity defined over $[-5,5]$ and scaled so the prior expected network with is 12. Each BNN has 12 hidden units. 

\begin{figure}[H]
	\centering
	\begin{subfigure}[t]{0.3\textwidth}
		\centering
		\includegraphics[height=1.2in]{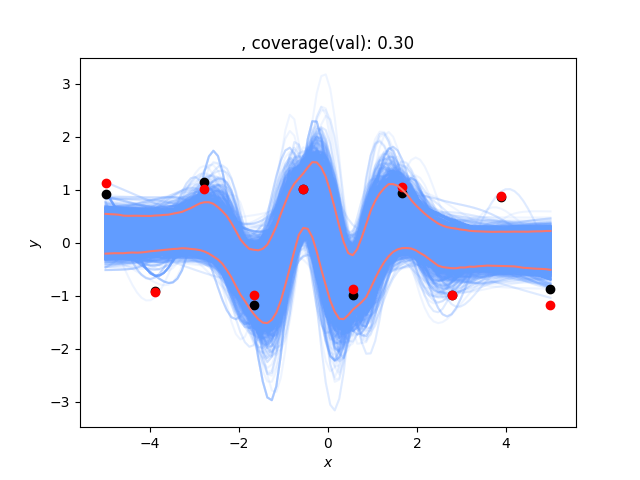}
		\caption{BNN with $\sigma^2_w=\sigma^2_b=1$.}
	\end{subfigure}%
	~
	\begin{subfigure}[t]{0.3\textwidth}
		\centering
		\includegraphics[height=1.2in]{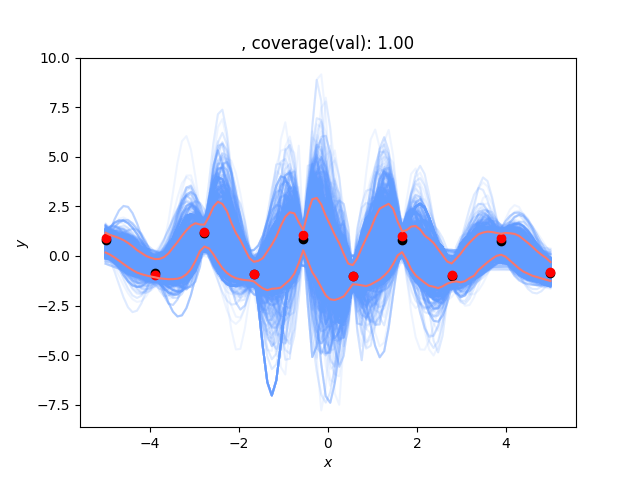}
		\caption{BNN with $\sigma^2_w=\sigma^2_b\approx 7.4$.}
	\end{subfigure}
	~
	\begin{subfigure}[t]{0.3\textwidth}
		\centering
		\includegraphics[height=1.2in]{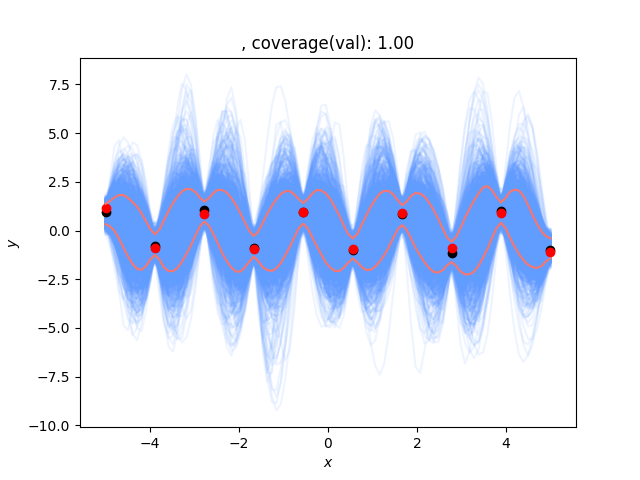}
		\caption{PoRB-NET.}
	\end{subfigure}
	\caption{For stationary functions and small amounts of data, a BNN faces a tradeoff between underestimated uncertainty away from the origin (left panel) versus overestimated uncertainty near the origin (right panel). A PoRB-NET can capture stationary functions (right panel). \label{fig:posterior}}
\end{figure}

\begin{figure}[H]
	\centering
	\begin{subfigure}[t]{0.3\textwidth}
		\centering
		\includegraphics[height=1.2in]{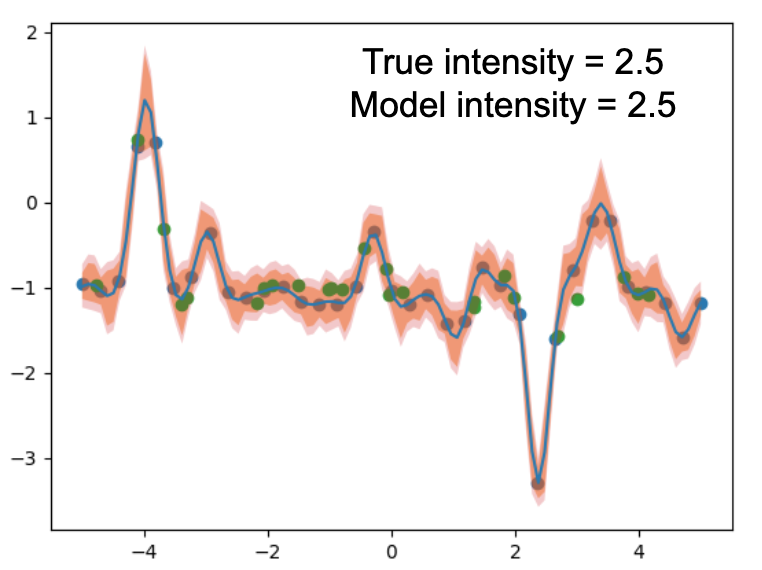}
		\caption{True LS low, Model LS low \\
			Test log likelihood: 11.84}
	\end{subfigure}%
	~
	\begin{subfigure}[t]{0.3\textwidth}
		\centering
		\includegraphics[height=1.2in]{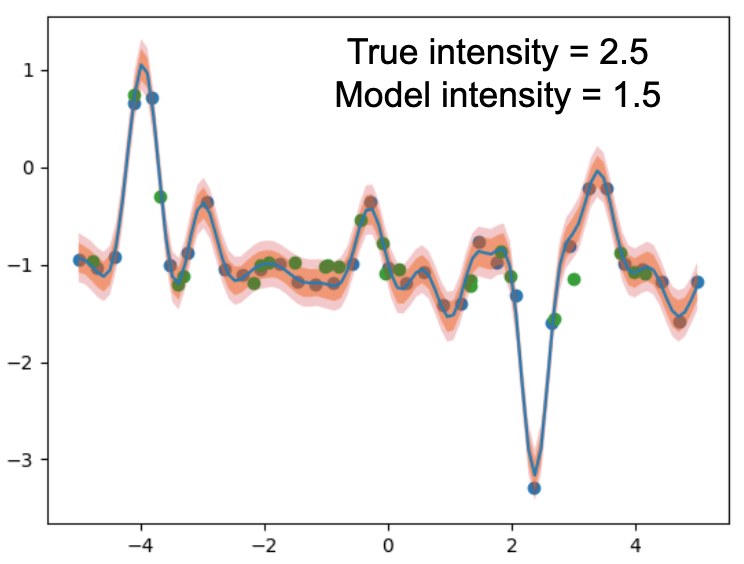}
		\caption{True LS low, Model LS high \\
			Test log likelihood: 10.97}	
	\end{subfigure}
	
	\begin{subfigure}[t]{0.3\textwidth}
		\centering
		\includegraphics[height=1.2in]{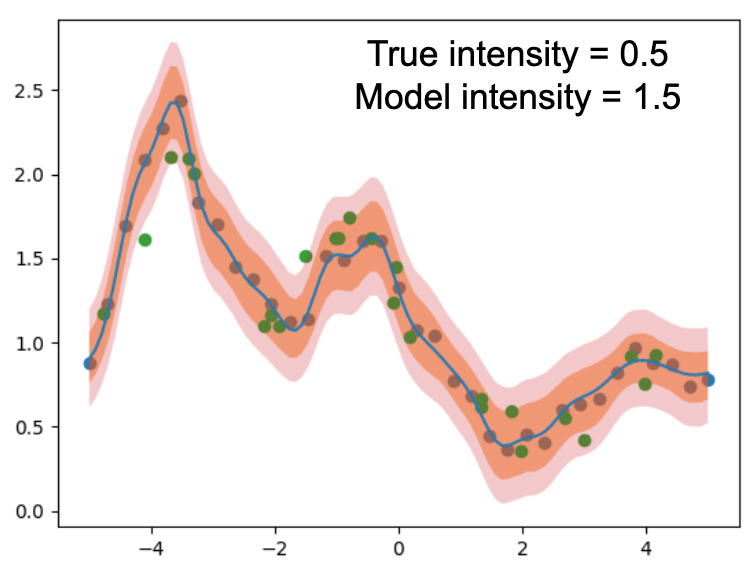}
		\caption{True LS high, Model LS low \\
			Test log likelihood: 8.27}
	\end{subfigure}%
	~
	\begin{subfigure}[t]{0.3\textwidth}
		\centering
		\includegraphics[height=1.2in]{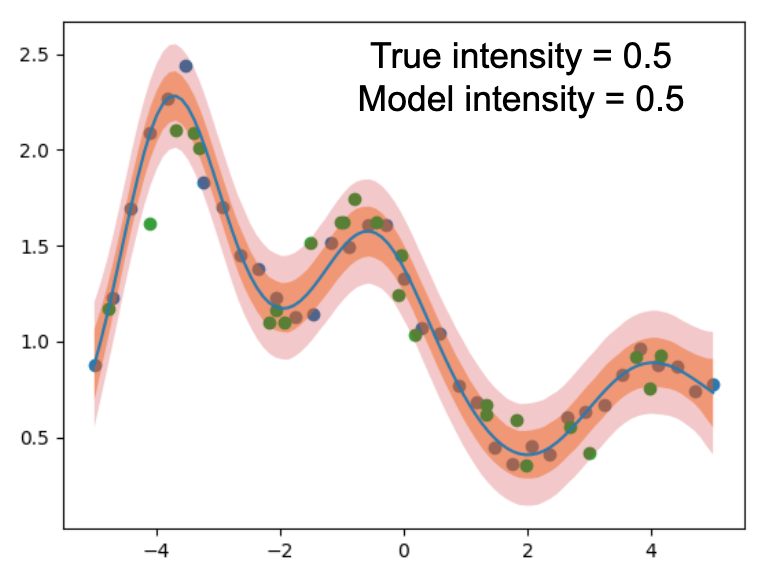}
		\caption{True LS high, Model LS high \\
			Test log likelihood: 10.51}
	\end{subfigure}
	
	\caption{Left-to-right: increased lengthscale (LS) for the PoRB-NET model. Top-to-bottom: increased lengthscale for the true function (drawn from a PoRB-NET prior). Notice that matching to the true intensity results in higher test log likelihood}. 
\end{figure}

\subsection{PoRB-NETs can adjust architecture based on data}

The posterior function samples in the left panel and middle panels, respectively, of Figure \ref{fig:toy1} show that a BNN with only 3 nodes has insufficient capacity to model this noisy sine wave, while a PoRB-NET, although initialized to and having a prior expectation of 3 nodes, is able to increase its capacity to between 5 and 7 nodes in response to the data. To isolate the impact of adaptive architecture rather than different prior specification, the intensity function of the PoRB-NET is 5 times larger inside $[-1,1]$ than elsewhere in $[-5,5]$, which yields a qualitatively similar prior distribution in function space to the BNN by concentrating the hidden unit centers near the origin. 

\begin{figure}[H]
	\centering
	\begin{subfigure}[t]{0.3\textwidth}
		\centering
		\includegraphics[height=1.2in]{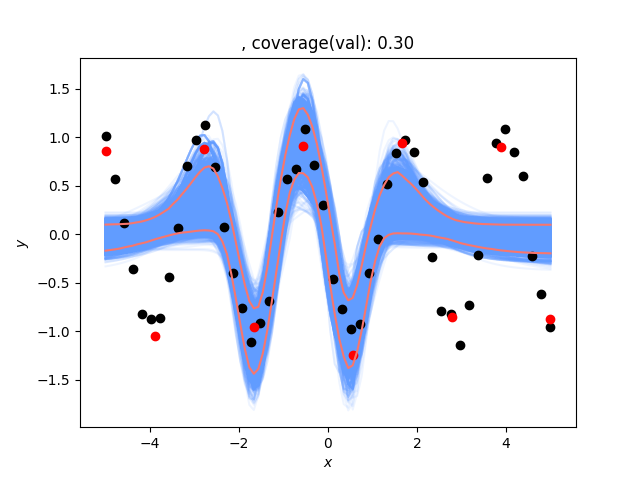}
		\caption{BNN: Posterior function samples.}
	\end{subfigure}%
	~
	\begin{subfigure}[t]{0.3\textwidth}
		\centering
		\includegraphics[height=1.2in]{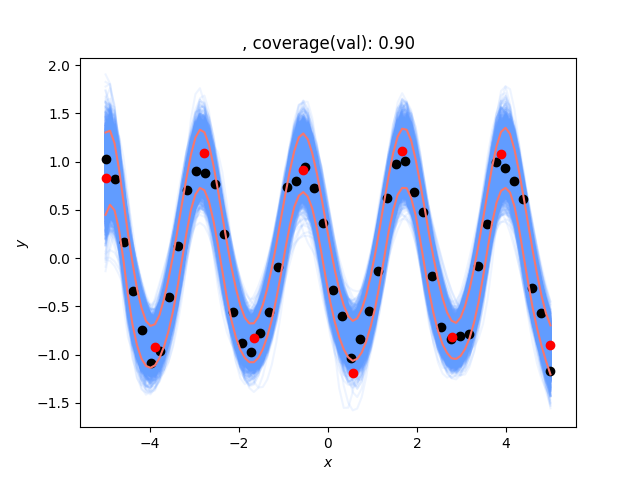}
		\caption{PoRB-NET: Posterior function samples.}
	\end{subfigure}
	~
	\begin{subfigure}[t]{0.3\textwidth}
		\centering
		\includegraphics[height=1.2in]{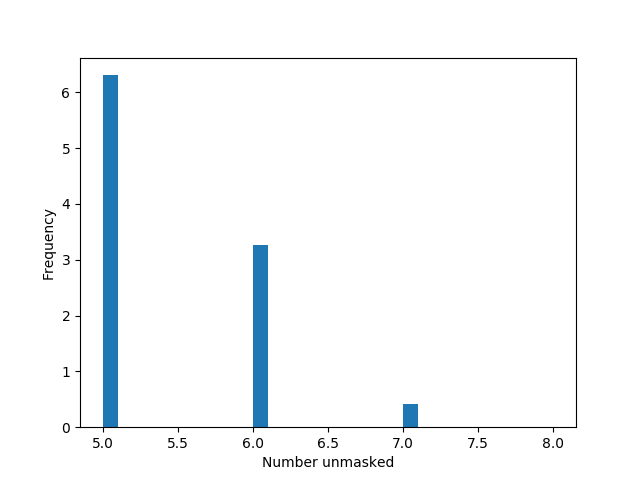}
		\caption{PoRB-NET: Posterior distribution of network width.}
	\end{subfigure}
	\caption{A BNN with only 3 nodes has insufficient capacity to model this data, while a PoRB-NET with 3 nodes in prior expectation is able to adjust its capacity in response to the data, settling on between 5 and 7 nodes.  \label{fig:toy1}}
\end{figure}

There are two perspectives on the advantages of an adaptive architecture. 
(i) If the data is more complex than expected in the prior, a PoRB-NET is somewhat robust to this choice, while a BNN will fail miserably. (ii) If the data is less complex than expected in the prior (suppose 12 nodes are expected for the example in Figure \ref{fig:toy1}), by intentionally choosing a prior that well underspecifies the expected capacity (3 nodes), the posterior will shrink towards a smaller architecture that can still model the data (5-7 nodes), while a BNN will stick to the larger architecture (12 nodes), leading to unnecessary computation and potential overfitting.

\subsection{PoRB-NETs can adjust the uncertainty in gaps in the training data}

By increasing the intensity in a gap in the training data, the lengthscale is reduced. Figure \ref{fig:gap} shows this for different piecewise constant intensity functions that are increased in the middle gap in the data. 

\begin{figure}[H]
	\centering
	\begin{subfigure}[t]{0.3\textwidth}
		\centering
		\includegraphics[height=1.2in]{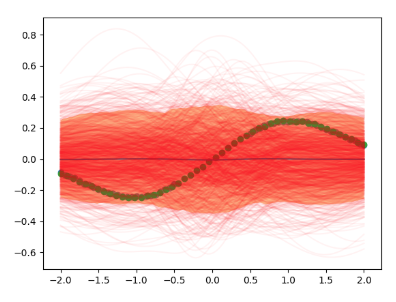}
	\end{subfigure}%
	~
	\begin{subfigure}[t]{0.3\textwidth}
		\centering
		\includegraphics[height=1.2in]{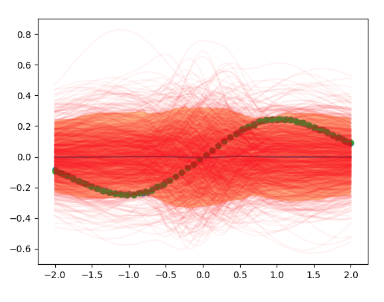}
	\end{subfigure}
	~
	\begin{subfigure}[t]{0.3\textwidth}
		\centering
		\includegraphics[height=1.2in]{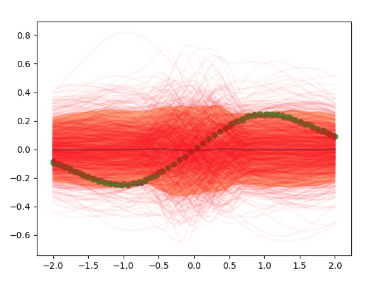}
	\end{subfigure}
	
	\begin{subfigure}[t]{0.3\textwidth}
		\centering
		\includegraphics[height=1.2in]{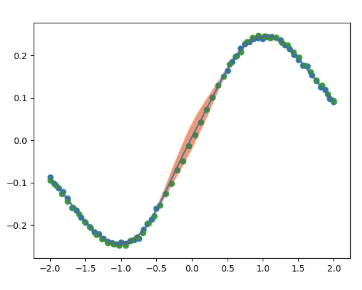}
		\caption*{2x intensity in gap}
	\end{subfigure}%
	~
	\begin{subfigure}[t]{0.3\textwidth}
		\centering
		\includegraphics[height=1.2in]{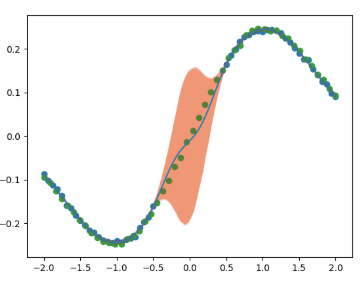}
		\caption*{3x intensity in gap}
	\end{subfigure}
	~
	\begin{subfigure}[t]{0.3\textwidth}
		\centering
		\includegraphics[height=1.2in]{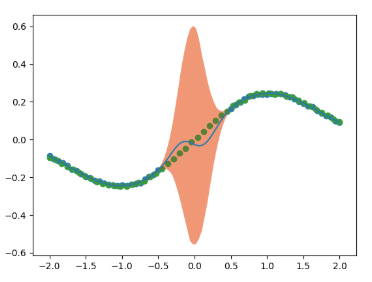}
		\caption*{4x intensity in gap}
	\end{subfigure}
	
	\caption{By adusting the Poisson process intensity a gap in the data (note that green points are test observations), the out of sample uncertainty can be adjusted. Higher intensity results in a smaller length scale.  \label{fig:gap}}
\end{figure}

\newpage

\section{Appendix: Other Results on Real Datasets}

\subsection{Inferred intensities}
Figure \ref{fig:intensity} shows the inferred intensities for the three real datasets discussed in the paper. Figure \ref{fig:realdb} is reproduced from main text for ease of comparison with Figure \ref{fig:intensity}.  The inferred intensities -- which represent an inverse lengthscale -- are larger near the most quickly-changing regions of the data. For example, in the motorcycle the function varies most in the middle. In the mimic dataset, the intensity is higher between the two spikes in the data. The lengthscale parameter of the GP in the inferred intensity is the parameter that adjusts the degree of smoothing between the spikes. For the finance dataset, the inferred intensity is higher near the regions of higher volatility, towards the beginning and end of the time series.

\begin{figure}[H]
	\includegraphics[width=1.\textwidth]{./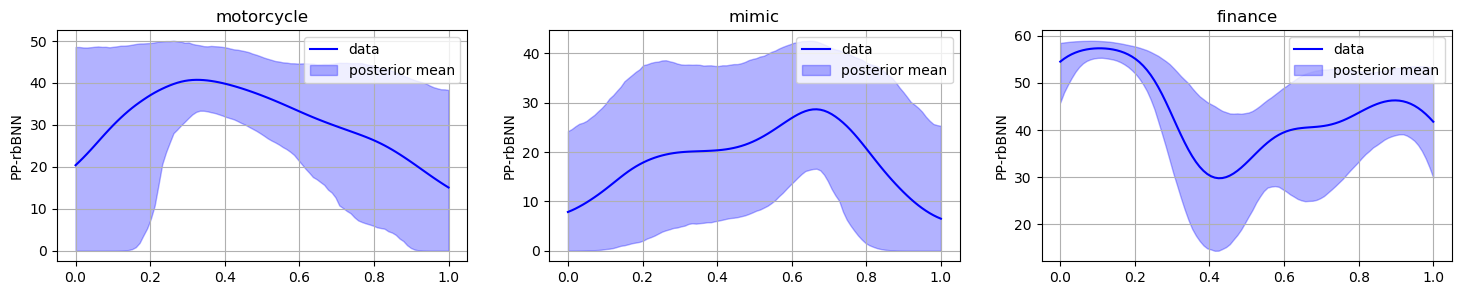}
	\caption{Inferred intensities for the PoRB-NET for the three real datasets. }\label{fig:intensity}
\end{figure}

\begin{figure}[H]
	\begin{center}
		\includegraphics[width=1.0\textwidth]{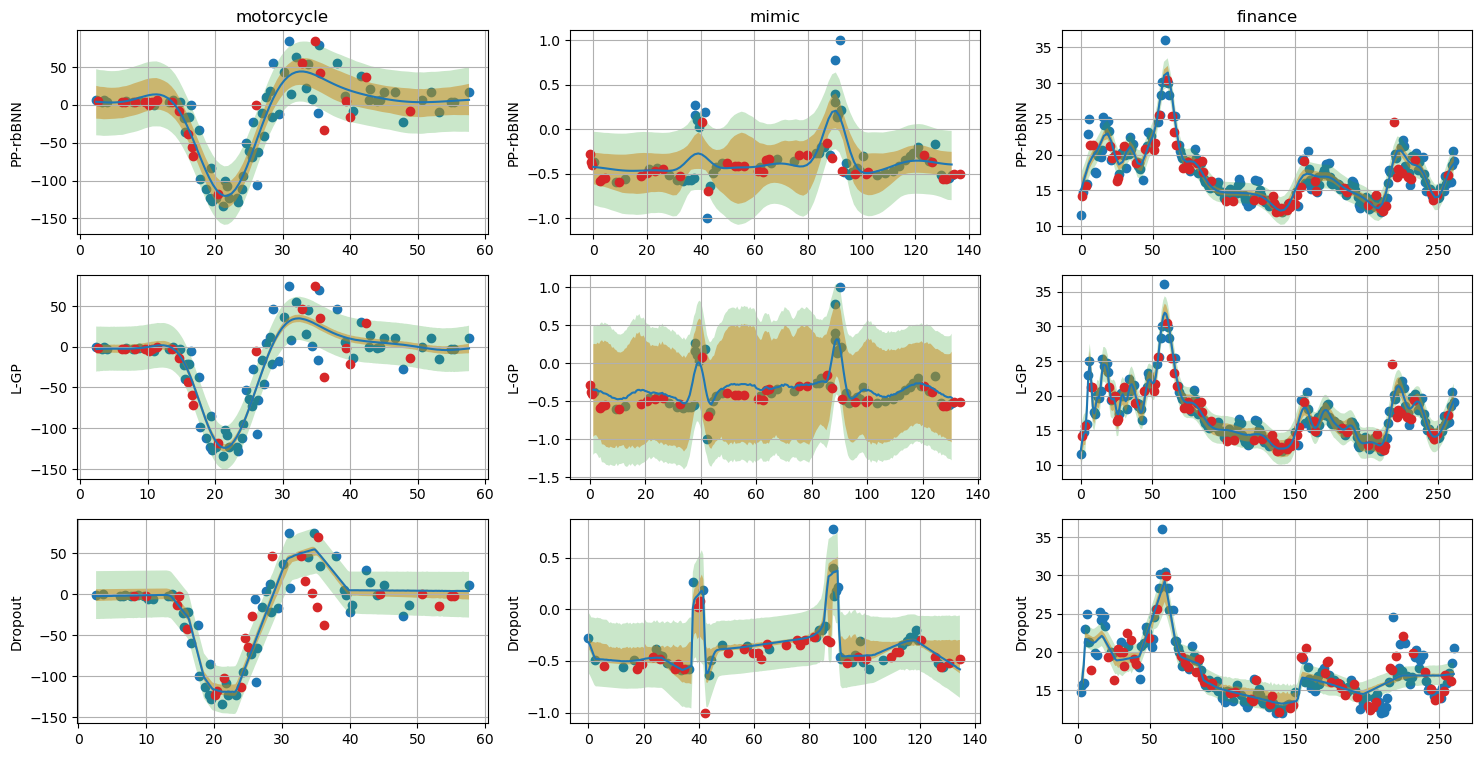}
		\caption{\textbf{PoRB-NET is able to capture non-stationary patterns in real scenarios.} Posterior predictive of PoRB-NET in three real datasets, in comparison to a GP with input-dependent length-scale (L-GP)~\cite{heinonen_non-stationary_2015}, and Dropout~\cite{gal_dropout_2015}. }\label{fig:realdb}
	\end{center}		
	\vspace{-0.4cm}
\end{figure}

We plot the posterior predictive densities for our model, and two other baselines: a GP with input-dependent length-scale (L-GP)~\cite{heinonen_non-stationary_2015}, 
and Dropout~\cite{gal_dropout_2015}.
PoRB-NET is able to capture complex non-stationary patterns in the data and yield uncertainties that mimic the behavior of L-GP. Interestingly, the learned intensity picks whenever the function exhibit faster variations (see the corresponding figure with the learned intensity function in the Appendix).

\section{Details on Experimental Setup}

\begin{itemize}
	\item Motorcycle dataset: Motorcycle accident data of Silverman (1985) tracks the acceleration force on the head of a motorcycle rider in the first moments after impact.
	\item CBE volatility index: downloaded from: \url{https://fred.stlouisfed.org/series/VIXCLS}
\end{itemize}

\subsection{Baselines}

\begin{itemize}
	\item \textbf{L-GP}: we use the code in Matlab from the authors that is publicly available at \url{https://github.com/markusheinonen/adaptivegp}.
	\item \textbf{Dropout}: we use the code in Tensorflow from the authors that is publicly available at \url{https://github.com/yaringal/DropoutUncertaintyExps}. We keep all default hyperparameters; we assume 1 single layer with 50 hidden units. Dropout rates were cross-validated using a grid search  $[0.0005,0.001,0.005,0.01]$.
\end{itemize}

\subsection{Simulation Setup}

In all the experiments, we use a random train-test split of 75-15. All datasets are normalized in a preprocessing step such that $x$ values fall in the range $[0,1]$, and $y$ values have zero mean and in the range $[-1,1]$.

We evaluate by computing the marginal test log likelihood as:
\begin{align}
\mathbb{E}_{p(\x^\star,\y^\star)}  \left[\log p(\y^\star|\x^\star,\mathcal{D})\right] =  \mathbb{E}_{p(\x^\star,\y^\star)} \left[\log \int p(\y^\star|\x^\star,\W) p(\W|\mathcal{D}) d\W \right]
\end{align}
\end{document}